\theoremstyle{plain}
\theoremstyle{definition}
\theoremstyle{remark}
\newcommand{\Abb}{\mathbb{A}}
\newcommand{\Rbb}{\mathbb{R}}
\newcommand{\Nbb}{\mathbb{N}}
\newcommand{\Hom}{\ensuremath{\operatorname{Hom}}}
\newcommand{\Alg}{\ensuremath{\operatorname{Alg}}}
\newcommand{\Exp}{\ensuremath{\operatorname{Exp}}}
\newcommand{\Rpq}{\Rbb^{p,q}}
\newcommand{\Aff}{\mathrm{Aff}}
\newcommand{\tg}{(t,\mkern-2mug)}
\newcommand{\sgn}{\ensuremath{\operatorname{sgn}}}
\DeclareMathSymbol{\shortminus}{\mathbin}{AMSa}{"39}
\newcommand{\minus}{\scalebox{0.75}[1.0]{$-$}}
\newcommand{\algcom}[1]{{\color[HTML]{808080}\texttt{\# #1}}}
\newcommand{\jin}{\mathrm{in}}
\newcommand{\jout}{\mathrm{out}}
\newcommand{\cin}{{c_\jin}}
\newcommand{\cout}{{c_\jout}}
\newcommand{\Ain}{{\Acal_\jin}}
\newcommand{\Aout}{{\Acal_\jout}}
\newcommand{\Win}{{W_\jin}}
\newcommand{\Wout}{{W_\jout}}
\newcommand{\rin}{{\rho_\jin}}
\newcommand{\rout}{{\rho_\jout}}
\newcommand{\rhom}{{\rho_{\mathrm{Hom}}}}
\newcommand{\vecrm}{\mathrm{Vec}}
\newcommand{\algrm}{\mathrm{Alg}}
\newcommand{\rcl}[1][]{\rho_{\Cl}^{#1}}
\newcommand{\ins}{\subseteq}
\newcommand{\bij}[1][]{\xrightarrow[#1]{\sim}}
\newcommand{\Cl}{\ensuremath{\operatorname{Cl}}}
\newcommand{\GL}{\ensuremath{\operatorname{GL}}}
\newcommand{\E}{\ensuremath{\operatorname{E}}}
\renewcommand{\O}{\ensuremath{\operatorname{O}}}
\newcommand{\SO}{\ensuremath{\operatorname{SO}}}
\newcommand{\U}{\ensuremath{\operatorname{U}}}
\newcommand{\Isom}{\ensuremath{\operatorname{Isom}}}
\newcommand{\id}{\ensuremath{\operatorname{id}}}
\newcommand{\Tan}{\mathrm{T}}
\newcommand{\Frm}{\mathrm{F}}
\newcommand{\Gs}{\ensuremath{\operatorname{\Gamma}}}
\newcommand{\ip}{\eta}  %
\newcommand\sbull[1][.5]{\mathbin{\ThisStyle{\vcenter{\hbox{%
  \scalebox{#1}{$\SavedStyle\bullet$}}}}}%
}
\newcommand{\gp}{\sbull[0.65]} %
\newcommand{\agp}{\ast}
\newcommand{\eqr}{\mathord{\sim}} %
\newcommand{\lact}{\rhd}
\newcommand{\ract}{\lhd}
\newcommand{\Trpt}{\operatorname{\mathfrak{T}}}
\newcommand{\LC}{\mathrm{LC}}
\newcommand{\Tens}{\operatorname{Tens}}
\newcommand{\Span}{\operatorname{span}}
\newcommand{\KB}{\mathscr{K}}
\newcommand{\kb}{\mathscr{k}}
\newcommand{\fd}{\mathfrak{f}}
\newcommand{\Acal}{\mathcal{A}}
\newcommand{\Ical}{\mathcal{I}}
\newcommand{\lp}{\left(}
\newcommand{\rp}{\right)}
\newcommand{\lB}{\left[ }
\newcommand{\rB}{\right] }
\newcommand{\lC}{\left\{ }
\newcommand{\rC}{\right\} }
\newcommand{\st}{\,\middle|\,}
\theoremstyle{plain}
\newtheorem{sa}{Theorem}[section]
\newtheorem{Thm}[sa]{Theorem}
\newtheorem{Lem}[sa]{Lemma}
\newtheorem{Prp}[sa]{Proposition}
\newtheorem{Cor}[sa]{Corollary}
\newtheorem{Def}[sa]{Definition}
\newtheorem{DefLem}[sa]{Definition/Lemma}
\newtheorem{DefThm}[sa]{Definition/Theorem}
\newtheorem{Not}[sa]{Notation}
\newtheorem{Rem}[sa]{Remark}
\newtheorem{Eg}[sa]{Example}
\crefname{proof}{proof}{proofs}
\newenvironment{Prf}[2][]{%
    \par\bigskip\noindent%
    \refstepcounter{sa}%
    \label[proof]{#2:prf}%
    \textbf{Proof \thesa\ for \Cref{#2}}%
    \ifx&#1&\else\space(#1)\fi%
    \textbf{.}%
    }{\hfill\qed\par\bigskip}%
\newcommand{\inputgraphicskernel}[3]{%
    \begin{tikzpicture}%
        \clip (0,0)  circle (#1);%
        \node[anchor=center] at (0,0) {\includegraphics[width=#2]{#3}};%
    \end{tikzpicture}%
}%
\renewcommand*{\ALG@name}{Function}
\icmltitlerunning{Clifford-Steerable Convolutional Neural Networks}
\begin{document}

\twocolumn[
    \icmltitle{Clifford-Steerable Convolutional Neural Networks}

    \icmlsetsymbol{equal}{\kern-.5pt\raisebox{-.0pt}{\scalebox{1.3}{*}}\kern-.3pt}

    \begin{icmlauthorlist}
        \icmlauthor{Maksim Zhdanov}{AMLab}
        \icmlauthor{David Ruhe}{equal,AMLab,AI4Sc,API}
        \icmlauthor{Maurice Weiler}{equal,AMLab}
        \icmlauthor{Ana Lucic}{MSR}
        \icmlauthor{Johannes Brandstetter}{Linz,NXAI}
        \icmlauthor{Patrick Forré}{AMLab,AI4Sc}
    \end{icmlauthorlist}

    \icmlaffiliation{AMLab}{AMLab, Informatics Institute, University of Amsterdam}
    \icmlaffiliation{AI4Sc}{AI4Science Lab, Informatics Institute, University of Amsterdam}
    \icmlaffiliation{API}{Anton Pannekoek Institute for Astronomy, University of Amsterdam}
    \icmlaffiliation{MSR}{AI4Science, Microsoft Research}
    \icmlaffiliation{Linz}{ELLIS Unit Linz, Institute for Machine Learning, JKU Linz, Austria}
    \icmlaffiliation{NXAI}{NXAI GmbH}

    \icmlcorrespondingauthor{Maksim Zhdanov}{m.zhdanov@uva.nl}

    \icmlkeywords{Machine Learning, ICML}

    \vskip 0.3in
]

\printAffiliationsAndNotice{\icmlEqualContribution} %

\begin{abstract}
    We present \emph{Clifford-Steerable Convolutional Neural Networks} (CS-CNNs), a novel class of $\E(p, q)$-equivariant CNNs.
    CS-CNNs process \emph{multivector fields} on pseudo-Euclidean spaces $\Rbb^{p,q}$.
    They cover, for instance, $\E(3)$-equivariance on $\Rbb^3$ and Poincaré-equivariance on Minkowski spacetime $\Rbb^{1,3}$.
    Our approach is based on an implicit parametrization of $\O(p,q)$-steerable kernels via Clifford group equivariant neural networks.
    We significantly and consistently outperform baseline methods on fluid dynamics as well as relativistic electrodynamics forecasting tasks.
\end{abstract}

\section{Introduction}

Physical systems are often described by \emph{fields} on (pseudo)-Euclidean spaces.
Their equations of motion obey various symmetries,
such as isometries $\E(3)$ of Euclidean space $\Rbb^3$ or relativistic Poincaré transformations $\E(1,3)$ of Minkowski spacetime $\Rbb^{1,3}$.
PDE solvers should respect these symmetries. 
In the case of
deep learning based surrogates, this property is ensured by making the neural networks \emph{equivariant}
\mbox{(commutative) w.r.t. the transformations of interest.}

\begin{arxiv_version}
\begin{figure}
    \vspace*{-10pt}%
    \makebox[\columnwidth][c]{
    \includegraphics[width=1.16\columnwidth]{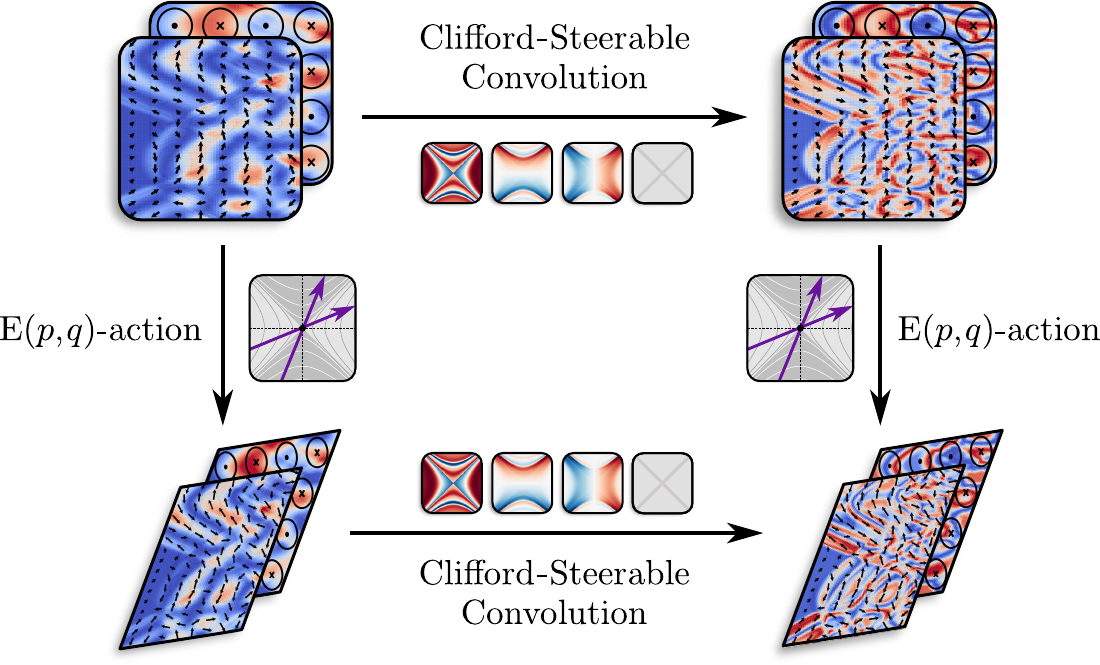}}
    \vspace*{-22pt}%
    \caption{
        CS-CNNs process multivector fields while respecting $\E(p,q)$-equivariance.
        Shown here is
        a Lorentz-boost $\O(1,1)$ of electromagnetic data on 1+1-dimensional spacetime $\Rbb^{1,1}$.
    }
    \label{fig:main}%
    \vspace*{-12pt}%
\end{figure}
\end{arxiv_version}
\begin{icml_version}
\begin{figure}
    \vspace*{-8pt}%
    \makebox[\columnwidth][c]{
    \hspace*{-.13\columnwidth}
    \includegraphics[width=1.13\columnwidth]{figures/main_fig_compressed.pdf}}
    \vspace*{-12pt}%
    \caption{
        CS-CNNs process multivector fields while respecting $\E(p,q)$-equivariance.
        Shown here is
        a Lorentz-boost $\O(1,1)$ of electromagnetic data on 1+1-dimensional spacetime $\Rbb^{1,1}$.
    }
    \label{fig:main}%
    \vspace*{-12pt}%
\end{figure}
\end{icml_version}

A fairly general class of equivariant CNNs 
covering arbitrary spaces and field types 
is described by the theory of \emph{steerable CNNs} \cite{weiler2023EquivariantAndCoordinateIndependentCNNs}.
The central result there is that equivariance requires a \emph{``$G$-steerability''}
constraint on convolution kernels,
where $G=\O(n)$ or $\O(p,q)$ for $\E(n)$- or $\E(p,q)$-equivariant CNNs, respectively.
This constraint was solved and implemented for $\O(n)$ \cite{lang2020WignerEckart,cesa2021ENsteerable},
however, $\O(p,q)$-steerable kernels are so far still missing.

This work proposes \emph{Clifford-steerable CNNs} (CS-CNNs),
which process \emph{multivector fields} on pseudo-Euclidean spaces $\Rpq$,
and are equivariant to the pseudo-Euclidean group $\E(p,q)$: the isometries of $\Rpq$.
Multivectors are elements of the Clifford (or geometric) algebra $\Cl(\Rpq)$ of $\Rpq$.
Neural networks based on Clifford algebras have seen a recent surge in popularity in the field of deep learning
and were used to build both non-equivariant
\cite{Brandstetter2022CliffordNL,ruhe2023geometric}
and equivariant
\cite{ruhe2023CliffordGroupEquivariantNNs,brehmer2023geometric}
models.
While multivectors do not cover all possible field types, e.g. general tensor fields,
they include those most relevant in physics.
For instance, the Maxwell or Dirac equation and General Relativity can be formulated using the spacetime algebra $\Cl(\Rbb^{1,3})$.

The steerability constraint on convolution kernels is usually either solved analytically or numerically,
however, such solutions are not yet known for $\O(p,q)$.
Observing that the $G$-steerability constraint is just a $G$-equivariance constraint,
\citet{zhdanov2022implicit} propose to implement $G$-steerable kernels \emph{implicitly} via $G$-equivariant MLPs.
Our CS-CNNs follow this approach,
implementing implicit $\O(p,q)$-steerable kernels via the
$\O(p,q)$-equivariant neural networks for multivectors developed by \citet{ruhe2023CliffordGroupEquivariantNNs}.

We demonstrate the efficacy of our approach by predicting the evolution of several physical systems.
In particular, we consider a fluid dynamics forecasting task on $\Rbb^2$,
as well as relativistic electrodynamics simulations on both $\Rbb^3$ and $\Rbb^{1,2}$.
CS-CNNs are the first models respecting the full spacetime symmetries of these problems. 
They significantly outperform competitive baselines,
including conventional steerable CNNs and non-equivariant Clifford CNNs.
This result remains consistent over dataset sizes.
When evaluating the empirical equivariance error of our approach for $\E(2)$ symmetries,
we find that we perform on par with the analytical solutions of \citet{Weiler2019_E2CNN}.

The main contributions of this work are the following:
\begin{itemize}[leftmargin=18pt, topsep=-1pt, itemsep=-6pt]
    \item 
        While prior work considered only individual multivectors,
        CS-CNNs process full multivector fields on pseudo-Euclidean spaces or manifolds.
    \item
        We investigate the representation theory of $\O(p,q)$-steerable kernels for multivector fields
        and develop an implicit implementation via $\O(p,q)$-equivariant MLPs.
    \item
        The resulting $\E(p,q)$-equivariant CNNs are evaluated on various PDE simulation tasks,
        where they consistently outperform strong baselines.
\end{itemize}

This paper is organized as follows:
Section~\ref{sec:theoretical_background} introduces the theoretical background underlying our method.
CS-CNNs are then developed in Section~\ref{sec:clifford_steerable_CNNs_main},
and empirically evaluated in Section~\ref{sec:experimental_results}.
A generalization from flat spaces to general \emph{pseudo-Riemannian manifolds} is presented in
Appendix~\ref{sec:cs-cnn-prm}.

\section{Theoretical Background}
\label{sec:theoretical_background}

The core contribution of this work is to provide a framework for the construction of \emph{steerable CNNs} for processing \emph{multivector fields} on general \emph{pseudo-Euclidean spaces}.
We provide background on pseudo-Euclidean spaces and their symmetries in Section~\ref{sec:pseudo_Euclidean_spaces_groups},
on equivariant (steerable) CNNs in Section~\ref{sec:feature_fields_and_steerable_CNNs_general}, and
on multivectors and the Clifford algebra formed by them in Section~\ref{sec:clifford_algebra_group}.

\subsection{Pseudo-Euclidean spaces and groups}
\label{sec:pseudo_Euclidean_spaces_groups}

Conventional Euclidean spaces are metric spaces,
i.e. they are equipped with a metric that assigns \emph{positive} distances to any pair of distinct points.
Pseudo-Euclidean spaces allow for more general \emph{indefinite metrics},
which relax the positivity requirement on distances.
Pseudo-Euclidean spaces appear in our theory in two distinct settings:
First, the (affine) base spaces on which feature vector fields are supported,~e.g. Minkowski spacetime, are pseudo-Euclidean.
Second, the feature vectors attached to each point of spacetime are themselves elements of pseudo-Euclidean vector spaces.
We~introduce these spaces and their symmetries in the following.

\subsubsection{Pseudo-Euclidean vector spaces}

\begin{Def}[Pseudo-Euclidean vector space]
\label{def:pseudo_Euclidean_vector_space}

    A \emph{pseudo-Euclidean vector space} (inner product space) $(V,\eta)$ of signature $(p,q)$ is a ${p\mkern-1.5mu+\mkern-2muq}$-dimensional vector space $V$ over $\Rbb$
    equipped with an \emph{inner product} $\ip$, which we define as a \emph{non-degenerate}%
    \footnote{
       Note that we explicitly refrain from imposing \emph{positive-definiteness} onto the definition of inner product, in order to include typical Minkowski spacetime inner products, etc. 
    }
    symmetric bilinear form
    {\abovedisplayskip=3pt%
     \belowdisplayskip=2pt%
    \begin{align}
        \ip:\, V \times V &\to \Rbb, & (v_1,v_2) & \mapsto \ip(v_1,v_2)
    \end{align}
    }%
    with $p$ and $q$ positive and negative eigenvalues, respectively.
\end{Def}

If ${q\mkern-2mu=\mkern-2mu0}$, $\ip$ becomes positive-definite, and ${(V,\ip)}$ is a conventional Euclidean inner product space.
For ${q\geq1}$, ${\ip(v,v)}$ can be negative, rendering ${(V,\ip)}$ pseudo-Euclidean. 

Since every 
inner product space $(V,\ip)$ of signature $(p,q)$ has an orthonormal basis, we can always find a linear isometry with the standard pseudo-Euclidean space $\Rpq \cong (V,\ip)$, to which we mostly will restrict our attention in this paper. 

\begin{Def}[Standard pseudo-Euclidean vector spaces]
\label{eg:standard_pseudo_euclidean_space}
    Let $e_1,\dots,e_{p+q}$ be the standard basis of $\Rbb^{p+q}$.
    Define an inner product
    of signature ${(p,\mkern-2muq)}$
    {\abovedisplayskip=4pt%
     \belowdisplayskip=3pt%
    \begin{equation}
        \ip^{p,q}(v_1,v_2) \,:=\, v_1^\top \Delta^{p,q} v_2
    \end{equation}
    }%
    in this basis via its matrix representation
    {\abovedisplayskip=4pt%
     \belowdisplayskip=3pt%
    \begin{align}
        \Delta^{p,q} := \mathrm{diag}(\underbrace{1,\dots,1\vphantom{\big|}}_{\textup{$p$ times}},\, \underbrace{\minus1,\dots,\minus1\vphantom{\big|}}_{\textup{$q$ times}})\,.
    \end{align}
    }%
    We call the inner product space $\Rpq:=(\Rbb^{p+q},\ip^{p,q})$ the \emph{standard pseudo-Euclidean vector space} of signature $(p,q)$.
\end{Def}

\begin{figure}[t]
    \centering
    \vspace*{-6pt}
    \includegraphics[width=.91\columnwidth]{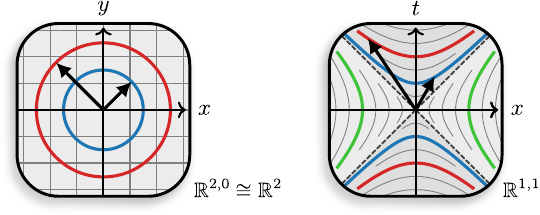}
    \vspace*{-13pt}
    \caption{
        Examples of pseudo-Euclidean spaces $\Rbb^{2,0}$ and $\Rbb^{1,1}$.
        Colors depict $\O(p,q)$-orbits,
        given by sets of all points $v\in\Rpq$ with the \emph{same squared distance} $\eta^{p,q}(v,v)$ from the origin.
    }
    \label{fig:pe_spaces}%
    \vspace*{-0.40cm}
\end{figure}

\begin{Eg}
    ${\Rbb^{3,0} \equiv \Rbb^3}$ recovers the 3-dimensional Euclidean vector space with its standard positive-definite inner product
    $\Delta^{3,0} = \mathrm{diag}(1,1,1)$.
    The signature ${(p,q)}={(1,3)}$ corresponds, instead, to Minkowski spacetime $\Rbb^{1,3}$ with Minkowski inner product $\Delta^{1,3} = \mathrm{diag}(1,\minus1,\minus1,\minus1)$\,.%
    \footnote{
        There exist different conventions regarding whether time or space components are assigned the negative sign.
    }
\end{Eg}

\subsubsection{Pseudo-Euclidean groups}
We are interested in neural networks that respect (i.e., commute with, or are \emph{equivariant} to) the symmetries of pseudo-Euclidean spaces, which we define here.
For concreteness, we give these definitions for the standard pseudo-Euclidean vector spaces $\Rpq$.
Let us start with the two cornerstone groups that define such symmetries:

\begin{Def}[Translation groups]
    The \emph{translation group} $(\Rpq,+)$ associated with $\Rpq$ is formed by its set of vectors and its (canonical) vector addition.
\end{Def}

\begin{Def}[Pseudo-orthogonal groups]
\label{def:Opq}
    The \emph{pseudo-orthogonal group} $\O(p,\mkern-1muq)$ associated to $\Rpq$ is formed~by~all invertible linear maps that preserve its inner product,
    {%
    \abovedisplayskip=4pt
    \belowdisplayskip=6pt
    \begin{align}
        \O(p,q) \,:=\, \big\{ g\in\GL(\Rpq) \,\big|\, g^\top\!\Delta^{p,q}g = \Delta^{p,q} \big\},
    \end{align}
    }%
    together with matrix multiplication.
    $\O(p,q)$ is compact
    for ${p=0}$ or ${q=0}$, and non-compact for mixed signatures.
\end{Def}

\begin{Eg}
    For ${(p,q)=(3,0)}$, we obtain the usual orthogonal group $\O(3)$, i.e. rotations and reflections,
    while ${(p,q)=(1,3)}$ corresponds to the relativistic \emph{Lorentz group} $\O(1,3)$, which also includes 
    boosts between inertial frames.
\end{Eg}

Taken together, translations and pseudo-orthogonal transformations of $\Rpq$ form its \emph{pseudo-Euclidean group}, which is the group of all metric preserving symmetries (isometries).%
\footnote{
    As the translations contained in $\E(p,q)$ move the origin of $\Rpq$,
    they do not preserve the \emph{vector} space structure of $\Rpq$,
    but only its structure as \emph{affine} space.
}

\begin{Def}[Pseudo-Euclidean groups]
    The \emph{pseudo-Euclidean group} for $\Rpq$ is defined as semidirect product
    \begin{align}
        \E(p,q) \,:=\, (\Rpq,+) \rtimes \O(p,q)
    \end{align}
    with group multiplication
    defined by
    ${(\tilde{t},\mkern-2mu\tilde{g}) \cdot \tg} = {(\tilde{t}+\tilde{g}t,}\, \tilde{g}g)$.
    Its canonical action on $\Rpq$ is given by
    \begin{align}
        \E(p,q)\times\Rpq\to\Rpq, \quad \big((t,\mkern-2mug),\,x\big) \mapsto gx+t
    \end{align}
\end{Def}

\begin{Eg}
    The usual Euclidean group $\E(3)$ is reproduced for ${(p,q)=(3,0)}$.
    For Minkowski spacetime, ${(p,q)=(1,3)}$, we obtain the Poincaré group $\E(1,3)$.
\end{Eg}

\subsection{Feature vector fields \& Steerable CNNs}
\label{sec:feature_fields_and_steerable_CNNs_general}
Convolutional neural networks operate on spatial signals,
formalized as \emph{fields of feature vectors} on a base space $\Rpq$.
Transformations of the base space imply corresponding transformations of the feature vector fields defined on them, see Fig. \ref{fig:main} (left column).
The specific transformation laws depend thereby on their geometric ``field type''
(e.g., scalar, vector, or tensor fields).
Equivariant CNNs commute with such transformations of feature fields.
The theory of \emph{steerable CNNs} shows that this requires a $G$-equivariance constraint on convolution kernels \cite{weiler2023EquivariantAndCoordinateIndependentCNNs}.
We briefly review the definitions and basic results of feature fields and steerable CNNs in Sections~\ref{sec:feature_fields_general} and~\ref{sec:steerable_CNNs_general} below.

For generality, this section considers topologically closed matrix groups $G\leq\GL(\Rpq)$ and affine groups $\Aff(G) = (\Rpq,+)\rtimes G$,
and allows for any field type.
Section~\ref{sec:clifford_steerable_CNNs_main} will
more specifically focus on
pseudo-orthogonal groups $G=\O(p,q)$,
pseudo-Euclidean groups $\Aff(\O(p,q)) = \E(p,q)$,
and multivector fields.
For a detailed review of Euclidean steerable CNNs and their generalization to Riemannian manifolds we refer to
\citet{weiler2023EquivariantAndCoordinateIndependentCNNs}.

\subsubsection{Feature vector fields}
\label{sec:feature_fields_general}

Feature vector fields are functions ${f: \Rpq\to W}$
that assign to each point $x\in\Rpq$
a feature $f(x)$ in some feature vector space $W$.
They are additionally equipped with an $\Aff(G)$-action
determined by a $G$-representation $\rho$ on $W$.

The specific choice of $(W,\rho)$ fixes the geometric ``type'' of feature vectors.
For instance, $W=\Rbb$ and trivial $\rho(g)=1$ corresponds to \emph{scalars},
$W=\Rpq$ and $\rho(g)=g$ describes \emph{tangent vectors}.
Higher order tensor spaces and representations give rise to \emph{tensor fields}.
Later on, ${W\mkern-3mu=\mkern-1mu\Cl(\Rpq)}$ will be the Clifford algebra and feature vectors will be \emph{multivectors}
with a natural $\O(p,q)$-representation $\rcl$.

\begin{Def}[Feature vector field]
    \label{def:feature_vector_field_general}
    Consider a pseudo-Euclidean ``base space'' $\Rpq$.
    Fix any $G\leq\GL(\Rpq)$ and consider a $G$-representation $(W,\rho)$, called ``field type''.

    Let ${\Gamma(\Rpq,W) := \{f:\Rpq\to W\}}$ denote the vector space of \mbox{$W$\!-feature} fields.
    Define an $\Aff(G)$-action
    \begin{align}
        \rhd_\rho: \Aff(G)\times \Gamma(\Rpq,W) \to \Gamma(\Rpq,W)
    \end{align}%
    by setting $\forall\ \tg\in\Aff(G),\ f\in\Gamma(\Rpq,W),\ x\in\Rpq$:
    \begin{align*}
        \big[ \tg\mkern-2mu\rhd_\rho\mkern-3mu f \big](x)
        := \rho(g) f\big( \tg^{\minus1}x \big)
        = \rho(g) f\big( g^{\minus1}(x\minus t) \big).
    \end{align*}

    Since $\Gamma(\Rpq,W)$ is a vector space and $\rhd_\rho$ is linear,
    the tuple $\big(\Gamma(\Rpq,W), \rhd_\rho\big)$ forms the $\Aff(G)$-representation
    of \emph{feature vector fields} of type $(W,\rho)$.%
    \footnote{
        ${\big(\Gamma(\Rpq\!,W), \rhd_\rho \mkern-2mu\big)}$ is called \emph{induced representation} $\mathrm{Ind}_G^{\Aff(G)}\mkern-3mu\rho$ \cite{Cohen2019-generaltheory}.
        From a differential geometry perspective, it can be viewed as the space of bundle sections of a $G$-associated feature vector bundle; see Defs.~\ref{def:G_associated_vector_bundle}, \ref{def:global_bundle_section} and \cite{weiler2023EquivariantAndCoordinateIndependentCNNs}.
    }
\end{Def}

\begin{minipage}{\linewidth}
\begin{Rem}
    Intuitively, $\tg$ acts on $f$ by
    \begin{enumerate}[itemsep=0pt, topsep=2pt]
        \item moving feature vectors across the base space, from points $g^{-1}(x-t)$ to new locations $x$, and
        \item $G$-transforming individual feature vectors $f(x)\in W$ themselves by means of the $G$-representation $\rho(g)$.
    \end{enumerate}
\end{Rem}
\end{minipage}

Besides the field types mentioned above, equivariant neural networks
often rely on \emph{irreducible, regular} or \emph{quotient} representations.
More choices of field types are discussed and benchmarked in \citet{Weiler2019_E2CNN}.

\subsubsection{Steerable CNNs}
\label{sec:steerable_CNNs_general}

Steerable convolutional neural networks are composed of layers that are $\Aff(G)$-equivariant,
that is, which commute with affine group actions on feature fields:

\begin{Def}[$\Aff(G)$-equivariance]
\label{def:AffG_equivariance}
    Consider any two $G$-representations $(W_\textup{in},\rho_\textup{in})$ and $(W_\textup{out},\rho_\textup{out})$.
    Let
    $L: \Gamma(\Rpq,W_\textup{in}) \to \Gamma(\Rpq,W_\textup{out})$ be a function (``layer'') between the corresponding spaces of feature fields.
    This layer is said to be $\Aff(G)$-equivariant iff it satisfies
    \begin{align}
        L\big( \tg\rhd_{\rho_\textup{in}} f \big)
        \,=\, \tg\rhd_{\rho_\textup{out}} L(f)
    \end{align}
    for any $\tg\in\Aff(G)$ and any $f\in\Gamma(\Rpq,W_\textup{in})$.
    Equivalently, the following diagram should commute:
    \begin{equation}
        \begin{tikzcd}[row sep=2.8em, column sep=3.4em]
            \Gamma(\Rpq,W_\textup{in})
            \arrow[r, "L"]
            \arrow[d, "\tg\,\rhd_{\rho_\textup{in}}"']
            &
            \Gamma(\Rpq,W_\textup{out})
            \arrow[d, "\tg\,\rhd_{\rho_\textup{out}}"]
            \\
            \Gamma(\Rpq,W_\textup{in})
            \arrow[r, "L"']
            &
            \Gamma(\Rpq,W_\textup{out})
        \end{tikzcd}
    \end{equation}
\end{Def}

The most basic operations used in neural networks are parameterized \emph{linear} layers.
If one demands \emph{translation equivariance}, these layers are necessarily \emph{convolutions} (see Theorem 3.2.1 in \cite{weiler2023EquivariantAndCoordinateIndependentCNNs}). 
Similarly, linearity and $\Aff(G)$-equivariance requires \emph{steerable convolutions},
that is, convolutions with $G$-steerable kernels:

\begin{samepage}
\begin{Thm}[Steerable convolution]
\label{thm:steerable_conv}
    Consider a layer ${L: \Gamma(\Rpq,W_\textup{in}) \to \Gamma(\Rpq,W_\textup{out})}$ mapping between feature fields of types
    $(W_\textup{in},\rho_\textup{in})$ and $(W_\textup{out},\rho_\textup{out})$, respectively.
    If $L$ is demanded to be \emph{linear} and \emph{$\Aff(G)$-equivariant}, then:
    \begin{enumerate}[topsep=0pt, itemsep=-2pt]
        \item
            $L$ needs to be a \emph{convolution} integral
            \footnote{
                $dv$ is the usual Lebesgue measure on $\Rbb^{p+q}$.
                For the integral to exist, we assume $f$ to be bounded and have compact support.
            }
            \begin{align*}
                L\mkern-1mu\big(f_\jin\big)(u)
                \mkern1mu=\mkern1mu \big[K\mkern-2mu\ast\mkern-2mu f_\jin\big]\mkern-2mu(u)
                \mkern1mu:=\mkern-2mu \int_{\Rpq}\mkern-18mu K(v) \big[f_\jin(u\minus v)\big]\mkern2mu dv,
            \end{align*}
            parameterized by a convolution kernel
            \begin{align}
            \label{eq:kernel_def}
                K: \Rpq \to \Hom_\vecrm(W_\textup{in},W_\textup{out}) \,.
            \end{align}
            The kernel is \emph{operator-valued} since it aggregates
            input features in $W_\textup{in}$ linearly into output features in $W_\textup{out}$.%
            \footnote{
                $\Hom_\vecrm(W_\textup{in},\mkern-2muW_\textup{out})$,
                the space of vector space homomorphisms,
                consists of all \emph{linear maps} ${W_\textup{in}\!\to\! W_\textup{out}}$.
                When putting $W_\textup{in}=\Rbb^{C_\textup{in}}$ and $W_\textup{out}=\Rbb^{C_\textup{out}}$,
                this space can be identified with 
                the space $\Rbb^{C_\textup{out}\times C_\textup{in}}$ of ${C_\textup{out}\mkern-4mu\times\mkern-4muC_\textup{in}}$ matrices.
            }%
            \footnote{
                $K:\Rpq \!\to\! \Hom_\vecrm(W_\textup{in},W_\textup{out})$
                itself need \emph{not} be linear.
            }

        \item
            The kernel is required to be \emph{$G$-steerable},
            that is, it needs to satisfy the $G$-equivariance constraint%
            \footnote{
                This is in particular \emph{not} demanding $K(v)$ to be (equivariant) homomorphisms of $G$-representations in $\Hom_G(W_\textup{in},W_\textup{out})$,
                despite $(W_\textup{in},\rho_\textup{in})$ and $(W_\textup{out},\rho_\textup{out})$ being $G$-representations.
                Only $K$ itself is $G$-equivariant as map $\Rpq\to\Hom_\vecrm(W_\textup{in},\mkern-2muW_\textup{out})$.
            }%
            \vspace*{-3pt}
            \begin{align}
            \label{eq:kernel_constraint_general}
                K(gx)
                \ &=\ \scalebox{.875}{$\displaystyle\frac{1}{|\det(g)|}$}
                      \rho_\textup{out}(g) K(x) \rho_\textup{in}(g)^{-1} \\
                \ &=:\ \rho_\textup{Hom}(g)(K(x)) \notag
            \end{align}
            for any $g\in G$ and $x\in\Rpq$.
            This constraint is diagrammatically visualized by the commutativity of:
            \begin{equation}
                \begin{tikzcd}[row sep=2.8em, column sep=4.2em]
                    \Rpq
                    \arrow[r, "K"]
                    \arrow[d, "g\cdot\,"']
                    &
                    \Hom_\vecrm(W_\textup{in}, W_\textup{out})
                    \arrow[d, "\,\rho_\textup{Hom}(g)"]
                    \\
                    \Rpq
                    \arrow[r, "K"']
                    &
                    \Hom_\vecrm(W_\textup{in}, W_\textup{out})
                \end{tikzcd}
            \end{equation}
            \end{enumerate}
\end{Thm}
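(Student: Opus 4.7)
The plan is to split the $\Aff(G)$-equivariance of $L$ into its two generating pieces, exploiting $\Aff(G) = (\Rpq,+) \rtimes G$: every $\tg \in \Aff(G)$ factors as $(t,e)\cdot(0,g)$, so equivariance with respect to all of $\Aff(G)$ is equivalent to equivariance with respect to pure translations $(t,e)$ together with equivariance with respect to linear actions $(0,g)$. I then handle each factor separately, obtaining part~(i) from the translation piece and part~(ii) from the $G$ piece.

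\textbf{Step 1 (translations $\Rightarrow$ convolutional form).} Restricting to $(t,e)$, the field action $\rhd_{\rin}$ becomes the plain shift $[T_t f](x) = f(x-t)$, and equivariance of $L$ reads $L \circ T_t = T_t \circ L$ for all $t \in \Rpq$. A continuous linear map between function spaces that commutes with every translation admits a representation as convolution against a unique operator-valued kernel; under the boundedness/compact-support hypothesis stated in the footnote this can be argued directly, while in full generality one invokes the Schwartz kernel theorem. Either way one obtains
\begin{align*}
    L(f)(u) \;=\; \int_{\Rpq} \widetilde{K}(u-y)\, f(y)\, dy \;=\; \int_{\Rpq} \widetilde{K}(v)\, f(u-v)\, dv
\end{align*}
for some $\widetilde{K} : \Rpq \to \Hom_\vecrm(W_\jin, W_\jout)$, which is the convolutional form claimed in part~(i). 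Operator-valuedness of the kernel is automatic because $L$ must aggregate $W_\jin$-valued inputs linearly into $W_\jout$.

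\textbf{Step 2 ($G$-action $\Rightarrow$ steerability).} With the convolutional form in hand, I next impose equivariance for $(0,g)$. Expanding both sides of $L((0,g)\rhd_{\rin} f)(u) = [(0,g)\rhd_{\rout} L(f)](u)$ using the formula of Step~1 gives
\begin{align*}
    \text{LHS}(u) &= \int \widetilde{K}(v)\, \rin(g)\, f\bigl(g^{-1}(u-v)\bigr)\, dv,\\
    \text{RHS}(u) &= \rout(g) \int \widetilde{K}(v)\, f\bigl(g^{-1}u - v\bigr)\, dv.
\end{align*}
Substituting $w = g^{-1}(u-v)$ in the LHS (so $v = u - gw$ and $dv = |\det g|\, dw$) yields $|\det g|\int \widetilde{K}(u-gw)\, \rin(g)\, f(w)\, dw$, while the substitution $w = g^{-1}u - v$ in the RHS gives $\rout(g)\int \widetilde{K}(g^{-1}u - w)\, f(w)\, dw$. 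Since $f$ is arbitrary, density of test functions lets me equate integrands pointwise, producing $|\det g|\, \widetilde{K}(gx)\, \rin(g) = \rout(g)\, \widetilde{K}(x)$ upon setting $x = g^{-1}u - w$. Solving for $\widetilde{K}(gx)$ recovers exactly the stated constraint, and identifying the right-hand side as $\rhom(g)$ acting on $\Hom_\vecrm(W_\jin, W_\jout)$ yields the diagrammatic formulation.

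\textbf{Main obstacle.} The conceptually most delicate point is Step~1: upgrading ``linear plus translation-equivariant'' to ``convolution with a fixed operator-valued kernel'' requires either a distributional argument (Schwartz kernel theorem) or careful use of the regularity hypothesis, together with uniqueness of the kernel and justification that the $\Hom_\vecrm$-valued version follows from the scalar case componentwise. Once that is in place, Step~2 is essentially mechanical; its only subtlety is the density argument needed to strip integrals and read off the pointwise steerability constraint.
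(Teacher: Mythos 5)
Your overall route is the right one, and it is in fact the only route the paper has in mind: the paper does not prove this theorem itself but delegates to Theorem 4.3.1 of \citet{weiler2023EquivariantAndCoordinateIndependentCNNs}, whose proof proceeds by exactly your decomposition of $\Aff(G)=(\Rpq,+)\rtimes G$ into translations (giving the convolutional form) and the linear factor $G$ (giving the kernel constraint). Your Step 2 is correct and complete: expanding both sides of equivariance under $(0,g)$, changing variables with the Jacobian factor $|\det g|$, and stripping the arbitrary input field gives $|\det g|\,\widetilde K(gx)\,\rin(g)=\rout(g)\,\widetilde K(x)$, which is precisely Eq.~\eqref{eq:kernel_constraint_general}.

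The one genuinely thin spot is Step 1. ``Continuous, linear and commuting with all translations'' does not by itself yield a convolution against a \emph{function-valued} kernel $K:\Rpq\to\Hom_\vecrm(W_\jin,W_\jout)$ as in Eq.~\eqref{eq:kernel_def}: the identity map, or any translation-equivariant differential operator, corresponds to a Dirac delta or its derivatives, not to an integrable kernel. The Schwartz kernel theorem you invoke only produces a distributional kernel, so to land in the class of kernels the theorem asserts one must either enlarge the admissible kernels to distributions or impose the additional regularity/boundedness hypotheses on $L$ under which the cited reference actually works (the footnote's assumptions on $f$ alone do not do this). You correctly identify this as the main obstacle, but as written the step is an appeal to a statement slightly stronger than the results you name rather than a proof of it; closing it requires reproducing the regularity bookkeeping of the reference, which the paper itself avoids by citing it.
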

\begin{proof}
    See Theorem 4.3.1 in \cite{weiler2023EquivariantAndCoordinateIndependentCNNs}. 
\end{proof}
\end{samepage}

\begin{Rem}[Discretized kernels]
    In practice, kernels are often discretized as arrays of~shape
    {\abovedisplayskip=4pt
     \belowdisplayskip=5pt
    $$\big( X_1,\dots,X_{p+q},\, C_\textup{out}, C_\textup{in} \big)$$
    }%
    with ${C_\textup{out} \mkern-1mu=\mkern-1mu \dim(W_\textup{out})}$
    and  ${C_\textup{in}  \mkern-1mu=\mkern-1mu \dim(W_\textup{in} )}$.
    The first $p+q$ axes are indexing a pixel grid on the domain $\Rpq$,
    while the last two axes represent the linear operators in the codomain by
    ${C_\textup{out} \mkern-2mu\times\mkern-2mu C_\textup{in}}$ matrices.
\end{Rem}
\vspace*{-4pt}

The main takeaway of this section is that
\emph{one needs to implement $G$-steerable kernels in order to implement $\Aff(G)$-equivariant CNNs.}
This is a notoriously difficult problem,
requiring specialized approaches for different categories of groups $G$ and field types $(W,\rho)$.
Unfortunately, the usual approaches do not immediately apply to our goal of implementing $\O(p,q)$-steerable kernels for multivector fields.
These include the following cases:

\vspace*{-4pt}
\begin{itemize}[align=left, itemindent=10pt, leftmargin=10pt, topsep=0pt, itemsep=0pt]
    \item[\emph{Analytical}:]
        Most commonly, steerable kernels are parameterized in \emph{analytically derived steerable kernel bases}.%
        \footnote{
            Unconstrained kernels, Eq.~\eqref{eq:kernel_def}, can be linearly combined, and therefore form a vector space.
            The steerability constraint, Eq.~\eqref{eq:kernel_constraint_general} is \emph{linear}.
            Steerable kernels span hence a linear subspace
            and can be parameterized in terms of a \emph{basis} of steerable kernels.
        }
        Solutions are known for
        $\SO(3)$ \cite{3d_steerableCNNs},~$\O(3)$ \cite{geiger2020e3nn}
        and any $G\leq\O(2)$ \cite{Weiler2019_E2CNN}.~\citet{lang2020WignerEckart} and \citet{cesa2021ENsteerable}
        generalized this to any \emph{compact} groups ${G\mkern-2mu\leq\mkern-2mu\U(d)}$.
        However, their solutions still require knowledge of
        \begin{icml_version}
            irreps,
        \end{icml_version}
        \begin{arxiv_version}
            irreducible representations,
        \end{arxiv_version}
        Clebsch-Gordan coefficients and harmonic basis functions, which need to be derived and implemented for each single group individually.
        Furthermore, these
        solutions do not cover pseudo-orthogonal groups $\O(p,q)$ of mixed signature, since these are \emph{non-compact}.

    \item[\emph{Regular}:]
        For regular and quotient representations,
        steerable kernels can be implemented via channel permutations in the matrix dimensions.
        This is, for instance, done in regular group convolutions \cite{Cohen2016-GCNN,weiler2018SFCNN,bekkers2018roto,cohen2019gaugeIco,finzi2020generalizing}.
        However, these approaches require \emph{finite} $G$ or rely on sampling \emph{compact} $G$,
        again ruling out general (non-compact) $\O(p,q)$.

    \item[\emph{Numerical}:]
        \citet{Cohen2017-STEER} solved the kernel constraint for \emph{finite} $G$ numerically.
        For $\SO(2)$, \citet{deHaan2020meshCNNs} derived numerical solutions based on Lie-algebra representation theory.
        The numerical routine by \citet{shutty2020learning} solves for Lie-algebra irreps given their structure constants.
        Corresponding Lie group irreps follow via the matrix exponential, however, only on \emph{connected} groups like the subgroups $\SO^+(p,q)$ of $\O(p,q)$.
        
    \item[\emph{Implicit}:]
        Convolution kernels, Eq.~\eqref{eq:kernel_def}, are merely
        maps between vector spaces $\Rpq$ and $\Hom_\vecrm(W_\textup{in},W_\textup{out})$,
        which can be implemented \emph{implicitly} via \emph{MLPs} \cite{romero2021ckconv}.
        \emph{Steerable} kernels are additionally $G$-equivariant, Eq.~\eqref{eq:kernel_constraint_general}.
        Combining these insights, \citet{zhdanov2022implicit} parameterize them implicitly via $G$-equivariant MLPs.
        However, to implement these MLPs, one usually requires irreps, irrep endomorphisms and Clebsch-Gordan coefficients for each $G$ of interest.

\end{itemize}

Our approach presented in Section~\ref{sec:clifford_steerable_CNNs_main}
is based on the implicit kernel parametrization via neural networks by \citet{zhdanov2022implicit},
which requires us to implement $\O(p,q)$-equivariant neural networks.
Fortunately, the \emph{Clifford group equivariant neural networks} by \citet{ruhe2023CliffordGroupEquivariantNNs}
establish $\O(p,q)$-equivariance for the practically relevant case of \emph{Clifford-algebra} representations $\rcl$,
i.e., $\O(p,q)$-actions on multivectors.
The Clifford algebra, and Clifford group equivariant neural networks, are introduced in the next section.

\subsection{The Clifford Algebra \& Clifford Group Equivariant Neural Networks}
\label{sec:clifford_algebra_group}

This section introduces \emph{multivector features},
a specific type of geometric feature vectors with $\O(p,q)$-action.
Multivectors are the elements of a \emph{Clifford algebra} $\Cl(V,\ip)$
corresponding to a pseudo-Euclidean $\Rbb$-vector space $(V,\ip)$.
The most relevant properties of Clifford algebras
in relation to applications in geometric deep learning are the following:
\begin{itemize}[leftmargin=14pt, topsep=0pt, itemsep=1pt]
    \item
        $\Cl(V,\ip)$ is, in itself, an $\Rbb$-vector space of dimension
        $2^d$ with $d := \dim(V) = {p\mkern-1mu+\mkern-1muq}$.
        This allows to use multivectors as feature vectors of neural networks \cite{Brandstetter2022CliffordNL,ruhe2023geometric, brehmer2023geometric}.

    \item
        As an \emph{algebra}, ${\Cl(\mkern-.5muV,\mkern-1.5mu\ip)}$ comes with an
        $\Rbb$-bilinear operation
        {\abovedisplayskip=-5.5pt%
         \belowdisplayskip=4.5pt%
        \begin{align*}
            \gp:\; \Cl(V,\ip) \mkern-1mu\times\mkern-1mu \Cl(V,\ip) \,\to\, \Cl(V,\ip),
        \end{align*}
        }%
        called \emph{geometric product}.%
        \footnote{
            The geometric product is \emph{unital, associative, non-commutative}, and \emph{$\O(V,\ip)$-equivariant}.
            Its main defining property is highlighted in Eq.~\eqref{eq:Cl_self_product}.
            A proper definition is given in \Cref{def:CliffordAlgebraAppendix}, Eq.~\eqref{eq:GeometricProductDefAppendix}.
        }
        We can therefore multiply multivectors with each other,
        which will be a key aspect in various  neural network operations.

    \item
        $\Cl(V,\ip)$ is furthermore a \emph{representation} space
        of the pseudo-orthogonal group $\O(V,\ip)$ via $\rcl$,
        defined in Eq \eqref{eq:pseudo_orthogonal_group_abstract} below.
        This allows to use multivectors as features of \emph{$\O(V,\ip)$-equivariant} networks \cite{ruhe2023CliffordGroupEquivariantNNs}.
\end{itemize}

A formal definition of Clifford algebras can be found in Appendix \ref{sec:cliff-alg}.
\Cref{sec:intro-Clifford-algebra} offers a less technical introduction,
highlighting basic constructions and results.
Sections \ref{sec:Clifford-algebra-as-Opq-rep} and \ref{sec:cegnns} focus on the natural $\O(p,q)$-action on multivectors, and on Clifford group equivariant neural networks.
While we will later mostly be interested in $(V,\ip)\mkern-2mu=\mkern-1mu\Rpq$ and $\O(V,\ip)=\O(p,q)$,
we keep the discussion here general.

\subsubsection{Introduction to the Clifford algebra}
\label{sec:intro-Clifford-algebra}

Multivectors are constructed by multiplying and summing vectors.
Specifically, $l$ vectors $v_1,\dots, v_l \in V$ multiply to ${v_1\gp\dots\gp v_l} \in \Cl(V,\ip)$.
A general multivector arises as a linear combination of such products,
{\abovedisplayskip=4pt%
 \belowdisplayskip=4pt%
\begin{align}
    x = \sum\nolimits_{i \in I}
    c_i \cdot v_{i,1} \gp \cdots \gp v_{i,l_i} \,,
\end{align}
}%
with some finite index set $I$ and $v_{i,k} \in V$ and $c_i \in \Rbb$.

The main algebraic property of the Clifford algebra is that it relates the geometric product of vectors $v\in V$ to the inner product $\ip$ on $V$ by requiring:
{\abovedisplayskip=3pt%
 \belowdisplayskip=5pt%
\begin{align}
\label{eq:Cl_self_product}
    v\gp v \,\overset{!}{=}\, \ip(v,v) \cdot 1_{\Cl(V,\ip)}
    \qquad \forall\, v\in V \subset \Cl(V,\ip)
\end{align}
}%
Intuitively, this means that the product of a vector with itself collapses to a scalar value $\ip(v,v)\in \Rbb\subseteq \Cl(V,\ip)$,
from which all other properties of the algebra follow by bilinearity.
This leads in particular to the \emph{fundamental relation}%
\footnote{
    To see this, use $v:=v_1+v_2$ in Eq.~\eqref{eq:Cl_self_product} and expand.
}:
{
\abovedisplayskip=7pt
\belowdisplayskip=4pt
\begin{align*}
    v_2 \gp v_1 = - v_1 \gp v_2 \,+\, 2 \ip(v_1,v_2) \!\cdot\! 1_{\Cl(V,\ip)}
    \quad \forall\, v_1,v_2 \in V.
\end{align*}
}%

For the standard orthonormal basis $[e_1,\dots,e_{p+q}]$ of $\Rpq$ this reduces to the following simple rules:
{
\abovedisplayskip=6pt
\belowdisplayskip=3pt
\begin{subnumcases}{e_i \gp e_j \,=\, }
    -e_j \gp e_i            & for $i \neq j$        \label{eq:geometric_product_ON_basis_a} \\
    \ip(e_i,e_i) = +1      & for $i = j \leq p$    \label{eq:geometric_product_ON_basis_b} \\
    \ip(e_i,e_i) = -1      & for $i = j > p$       \label{eq:geometric_product_ON_basis_c}
\end{subnumcases}
}

An (orthonormal)
basis of $\Cl(V,\ip)$ is constructed by repeatedly taking geometric products of any basis vectors $e_i\in V$.
Note that, \emph{up to sign flip},
(1) the \emph{ordering} of elements in any product is \emph{irrelevant} due to Eq.~\eqref{eq:geometric_product_ON_basis_a}, and
(2) any \emph{elements occurring twice cancel out} due to Eqs.~(\ref{eq:geometric_product_ON_basis_b},\ref{eq:geometric_product_ON_basis_c}).

\begin{table}
    \centering
    \small
    \setlength{\tabcolsep}{5.9pt}
    \begin{tabular}{ r c c c c c }
        \toprule
        name                            & grade $k$             & dim\,${d\choose k}$   & basis $k$-vectors     & \!norm\!   \\
        \midrule
        scalar                          & $0$                   & $1$                   & $1$                   & $+1$ \\ 
        \hline
        \multirow{2}{*}{vector}         & \multirow{2}{*}{$1$}  & \multirow{2}{*}{$3$}  & $e_1$                 & $+1$ \\  
                                        &                       &                       & $e_2,\;e_3$           & $-1$ \\  
        \hline
        \!\multirow{2}{*}{pseudovector} & \multirow{2}{*}{$2$}  & \multirow{2}{*}{$3$}  & $e_{12},\;e_{13}$     & $-1$ \\
                                        &                       &                       & $e_{23}$              & $+1$ \\
        \hline
        \!pseudoscalar                  & $3$                   & $1$                   & $e_{123}$             & $+1$ \\
        \bottomrule
    \end{tabular}
    \vspace*{-1ex}
    \captionsetup{format=hang}
    \caption{
            Orthonormal basis for $\Cl(\Rpq)$ with $(p,q)=(1,2)$.
            ``Norm'' refers to $\bar\ip(e_A,e_A)=\ip_A$; see Eq.~\eqref{eq:Clifford_induced_metric}.
    }
    \label{tab:basis_blades}
    \vspace*{-8pt}
\end{table}

The basis elements constructed this way
can be identified with (and labeled by) subsets $A \subseteq [d] := \{1,\dots,d\}$,
where the presence~or absence of an index $i\in A$ signifies whether
the corresponding $e_i$ appears in the product.
Agreeing furthermore on an ordering to disambiguate signs,
we define
{
\abovedisplayskip=4pt
\belowdisplayskip=5pt
\begin{align*}
    e_A := e_{i_1} \!\gp e_{i_2} \!\gp \dots\gp e_{i_k}
    \mkern15mu
    \textup{for }\
    A = \{i_1 \mkern-2mu<\mkern-2mu \cdots \mkern-2mu<\mkern-2mu i_k\} \neq \varnothing
\end{align*}
}
and ${e_\varnothing := 1_{\Cl(V,\ip)}}$.
From this, it is clear
that $\dim \Cl(V,\ip)$ $= 2^d$.
Table~\ref{tab:basis_blades} gives a specific example for $(V,\eta)=\Rbb^{1,2}$.

Any multivector $x\in\Cl(V,\ip)$ can be uniquely expanded in this basis,
{
\abovedisplayskip=-4pt
\belowdisplayskip=4pt
\begin{align}
\label{eq:gen-el-clf-ord}
    x \,=\, \sum\nolimits_{A \ins [d]} x_A \cdot e_A,
\end{align}
}%
where $x_A \in \Rbb$ are coefficients.

Note that there are $\binom{d}{k}$ basis elements $e_A$ of \emph{``grade''} ${|\mkern-1muA|\mkern-3mu=\mkern-3muk}$,
i.e., which are composed from $k$ out of the $d$ distinct $e_i\in V$.
These span ${d\mkern-2mu+\mkern-2mu1}$ linear subspaces $\Cl^{(k)}(V,\ip)$,
the elements of which are called \emph{$k$-vectors}.
They include
scalars (${k\mkern-2mu=\mkern-2mu0}$),
vectors (${k\mkern-2mu=\mkern-2mu1}$),
bivectors (${k\mkern-2mu=\mkern-2mu2}$), etc.
The full Clifford algebra decomposes thus into a direct sum over grades:
{
\abovedisplayskip=2pt
\belowdisplayskip=5pt        
\begin{align*}
    \Cl(V,\ip) = \bigoplus\nolimits_{k=0}^d \Cl^{(k)}(V,\ip),
    \mkern16mu
    \dim \Cl^{(k)}(V,\ip) \mkern-1mu=\mkern-1mu
    \binom{d}{k}.
\end{align*}
}%
Given any multivector $x$, expanded as in Eq.~\eqref{eq:gen-el-clf-ord},
we can define its $k$-th \emph{grade projection}
on $\Cl^{(k)}(V,\ip)$ as:
{
\abovedisplayskip=8pt
\belowdisplayskip=4pt
\begin{align}
\label{eq:grd-k-clf}
    x^{(k)} \,=\, \sum\nolimits_{A \ins [d],\, |A|=k} x_A \cdot e_A.
\end{align}
}%

Finally, the inner product $\ip$ on $V$ is naturally extended to $\Cl(V,\ip)$ by defining
$\bar\ip: \Cl(V,\ip) \times \Cl(V,\ip) \to \Rbb$ as
\begin{align}
\label{eq:Clifford_induced_metric}
    \bar\ip(x,y) \,:=\, \sum\nolimits_{A \ins [d]} \ip_A \cdot x_A \cdot y_A,
\end{align}
where $\ip_A:=\prod\nolimits_{i \in A} \ip(e_i,\!e_i) \in \{\pm1\}$ are sign factors.
The tuple $(e_A)_{A \ins [d]}$ is an orthonormal basis of $\Cl(V,\ip)$ w.r.t. $\bar\ip$.

All of these constructions and statements are more formally defined and proven in the appendix of \cite{ruhe2023geometric}.

\subsubsection{Clifford grades as $\O\mkern-.5mu(p,\mkern-2muq\mkern-.5mu)$-representations}
\label{sec:Clifford-algebra-as-Opq-rep}

The individual grades $\Cl^{(k)}(V,\ip)$ turn out to be representation spaces of the (abstract) pseudo-orthogonal group
{
\abovedisplayskip=-8pt
\begin{align}
\label{eq:pseudo_orthogonal_group_abstract}
    \\[-2pt] \mkern-6mu \notag
    \O(V,\ip) :=
    \big\{ g \mkern-1mu\in\mkern-1mu \GL(V)
    \mkern2mu\big|\, \forall v \mkern-3mu\in\mkern-3mu V\!:
    \ip(gv,\mkern-2mugv) \mkern-2mu=\mkern-2mu \ip(v,\mkern-2muv) \mkern-1mu\big\},
\end{align}
}%
which coincides for $(V,\ip)=\Rpq$ with $\O(p,q)$ in Def.~\ref{eg:standard_pseudo_euclidean_space}.
$\O(V,\ip)$ acts thereby on multivectors by
\emph{individually multiplying each $1\mkern-1mu$-vector} from which they are constructed~\emph{with~$g$}.

\begin{DefThm}[$\O(V,\ip)$-action on $\Cl(V,\ip)$]
\label{thm:group_action_on_Clifford_full}
    Let $(V,\ip)$ be a pseudo-Euclidean space,
    $g,g_i\in\O(V,\ip)$,
    $c_i\in\Rbb$,
    $v_{i,j}\in V$,
    $x,x_i\in\Cl(V,\ip)$,
    and
    $I$ a finite index set.
    \\[2pt]
    Define the orthogonal algebra representation
    \begin{align}
        \rcl:\, \O(V,\ip) \to \O_{\Alg}\lp \Cl(V,\ip),\bar\ip\rp\footnotemark
    \end{align}
    \footnotetext{%
        $\O_{\Alg} \!\big(\! \Cl(V,\ip),\bar\ip\big)$ is the group of all linear orthogonal transformations of $\Cl(V,\ip)$ that are also multiplicative w.r.t.\ $\gp$\,.
    }%
    of $\O(V,\ip)$ via the canonical $\O(V,\ip)$-action on each of the contained $1$-vectors:
    {
    \begin{align}
        &
        \rcl(g) \Big(\sum\nolimits_{i \in I} c_i \mkern-2mu\cdot\mkern-2mu v_{i1} \gp \ldots \gp v_{ij_i} \Big)
        \\[2pt] \notag
        \mkern0mu:=&
        \sum\nolimits_{i \in I} c_i \mkern-2mu\cdot\mkern-2mu (gv_{i1}) \gp \ldots \gp (gv_{ij_i}).
    \end{align}
    }%
    $\rcl$ is well-defined as an \emph{orthogonal representation:}
    \begin{itemize}[leftmargin=64pt, topsep=0pt, itemsep=6pt]
    \item[linear:]          $\mkern8mu  \rcl(g)(c_1 \cdot x_1+c_2 \cdot x_2)$ 
    \\[3pt] \hspace*{-0pt}  $           = c_1 \cdot \rcl(g)(x_1) + c_2 \cdot \rcl(g)(x_2)$
    \item[composing:]       $\mkern8mu  \rcl(g_2)\lp \rcl(g_1)(x) \rp = \rcl(g_2g_1)(x)$
    \item[invertible:]      $\mkern8mu  \rcl(g)^{-1}(x) = \rcl(g^{-1})(x)$,
    \item[orthogonal:]      $\mkern8mu  \bar\ip(\rcl(g)(x_1),\rcl(g)(x_2)) = \bar\ip(x_1,x_2)$
    \end{itemize}
    Moreover, the geometric product is \emph{$\O(V,\ip)$-equivariant},
    making $\rcl$ an (orthogonal) \emph{algebra representation:}
    \begin{align}
        \rcl(g)(x_1) \gp \rcl(g)(x_2) = \rcl(g)(x_1 \gp x_2).
    \end{align}
    \begin{equation}
        \begin{tikzcd}[row sep=2.6em, column sep=4.2em]
            \Cl(V,\ip)\times\Cl(V,\ip)
            \arrow[r, "\gp"]
            \arrow[d, "\rcl(g)\mkern-2mu\times\mkern-2mu\rcl(g)"']
            &
            \Cl(V,\ip)
            \arrow[d, "\,\rcl(g)"]
            \\
            \Cl(V,\ip)\times\Cl(V,\ip)
            \arrow[r, "\gp"']
            &
            \Cl(V,\ip)
        \end{tikzcd}
    \end{equation}

\end{DefThm}

This representation $\rcl$ reduces furthermore to independent sub-representations on individual $k$-vectors.

\begin{Thm}[$\O(V,\ip)$-action on grades $\Cl^{(k)}(V,\ip)$]
\label{thm:group_action_on_Clifford_grade}
    Let $g\in\O(V,\ip)$, $x\in\Cl(V,\ip)$ and $k\in{0,\dots,d}$ a grade.
    \\[4pt]
    The grade projection $(\mkern2mu\cdot\mkern2mu)^{(k)}$ is $\O(V,\ip)$-equivariant:
    \begin{align}
        \big( \rcl(g)\,x \big)^{(k)} = \rcl(g)\big(x^{(k)}\big)
    \end{align}
   \begin{equation}
       \begin{tikzcd}[row sep=2.4em, column sep=4.4em]
           \Cl(V,\ip)
           \arrow[r, "(\mkern2mu\cdot\mkern2mu)^{(k)}"]
           \arrow[d, "\rcl(g)\,"']
           &
           \Cl^{(k)}(V,\ip)
           \arrow[d, "\,\rcl(g)"]
           \\
           \Cl(V,\ip)
           \arrow[r, "(\mkern2mu\cdot\mkern2mu)^{(k)}"']
           &
           \Cl^{(k)}(V,\ip)
       \end{tikzcd}
   \end{equation}
    This implies in particular that $\Cl(V,\ip)$ is reducible to subrepresentations $\Cl^{(k)}(V,\ip)$, i.e.\ $\rcl(g)$ does not mix grades.
\end{Thm}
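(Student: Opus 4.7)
The plan is to reduce the claim to the stronger statement that $\rcl(g)$ leaves each grade subspace $\Cl^{(k)}(V,\ip)$ invariant. Once this is established, decomposing any $x = \sum_{j=0}^{d} x^{(j)}$ via Eq.~\eqref{eq:grd-k-clf} and applying $\rcl(g)$ term-by-term yields $\rcl(g)\,x = \sum_j \rcl(g)(x^{(j)})$ with each summand still of grade $j$. The grade-$k$ component of $\rcl(g)\,x$ is then exactly $\rcl(g)(x^{(k)})$, which is the desired commutativity of the diagram. So the entire task is to show $\rcl(g)\bigl(\Cl^{(k)}(V,\ip)\bigr) \subseteq \Cl^{(k)}(V,\ip)$.

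By linearity of $\rcl(g)$, it suffices to verify this on the basis elements $e_A = e_{i_1} \gp \cdots \gp e_{i_k}$ with $A = \{i_1 < \cdots < i_k\}$. Directly from \Cref{thm:group_action_on_Clifford_full},
\[
    \rcl(g)(e_A) \;=\; (g e_{i_1}) \gp \cdots \gp (g e_{i_k}) \;=:\; w_1 \gp \cdots \gp w_k,
\]
where $w_l := g e_{i_l} \in V$. The crucial observation is that, since $g \in \O(V,\ip)$ is $\ip$-preserving and the $e_{i_l}$ are pairwise $\ip$-orthogonal by choice of the orthonormal basis, so are the images: $\ip(w_l, w_m) = \ip(e_{i_l}, e_{i_m}) = 0$ for $l \neq m$. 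It therefore remains to establish the auxiliary lemma that a product $w_1 \gp \cdots \gp w_k$ of pairwise $\ip$-orthogonal vectors of $V$ always lies in $\Cl^{(k)}(V,\ip)$.

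For this lemma I would argue via antisymmetrization. Orthogonality together with the fundamental relation yields pairwise anti-commutativity, so that $w_{\sigma(1)} \gp \cdots \gp w_{\sigma(k)} = \sgn(\sigma)\, w_1 \gp \cdots \gp w_k$ for every $\sigma$ in the symmetric group $S_k$, and the product hence coincides with its own antisymmetrization
\[
    w_1 \gp \cdots \gp w_k \;=\; \frac{1}{k!}\sum_{\sigma \in S_k} \sgn(\sigma)\; w_{\sigma(1)} \gp \cdots \gp w_{\sigma(k)}.
\]
Expanding each $w_l$ in the orthonormal basis and using multilinearity, this antisymmetrized sum becomes a linear combination of antisymmetrized tuples of basis vectors $e_{j_1},\dots,e_{j_k}$: contributions in which some index repeats vanish by antisymmetry, while contributions with pairwise distinct indices reduce (up to a sign) to basis elements $e_B$ with $|B|=k$, manifestly in $\Cl^{(k)}(V,\ip)$. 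The main obstacle I anticipate is exactly this cancellation of the lower-grade debris that would naively appear if one expanded $(g e_{i_1}) \gp \cdots \gp (g e_{i_k})$ in the $e_A$-basis directly; routing through the antisymmetrization identity is what keeps the argument clean, by using orthogonality only once up front and then relying on antisymmetry alone to kill every cross-term of grade strictly less than $k$.
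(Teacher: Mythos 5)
Your proof is correct. Note that the paper itself gives no argument for this theorem --- it only points to \citet{ruhe2023CliffordGroupEquivariantNNs} --- so what you have written is a self-contained derivation rather than a variant of an in-paper proof. Your route is the classical one: reduce to invariance of each grade subspace, check it on the basis blades $e_A$, observe that an isometry sends the pairwise $\ip$-orthogonal tuple $(e_{i_1},\dots,e_{i_k})$ to a pairwise orthogonal tuple $(w_1,\dots,w_k)$, and then show that any geometric product of pairwise orthogonal vectors is homogeneous of grade $k$ via the antisymmetrization identity. All steps go through: anticommutativity of orthogonal vectors makes the product equal to its own antisymmetrizer, multilinearity lets you expand in the orthonormal basis, repeated indices die by antisymmetry, and the surviving terms are $\pm e_B$ with $|B|=k$, which span $\Cl^{(k)}(V,\ip)$ by definition; the direct-sum decomposition then turns subspace invariance into the stated commutation with the projection $(\,\cdot\,)^{(k)}$. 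Two small points you leave implicit and could state: (i) the definition of $\O(V,\ip)$ in Eq.~\eqref{eq:pseudo_orthogonal_group_abstract} only asserts preservation of the quadratic form $\ip(v,v)$, so your claim $\ip(w_l,w_m)=\ip(e_{i_l},e_{i_m})=0$ needs the standard polarization identity to upgrade this to preservation of the full bilinear form; (ii) you only prove the inclusion $\rcl(g)\bigl(\Cl^{(k)}(V,\ip)\bigr)\subseteq\Cl^{(k)}(V,\ip)$, which is all the diagram needs (equality follows anyway from invertibility of $\rcl(g)$, applying the same inclusion to $g^{-1}$). Compared with the cited reference, which works at the level of the Clifford group and twisted conjugation and then transfers the statement to $\O(p,q)$, your argument acts on $\O(V,\ip)$ directly and is more elementary, at the cost of being tied to the chosen orthonormal basis used to define the grading.
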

\vspace*{-9pt}
\begin{proof}
    Both theorems are proven in \cite{ruhe2023CliffordGroupEquivariantNNs}.
\end{proof}
\vspace*{-9pt}

\begin{arxiv_version}
\begin{SCfigure*}[.8]
    \centering
    \includegraphics[width=1.355\linewidth]{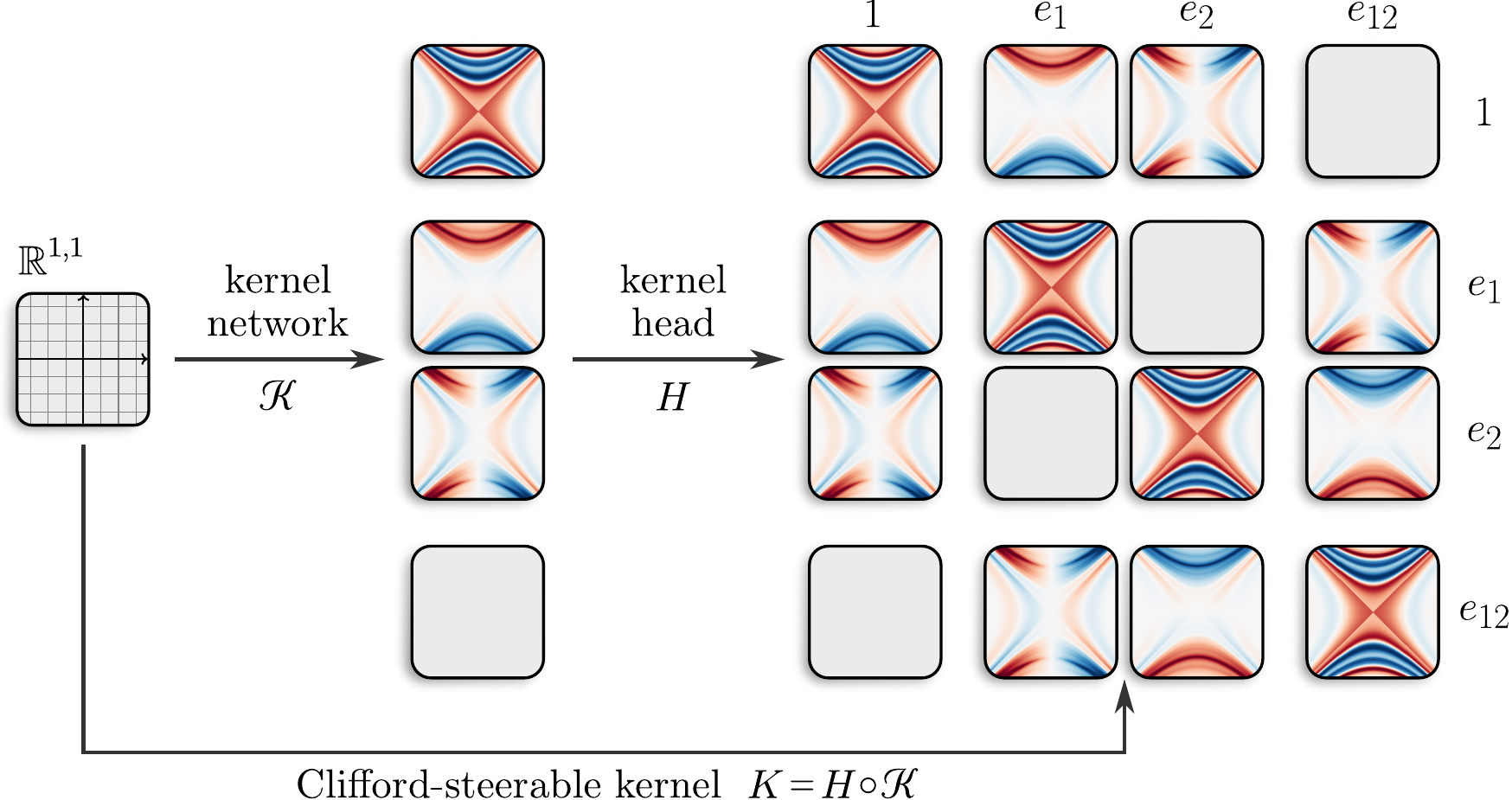}
    \vspace*{10pt}
    \captionsetup{width=.67\textwidth}
    \caption{%
        Implicit~Clifford-steerable kernel
        with light-cone structure
        for ${(p,\mkern-2muq) \mkern-2mu = \mkern-2mu (1,\mkern-2mu1)}$
        and $\cin =$ ${\cout \mkern-2mu = \mkern-2mu 1}$.
        It is parameterized by a \emph{kernel network} $\KB$,
        producing a field of
        (${\cin \mkern-4mu\times\mkern-4mu \cout}$)
        multi-vector valued outputs.
        These are convolved with multivector fields
        by taking their weighted geometric product
        at each location in a convolutional manner.
        This is equivalent to a conventional steerable convolution
        after expansion to a ${\O(1,\mkern-3mu1)}$-steerable kernel via a \emph{kernel head} operation $H$.
        For more details and equivariance properties see the commutative diagram in Fig. \ref{fig:implicit_steerable_kernel_diagram}.
        A more detailed variant for $\Rbb^{2,0}$ and $\O(2)$ which additionally visualizes weighting parameters is shown in Fig. \ref{fig:implicit_steerable_kernel_visual_O2}.
    }
    \label{fig:implicit_steerable_kernel_visual}
\end{SCfigure*}
\end{arxiv_version}

\subsubsection{\mbox{$\O\mkern-.5mu(p,\mkern-2muq\mkern-.5mu)$-equivariant Clifford Neural Nets}}
\label{sec:cegnns}

Based on those properties, \citet{ruhe2023CliffordGroupEquivariantNNs} proposed
\emph{Clifford group equivariant neural networks} (CGENNs).
Due to a group isomorphism, this is equivalent to the network's $\O(V,\ip)$-equivariance.

\begin{DefThm}[Clifford Group Equivariant NN]
\label{thm:cgenn}
    Consider a grade ${k=0,\mkern-1mu...,d}$ and weights $w^k_{mn}\in\Rbb$.
    A \emph{Clifford group equivariant neural network} (CGENN)
    is constructed from the following functions,
    operating on one or more multivectors $x_i\in\Cl(V,\ip)$.
    \begin{itemize}[align=left, leftmargin=10pt, itemindent=-4pt, topsep=0pt, itemsep=1pt]
        \item[\emph{Linear layers:}]
            mix $k$-vectors. %
            For each ${1\mkern-2mu\leq\mkern-2mu m\mkern-2mu\leq\mkern-2mu\cout}$\textup{:}
            {
            \abovedisplayskip=4pt
            \belowdisplayskip=4pt
            \begin{align}
                L_m^{(k)}(x_1,\dots,x_{\cin}) := \sum\nolimits_{n=1}^{\cin} w_{mn}^k \cdot x_n^{(k)} 
            \end{align}
            }%
            Such weighted linear mixing within sub-representations $\Cl^{(k)}(V,\ip)$ is common in equivariant MLPs.
        \item[\emph{Geometric product layers:}]
            compute weighted geometric products
            with grade-dependent weights:
            {
            \abovedisplayskip=-7pt
            \belowdisplayskip=6pt
            \begin{align}
            \label{eq:geom-prod-layer}
            \\[-4pt] \notag
                P^{(k)}(x_1,x_2)
                := \sum\nolimits_{m=0}^d\sum\nolimits_{n=0}^d
                w_{mn}^k \mkern-2mu\cdot\mkern-2mu \big(x_1^{(m)} \gp x_2^{(n)} \big)^{(k)} 
            \end{align}
            }%
            This is similar to the irrep-feature tensor products in MACE \cite{batatia2022MACE}.
            \vspace*{-2pt}
        \item[\emph{Nonlinearity:}]
            As activations, we use
            ${A(x) \mkern-1mu:=\mkern-1mu x\mkern-1.5mu\cdot\mkern-1.5mu \Phi\big( x^{(0)} \big)}$
            where $\Phi$ is the CDF of the Gaussian distribution.
            This is inspired by $\mathrm{GatedGELU}$ from \citet{brehmer2023geometric}.
  \end{itemize}
\vspace*{-1pt}
  All of these operations are by
  Theorems~\ref{thm:group_action_on_Clifford_full} and \ref{thm:group_action_on_Clifford_grade}
  $\O(V,\ip)$-equivariant.
\end{DefThm}
\vspace*{-1pt}

\begin{icml_version}
\begin{figure*}
     \centering
     \vspace*{-5pt}
     \begin{minipage}{0.43\textwidth}
        \centering
        \includegraphics[width=.95\linewidth]{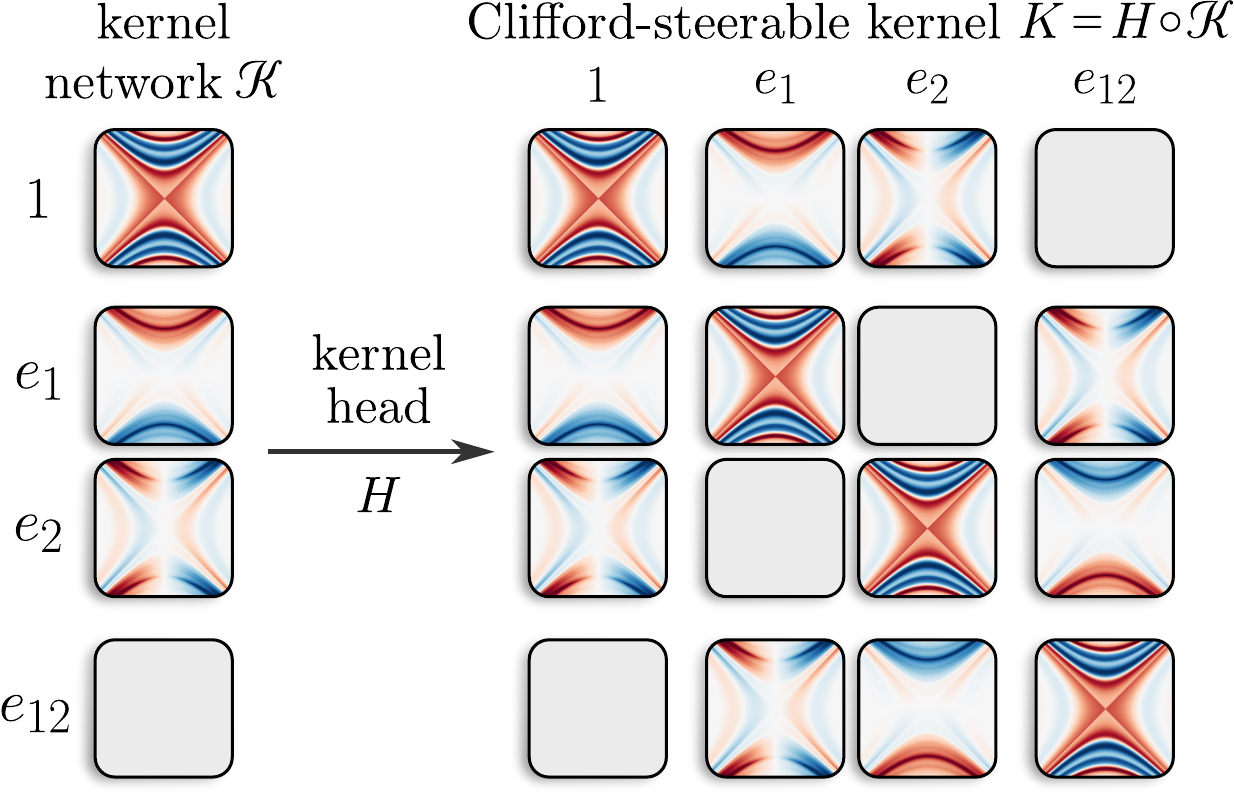}
     \end{minipage}
     \hfill
     \begin{minipage}{0.54\textwidth}
     \centering
     \vspace*{1pt}
        \begin{tikzcd}[row sep=3.2em, column sep=4.8em]
            \Rpq
            \vphantom{\big|}
            \arrow[rr, pos=.5, rounded corners, to path={
            -- ([yshift=3.2ex]\tikztostart.north)
            --node[above]{\small$K$} ([yshift=3.2ex]\tikztotarget.north)
            -- (\tikztotarget.north)
            }]
            \arrow[r, "\KB"]
            \arrow[d, "g\cdot\,"']
            &
            \Cl^{\cout\times\cin}
            \arrow[r, "H"]
            \arrow[d, "\rho_{\Cl}^{\cout\times\cin}(g)"]
            &
            \Hom_{\vecrm}( \Cl^\cin, \Cl^\cout )
            \vphantom{\big|}
            \arrow[d, "\,\rho_{\Hom}(g)"]
            \\
            \Rpq
            \vphantom{\big|}
            \arrow[rr, pos=.5, rounded corners, to path={
            -- ([yshift=-3.2ex]\tikztostart.south)
            --node[below]{\small$K$} ([yshift=-3.2ex]\tikztotarget.south)
            -- (\tikztotarget.south)
            }]
            \arrow[r, "\KB"']
            &
            \Cl^{\cout\times\cin}
            \arrow[r, "H"']
            &
            \Hom_{\vecrm}( \Cl^\cin, \Cl^\cout )
            \vphantom{\big|}
        \end{tikzcd}
     \end{minipage}
     \vspace*{-10pt}
   \caption{%
        \emph{Left:}
        Multi-vector valued output of the kernel-network $\KB$ for 
        ${\cin \mkern-2mu = \mkern-2mu \cout \mkern-2mu = \mkern-2mu 1, (p,\mkern-2muq) \mkern-2mu = \mkern-2mu (1,\mkern-2mu1)}$,
        and its expansion to a full ${\O(1,\mkern-3mu1)}$-steerable kernel via the kernel head $H$.
        \emph{Right:}
        Commutative diagram of the construction and ${\O(p,\mkern-2muq)}$-equivariance of implicit steerable kernels ${K\mkern-3mu=\mkern-2muH \mkern-2mu\circ\mkern-2mu \KB}$,
        composed from a kernel network $\KB$ with ${\cout\!\times\!\cin}$ multivector outputs and the kernel head $H$.
        The two inner squares show the individual equivariance of $\KB$ and $H$, from which the kernels' overall equivariance follows.
        We abbreviate $\Cl(\Rpq)$ by $\Cl$.
    }
    \label{fig:implicit_steerable_kernel_diagram}
    \vspace*{-.37cm}
\end{figure*}
\end{icml_version}

\section{Clifford-Steerable CNNs}
\label{sec:clifford_steerable_CNNs_main}

This section presents \emph{Clifford-Steerable Convolutional Neural Networks} (CS-CNNs),
which operate on multivector fields on $\Rpq$, and are equivariant to the isometry group $\E(p,q)$ of $\Rpq$. 
To achieve $\E(p,q)$-equivariance,
we need to find a way to implement $\O(p,q)$-steerable kernels (\Cref{sec:feature_fields_and_steerable_CNNs_general}),
which we do by leveraging the connection between $\Cl(\Rpq)$ and $\O(p,q)$ presented in \Cref{sec:clifford_algebra_group}.

CS-CNNs process (multi-channel) \emph{multivector fields}
{%
\abovedisplayskip=6pt%
\belowdisplayskip=6pt%
\begin{equation}
    f: \Rpq \to \Cl(\Rpq)^c
\end{equation}
}%
of type $(W,\rho) = (\Cl(\Rpq)^c, \rcl[c])$ with $c \geq 1$ channels.
The representation
{%
\abovedisplayskip=6pt%
\belowdisplayskip=6pt%
\begin{align}
    \rcl[c] \,=\, {\textstyle\bigoplus\nolimits_{i=1}^c} \rcl :\ \O(p,q) \to \GL\!\big(\!\Cl(\Rpq)^c\big)
\end{align}
}%
is given by the action $\rcl$ from \Cref{thm:group_action_on_Clifford_full},
however, applied to each of the $c$ components individually.

Following \Cref{thm:steerable_conv}, our main goal is the construction of a convolution operator
{%
\abovedisplayskip=6pt%
\belowdisplayskip=6pt%
\begin{align}
    \label{eq:conv-op-cl}
    &L:\, \Gamma\big( \Rpq,\Cl(\Rpq)^{\cin} \big) \to \Gamma\big( \Rpq,\Cl(\Rpq)^{\cout} \big), \notag \\
    &L(f_\textup{in})(u) := \int_{\Rpq} K(v)\big[ f_\textup{in}(u-v) \big]\, dv,\mkern10mu
\end{align}
}%
parameterized by a convolution kernel
{%
\abovedisplayskip=6pt%
\belowdisplayskip=6pt%
\begin{equation}
    \label{eq:clifford_kernel}
    K:\, \Rpq \to \Hom_{\vecrm} \!\big(\! \Cl(\Rpq)^\cin, \Cl(\Rpq)^\cout \big)
\end{equation}
}%
that satisfies the following \emph{$\O(p,q)$-steerability (equivariance) constraint}
for every $g\in \O(p,q)$ and $v\in\Rpq$.%
\footnote{
    The volume factor $|\det g|=1$ drops out for $g \in \O(p,q)$.
}%
{%
\abovedisplayskip=-8pt%
\belowdisplayskip=6pt%
\begin{align}
   \label{eq:kernel-constraint-again}
   \\[-4pt] \mkern-6mu \notag
    K(gv) \,\stackrel{!}{=}\,  \rcl[\cout](g)\, K(v)\, \rcl[\cin](g^{\minus1})
    \,=:\, \rho_{\Hom}(g) (K(v)),
\end{align}
}%

As mentioned in \Cref{sec:steerable_CNNs_general}, constructing such $\O(p,q)$-steerable kernels is typically difficult.
To overcome this challenge, we follow \citet{zhdanov2022implicit} and implement the kernels \emph{implicitly}.
Specifically, they are based on $\O(p,q)$-equivariant
\emph{``kernel networks''}%
\footnote{
    The kernel network's output
    $\Cl(\Rpq)^{\cout\cdot\cin}$
    is here reshaped to matrix form
    $\Cl(\Rpq)^{\cout \times \cin}$.
}
{%
\abovedisplayskip=6pt%
\belowdisplayskip=6pt%
\begin{align}
    \KB:\, \Rpq \to \Cl(\Rpq)^{\cout \times \cin} ,
\end{align}
}%
implemented as CGENNs (\Cref{sec:cegnns}).

Unfortunately, the codomain of $\KB$ is $\Cl(\Rpq)^{\cout \times \cin}$
instead of $\Hom_{\vecrm}\!\big(\! \Cl(\Rpq)^\cin, \Cl(\Rpq)^\cout \big)$,
as required by steerable kernels, Eq.~\eqref{eq:clifford_kernel}.
To bridge the gap between~these spaces,
we introduce an ${\O(p,\mkern-2muq)}$-equivariant linear layer, called \emph{kernel head} $H$.
Its purpose is to transform the kernel network's output $\kb := \KB(v)\in \Cl(\Rpq)^{\cout \times \cin}$
into the desired $\Rbb$-linear map between multivector channels
$H(\kb) \in$ $\Hom_{\vecrm} \!\big(\! \Cl(\Rpq)^\cin, \Cl(\Rpq)^\cout \big)$.
The relation between kernel network $\KB$, kernel head $H$, and the resulting steerable kernel $K:=H\circ\KB$ is visualized in
\begin{icml_version}
Fig.\  \ref{fig:implicit_steerable_kernel_diagram} (right).
\end{icml_version}
\begin{arxiv_version}
Figs.\,\ref{fig:implicit_steerable_kernel_visual} and \ref{fig:implicit_steerable_kernel_diagram}.
\end{arxiv_version}

To achieve ${\O(p,\mkern-2muq)}$-equivariance (steerability) of
${K\mkern-3.5mu=\mkern-2.5muH \mkern-3mu\circ\mkern-2mu \KB\mkern-2.5mu,}$
we have to make the kernel head $H$ of a specific form:

\begin{arxiv_version}
\begin{SCfigure*}[.38]
   \begin{tikzcd}[row sep=3.1em, column sep=4.5em]
       \Rpq
       \vphantom{\big|}
       \arrow[rr, pos=.5, rounded corners, to path={
       -- ([yshift=3.4ex]\tikztostart.north)
       --node[above]{\small$K$} ([yshift=3.4ex]\tikztotarget.north)
       -- (\tikztotarget.north)
       }]
       \arrow[r, "\KB"]
       \arrow[d, "g\cdot\,"']
       &
       \Cl(\Rpq)^{\cout\times\cin}
       \arrow[r, "H"]
       \arrow[d, "\rho_{\Cl}^{\cout\times\cin}(g)"]
       &
       \Hom_{\vecrm} \!\big(\mkern-2mu \Cl(\Rpq)^\cin,\, \Cl(\Rpq)^\cout \big) \mkern-16mu
       \arrow[d, "\,\rho_{\Hom}(g)"]
       \\
       \Rpq
       \vphantom{\big|}
       \arrow[rr, pos=.5, rounded corners, to path={
       -- ([yshift=-3.4ex]\tikztostart.south)
       --node[below]{\small$K$} ([yshift=-3.4ex]\tikztotarget.south)
       -- (\tikztotarget.south)
       }]
       \arrow[r, "\KB"']
       &
       \Cl(\Rpq)^{\cout\times\cin}
       \arrow[r, "H"']
       &
       \Hom_{\vecrm} \!\big(\mkern-2mu \Cl(\Rpq)^\cin,\, \Cl(\Rpq)^\cout \big) \mkern-16mu
   \end{tikzcd}
   \captionsetup{width=.96\textwidth}
   \vspace*{-.3cm}
   \caption{%
        ~ Construction~and~${\O(p,\mkern-2muq)}$- equivariance of
        implicit steerable kernels ${K\mkern-3mu=\mkern-2muH \mkern-2mu\circ\mkern-2mu \KB}$,
        which are composed from a \emph{kernel network} $\KB$ with ${\cout\mkern-5mu\times\mkern-5mu\cin}$ multivector outputs and a \emph{kernel head} $H$.
        The whole diagram commutes.
        The two inner squares show the individual equivariance of $\KB$ and $H$,
        from which the kernel's overall equivariance follows.
   }
   \label{fig:implicit_steerable_kernel_diagram}
\end{SCfigure*}
\end{arxiv_version}

\begin{Def}[Kernel head]
\label{def:kernel_head}
    A \emph{kernel head} is a map
    {%
    \abovedisplayskip=6pt%
    \belowdisplayskip=4pt%
    \begin{align}
        H\mkern-4mu:\mkern1mu  \Cl(\Rpq)^{\cout\mkern-1mu\times \cin}
        \! & \to
        \Hom_{\vecrm} \mkern-4mu\big(\mkern-4mu \Cl(\Rpq)^\cin\!, \Cl(\Rpq)^\cout\mkern-1mu \big) \notag \\
    \kb & \mapsto
        H(\kb), 
    \end{align}
    }%
    where the $\Rbb$-linear operator
    \begin{align}
    H(\kb):\, \Cl(\Rpq)^\cin &\to \Cl(\Rpq)^\cout, &
    \fd &\mapsto H(\kb)[\mkern2mu\fd\mkern2mu], \notag
    \end{align} 
    is defined on each output channel $i \in [\cout]$ and grade component $k=0,\dots,d$, by:
    {%
    \abovedisplayskip=-8pt%
    \belowdisplayskip=6pt%
    \begin{align}
        \label{eq:kernel-head-1}
        \\[-4pt] \mkern-6mu \notag
        H(\kb)[\mkern2mu\fd\mkern2mu]_i^{(k)}
         & := \sum\nolimits_{\substack{j \in [\cin] \\m,n=0,\dots,d}}
        w_{mn,ij}^k \mkern-1mu\cdot\mkern-2mu \lp \kb_{ij}^{(m)} \gp\mkern2mu \fd_{j}^{(n)}\rp^{\!(k)} \mkern6mu
    \end{align}
    }%
    $m,n=0,\dots,d$ label grades and ${j\in [\cin]}$ input channels. 
    The $w_{mn,ij}^k \in \Rbb$ are parameters that allow for weighted mixing between grades and channels.

\end{Def}

Our implementation of the kernel head is discussed in \Cref{sec:kernel_head}.
Note that the kernel head $H$ can be seen as a linear combination of partially evaluated geometric product layers $P^{(k)}(\kb_{ij}, \cdot )$ from \eqref{eq:geom-prod-layer}, which mixes input channels to get the output channels. 
The specific form of the kernel head $H$ comes from the following, most important property:

\begin{Prp}[Equivariance of the kernel head]
\label{prp:kernel-head-equiv}
The~\emph{kernel head} $H$ is $\O(p,q)$-equivariant w.r.t.\ $\rcl[\cout\times\cin]$ and $\rho_{\Hom}$, i.e.\ for $g \in \O(p,q)$ and $\kb \in \Cl(\Rpq)^{\cout\times\cin}$ we have:
  {%
  \abovedisplayskip=5pt
  \belowdisplayskip=2pt
  \begin{align}
      H\big( \rcl[\cout\times\cin](g) (\mkern2mu\kb) \big)
       & \,=\, \rho_{\Hom}(g) (H(\mkern2mu\kb)\mkern2mu). 
  \end{align}
  }%
\end{Prp}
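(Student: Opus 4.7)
The plan is to check the identity pointwise: I evaluate both sides on an arbitrary input feature $\fd \in \Cl(\Rpq)^{\cin}$, restrict to an arbitrary output channel $i$ and grade component $k$, and show that the resulting multivectors in $\Cl^{(k)}(\Rpq)$ agree term by term in the sum defining the kernel head in \eqref{eq:kernel-head-1}. Three facts do all the work: (a) $\rcl[\cout\times\cin]$ and $\rcl[\cin]$ act componentwise, so the $\rcl(g)$'s pass into each matrix slot / channel; (b) by \Cref{thm:group_action_on_Clifford_grade}, $\rcl(g)$ commutes with the grade projection $(\,\cdot\,)^{(k)}$; and (c) by \Cref{thm:group_action_on_Clifford_full}, $\rcl(g)$ is multiplicative with respect to the geometric product and satisfies $\rcl(g)\rcl(g^{-1}) = \id$.

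For the left-hand side, writing $\kb' := \rcl[\cout\times\cin](g)(\kb)$, componentwise action gives $\kb'_{ij} = \rcl(g)(\kb_{ij})$, and grade-equivariance turns $(\kb'_{ij})^{(m)}$ into $\rcl(g)(\kb_{ij}^{(m)})$. Plugging into \eqref{eq:kernel-head-1} yields, on the $(i,k)$ slot, a sum of terms $w_{mn,ij}^k \cdot \bigl(\rcl(g)(\kb_{ij}^{(m)}) \gp \fd_j^{(n)}\bigr)^{(k)}$. For the right-hand side, I unpack $\rho_{\Hom}(g)(H(\kb))[\fd] = \rcl[\cout](g)\bigl(H(\kb)[\rcl[\cin](g^{-1})\fd]\bigr)$, and set $\fd' := \rcl[\cin](g^{-1})\fd$, so that $(\fd')_j^{(n)} = \rcl(g^{-1})(\fd_j^{(n)})$ by (a) and (b). After pulling the outer $\rcl(g)$ through the grade projection via (b), and using multiplicativity together with $\rcl(g)\rcl(g^{-1}) = \id$ from (c), the inner factor simplifies as $\rcl(g)\bigl(\kb_{ij}^{(m)} \gp \rcl(g^{-1})(\fd_j^{(n)})\bigr) = \rcl(g)(\kb_{ij}^{(m)}) \gp \fd_j^{(n)}$, matching the LHS term by term.

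Because the weights $w_{mn,ij}^k \in \Rbb$ are scalars (so $\rcl(g)$ passes through them freely), the two sums of these terms are equal for every $(i,k)$, giving equality as elements of $\Cl(\Rpq)^{\cout}$ for every $\fd$, hence equality of the corresponding linear operators in $\Hom_{\vecrm}(\Cl(\Rpq)^{\cin}, \Cl(\Rpq)^{\cout})$. I do not foresee a real obstacle here: this is pure bookkeeping, and the ``only'' subtlety is making sure that grade projection and the geometric product are \emph{individually} $\O(p,q)$-equivariant before combining them, which is exactly what \Cref{thm:group_action_on_Clifford_full,thm:group_action_on_Clifford_grade} supply. In effect, the kernel head was designed precisely so that its equivariance follows automatically from the fact that $\rcl$ is a grade-preserving orthogonal algebra representation.
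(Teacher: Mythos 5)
Your proposal is correct and follows essentially the same route as the paper's proof: both expand the $(i,k)$ grade component of the kernel head on an arbitrary input $\fd$, pass $\rcl(g)$ through the weighted sum, the grade projections, and the geometric products using the linearity, grade-preservation, and multiplicativity guaranteed by \Cref{thm:group_action_on_Clifford_full,thm:group_action_on_Clifford_grade}, and cancel $\rcl(g)\rcl(g^{-1})=\id$ on the input channels. The only cosmetic difference is that the paper transforms the right-hand side into the left-hand side in one chain of equalities, whereas you expand both sides and match them term by term.
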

\vspace*{-9pt}
\begin{proof}
The proof relies on the $\O(p,q)$-equivariance of the geometric product and of linear combinations within grades.
It can be found in the Appendix in \Cref{prp:kernel-head-equiv:prf}.
\end{proof}
\vspace*{-8pt}

With these obstructions out of the way, we can now give the core definition of this paper:

\begin{Def}[Clifford-steerable kernel]
    \label{def:Clifford-steerable_kernel}
    A \emph{Clifford-steerable kernel} $K$ is a map as in Eq.\ \eqref{eq:clifford_kernel} that factorizes as: $K=H \circ \KB$ with a \emph{kernel head} $H$ from Eq.\ \eqref{eq:kernel-head-1} and a  \emph{kernel network} $\KB$
    given by
    a Clifford group equivariant neural network (CGENN)%
    \footnote{%
        More generally we could employ any $\O(p,q)$-equivariant neural network $\KB$ w.r.t.\ the standard action ${\rho(g)\mkern-2mu=\mkern-1.5mug}$ and $\rcl[\cout\times\cin]$.
    } 
    from \Cref{thm:cgenn}:
    {%
    \abovedisplayskip=6pt
    \belowdisplayskip=0pt
    \begin{align}
        \label{eq:kernel-base-1}
        \KB = [\KB_{ij}]_{\substack{i \in [\cout] \\j \in [\cin]}}:
        \Rpq \to \Cl(\Rpq)^{\cout \times \cin}.
    \end{align}
    }%
\end{Def}

The main theoretical result of this paper is that Clifford-steerable kernels are always $\O(p,q)$-steerable:

\begin{Thm}[Equivariance of Clifford-steerable kernels]
    \label{thm:kernel-head-equiv}
  Every \emph{Clifford-steerable kernel}
  ${K\mkern-2mu=\mkern-2mu H \mkern-2mu\circ\mkern-2mu \KB}$
  is $\O(p,q)$-steerable w.r.t.\ the standard action
  ${\rho(g)\mkern-2mu=\mkern-1.5mug}$ and $\rho_{\Hom}$:
  {%
  \abovedisplayskip=4pt%
  \belowdisplayskip=0pt%
  \begin{align*}
      K(gv) %
      \,=\, \rho_{\Hom}(g) (K(v))
      \mkern24mu\forall\ g\in\O(p,\mkern-1.5muq),\ v\in\Rpq
  \end{align*}
  }%
\end{Thm}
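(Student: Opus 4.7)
The plan is to prove this by composing two equivariances that have already been established: the $\O(p,q)$-equivariance of the kernel network $\KB$ (as a CGENN, guaranteed by the footnote to \Cref{def:Clifford-steerable_kernel} together with \Cref{thm:cgenn}), and the $\O(p,q)$-equivariance of the kernel head $H$ (established in \Cref{prp:kernel-head-equiv}). Since $K = H \circ \KB$ is by definition the composition of these two maps, steerability should follow by a direct diagram chase, matching the outer rectangle of the commutative diagram shown earlier.

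Concretely, first I would fix $g \in \O(p,q)$ and $v \in \Rpq$, and expand $K(gv) = H(\KB(gv))$ using the definition of $K$. Next, I would apply the equivariance of $\KB$ with respect to the standard action on $\Rpq$ and the representation $\rcl[\cout \times \cin]$ on the codomain $\Cl(\Rpq)^{\cout \times \cin}$, rewriting $\KB(gv) = \rcl[\cout \times \cin](g)\bigl(\KB(v)\bigr)$. Then I would invoke \Cref{prp:kernel-head-equiv} to push $g$ through $H$, yielding $H\bigl(\rcl[\cout \times \cin](g)(\KB(v))\bigr) = \rho_{\Hom}(g)\bigl(H(\KB(v))\bigr)$. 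Finally I would collapse $H(\KB(v)) = K(v)$ to conclude $K(gv) = \rho_{\Hom}(g)(K(v))$, as required. The volume factor $|\det g|$ in the general kernel constraint \eqref{eq:kernel_constraint_general} is absent here because $|\det g| = 1$ for every $g \in \O(p,q)$, consistent with the form of the constraint \eqref{eq:kernel-constraint-again}.

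There is essentially no hard step: all the technical content has been front-loaded into \Cref{prp:kernel-head-equiv} (the $\O(p,q)$-equivariance of $H$, which relies on the equivariance of the geometric product and of grade projections) and into \Cref{thm:cgenn} (the equivariance of the CGENN-based $\KB$). The only point worth being careful about is verifying that the representations on the intermediate space $\Cl(\Rpq)^{\cout \times \cin}$ match on the two sides of the composition — i.e., that the codomain representation of $\KB$ coincides with the domain representation assumed in \Cref{prp:kernel-head-equiv}. Both are $\rcl[\cout \times \cin]$, so the two squares glue cleanly into the outer commutative rectangle, and the theorem falls out immediately.
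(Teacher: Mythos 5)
Your proposal is correct and matches the paper's own proof essentially step for step: both compose the equivariance of the CGENN kernel network $\KB$ (Definition/Theorem~\ref{thm:cgenn}) with the equivariance of the kernel head $H$ (\Cref{prp:kernel-head-equiv}) via the same chain $K(gv)=H(\KB(gv))=H\bigl(\rcl[\cout\times\cin](g)(\KB(v))\bigr)=\rho_{\Hom}(g)\bigl(H(\KB(v))\bigr)=\rho_{\Hom}(g)(K(v))$, i.e.\ the outer rectangle of Fig.~\ref{fig:implicit_steerable_kernel_diagram}. Your remarks on the matching intermediate representation $\rcl[\cout\times\cin]$ and on $|\det g|=1$ are consistent with the paper as well.
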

\vspace*{-11pt}
\begin{proof}
    $\KB$ and $H$ are $\O(p,q)$-equivariant by Definition/The-orem \ref{thm:cgenn} and \Cref{prp:kernel-head-equiv}, respectively.
    The $\O(p,q)$-equivariance of the composition $K=H \circ \KB$ then follows from
    Fig.\ \ref{fig:implicit_steerable_kernel_diagram}           %
    or by direct calculation:
    {%
    \abovedisplayskip=4pt%
    \belowdisplayskip=-0pt%
    \begin{align}
        K(gv)\ &=\ H\big( \KB(gv) \big) \\
            &=\ H\big( \rcl[\cout\times\cin](g) (\KB(v)) \big) \notag \\
            &=\ \rho_{\Hom}(g) \lp H\big(\KB(v)\big) \rp \notag \\
            &=\ \rho_{\Hom}(g) \big( K(v) \big). \notag \qedhere
    \end{align}
    }%
\end{proof}
\vspace*{-8pt}

\begin{arxiv_version}
A direct Corollary of \Cref{thm:kernel-head-equiv} and \Cref{thm:steerable_conv} is now the following desired result.
\end{arxiv_version}
\begin{icml_version}
A direct Corollary of \Cref{thm:kernel-head-equiv} and \Cref{thm:steerable_conv} is:
\end{icml_version}

\begin{Cor}
    \!Let ${K \mkern-3mu=\mkern-3mu H \mkern-3mu\circ\mkern-2.5mu \KB}$ be a \emph{Clifford-steerable~kernel}.
    The corresponding convolution operator $L$ (Eq. \eqref{eq:conv-op-cl}) is then $\E(p,q)$-equivariant,
    i.e. $\forall\ f_\jin \in \Gs\mkern-1mu\big(\Rpq, \Cl(\Rpq)^\cin \big)$:
    {%
    \abovedisplayskip=6pt
    \belowdisplayskip=6pt
    \begin{align*}
        (t,g) \lact L(f_\jin) & = L\big( (t,g) \lact f_\jin \big)
        \mkern24mu\forall\, (t,\mkern-2mug) \in \E(p,q)
    \end{align*}
    }%
\end{Cor}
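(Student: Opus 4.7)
The plan is to reduce the corollary to two easier equivariance checks by splitting the group action. Since $\E(p,q) = (\Rpq, +) \rtimes \O(p,q)$, any element factors as $(t, g) = (t, e) \cdot (0, g)$, so it suffices to verify translation equivariance and $\O(p,q)$-equivariance of $L$ separately. The Clifford-steerability of $K$, established in Theorem \ref{thm:kernel-head-equiv}, is the only non-trivial ingredient; the rest is a change-of-variables computation.

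First I would handle translation equivariance, which does not use the steerability of $K$ at all. Writing out the convolution integral for $\big(L((t,e)\lact f_\jin)\big)(u) = \int K(v) f_\jin(u - t - v)\, dv$ and comparing with $\big((t,e)\lact L(f_\jin)\big)(u) = L(f_\jin)(u - t)$, the two sides agree immediately from the definition of the action on fields. This is the standard reason convolution is translation-equivariant and requires no Jacobian argument.

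Next I would check $\O(p,q)$-equivariance. Computing
\begin{align*}
L\big((0,g)\lact_{\rcl[\cin]} f_\jin\big)(u)
&= \int_{\Rpq} K(v)\, \rcl[\cin](g)\, f_\jin(g^{-1}v)\; \text{? —}
\end{align*}
after unfolding the action, the integrand is $K(v)\, \rcl[\cin](g)\, f_\jin(g^{-1}(u-v))$. Substituting $v = g v'$ and using $|\det g| = 1$ for $g \in \O(p,q)$ gives $\int K(gv')\, \rcl[\cin](g)\, f_\jin(g^{-1}u - v')\, dv'$. Applying the steerability identity $K(gv') = \rcl[\cout](g)\, K(v')\, \rcl[\cin](g^{-1})$ from Theorem \ref{thm:kernel-head-equiv} collapses the two factors of $\rcl[\cin](g)$, producing $\rcl[\cout](g) \int K(v')\, f_\jin(g^{-1}u - v')\, dv' = \rcl[\cout](g)\, L(f_\jin)(g^{-1}u) = \big((0,g)\lact_{\rcl[\cout]} L(f_\jin)\big)(u)$.

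Finally I would combine the two pieces: for a general $(t,g) \in \E(p,q)$, write $(t,g)\lact f_\jin = (t,e)\lact\big((0,g)\lact f_\jin\big)$, apply $L$, and use the two equivariance properties in succession to move $L$ past each factor. The main potential obstacle is bookkeeping with the representation $\rcl[\cin]$ acting on both sides of $K(v)$ and verifying that the $|\det g|$ Jacobian indeed equals one on all of $\O(p,q)$ (including orientation-reversing elements); since $g^\top \Delta^{p,q} g = \Delta^{p,q}$ forces $\det(g)^2 = 1$, this is fine, and no additional normalization is needed beyond what already appears in Theorem \ref{thm:steerable_conv}.
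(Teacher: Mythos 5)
Your proof is correct, and it takes a slightly different (more self-contained) route than the paper. The paper disposes of the corollary in one line: it combines Theorem~\ref{thm:kernel-head-equiv} (the kernel $K=H\circ\KB$ is $\O(p,q)$-steerable) with the general steerable-convolution theorem, Theorem~\ref{thm:steerable_conv} (citing Theorem 4.3.1 of \citet{weiler2023EquivariantAndCoordinateIndependentCNNs}), which establishes the correspondence between $G$-steerable kernels and $\Aff(G)$-equivariant linear maps. You instead re-derive the direction of that correspondence actually needed here — steerable kernel $\Rightarrow$ equivariant convolution — by hand: factoring $(t,g)=(t,e)\cdot(0,g)$ in $\E(p,q)=(\Rpq,+)\rtimes\O(p,q)$, checking translation equivariance directly from the convolution integral, and handling the $\O(p,q)$ part via the substitution $v=gv'$ with $|\det g|=1$ and the steerability identity $K(gv')=\rcl[\cout](g)\,K(v')\,\rcl[\cin](g)^{-1}$; the recombination step is valid since $\rhd$ is a left action. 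This is worth noting because Theorem~\ref{thm:steerable_conv} as stated in the paper only asserts the necessity direction (linear $\Aff(G)$-equivariant $\Rightarrow$ convolution with steerable kernel), while the corollary needs sufficiency; that direction is contained in the cited reference (the correspondence there is a bijection), but your explicit change-of-variables computation makes the corollary independent of that external appeal, at the cost of essentially reproving a special case of a known result. Your observation that $g^\top\Delta^{p,q}g=\Delta^{p,q}$ forces $\det(g)^2=1$, so the Jacobian factor is trivial even for orientation-reversing elements, matches the paper's footnote to Eq.~\eqref{eq:kernel-constraint-again}.
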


\begin{Def}[Clifford-steerable CNN]
    We call a convolutional network (that operates on multivector fields and~is) based on Clifford-steerable kernels a \emph{Clifford-Steerable Convolutional Neural Network} (CS-CNN).
\end{Def}

\begin{Rem}
    \citet{Brandstetter2022CliffordNL} use a similar kernel head $H$ as ours, Eq.~\eqref{eq:kernel-head-1}.
    However, their kernel network $\KB$ is not $\O(p,\mkern-1muq)$-equivariant,
    making their overall architecture merely
    equivariant to translations instead of $\E(p,\mkern-1muq)$.
\end{Rem}
\begin{Rem}
    The vast majority of parameters of
    CS-CNNs
    reside in their kernel networks $\KB$.
    Further parameters are found in the kernel heads' weighted geometric product operation and summation of steerable biases to scalar grades.
\end{Rem}
\begin{Rem}
    While CS-CNNs are formalized in \emph{continuous space},
    they are in practice typically applied to \emph{discretized fields}.
    Our implementation allows for any sampling points,
    thus covering both \emph{pixel grids} and \emph{point clouds}.
\end{Rem}

\Cref{sec:cs-cnn-prm} generalizes CS-CNNs from flat spacetimes to general curved \emph{pseudo-Riemannian manifolds}.
\Cref{apx:implementation_details} provides details on our implementation of CS-CNNs,
available at
\url{https://github.com/maxxxzdn/clifford-group-equivariant-cnns}.

\section{Experimental Results}
\label{sec:experimental_results}

\begin{icml_version}
\begin{figure*}[ht]
    \vspace*{-4pt}
    \centering
    \hspace*{-.01\textwidth}
    \includegraphics[width=1.02\textwidth]{figures/experiments-v5.pdf}
    \vspace*{-22pt}
    \caption{
        \emph{Plots 1\,\&\,2: }
        Mean squared errors (MSEs)
        on the Navier-Stokes 2D and Maxwell 3D forecasting tasks (one-step loss) as a function of number of training simulations.
        \emph{Plot 3: } Convergence (test loss) of our model vs. a basic ResNet on the relativistic Maxwell task.
        \emph{Plot 4:}
        Relative $\O(2)$-equivariance errors of different models.
        $G$-FNOs fail as they cannot correctly ingest multivector data.
    }
    \label{fig:results_1}
    \vspace*{-12pt}
\end{figure*}
\end{icml_version}

\begin{arxiv_version}
\begin{figure*}[ht]
    \vspace*{-4pt}
    \centering
    \hspace*{-.02\textwidth}
    \includegraphics[width=1.02\textwidth]{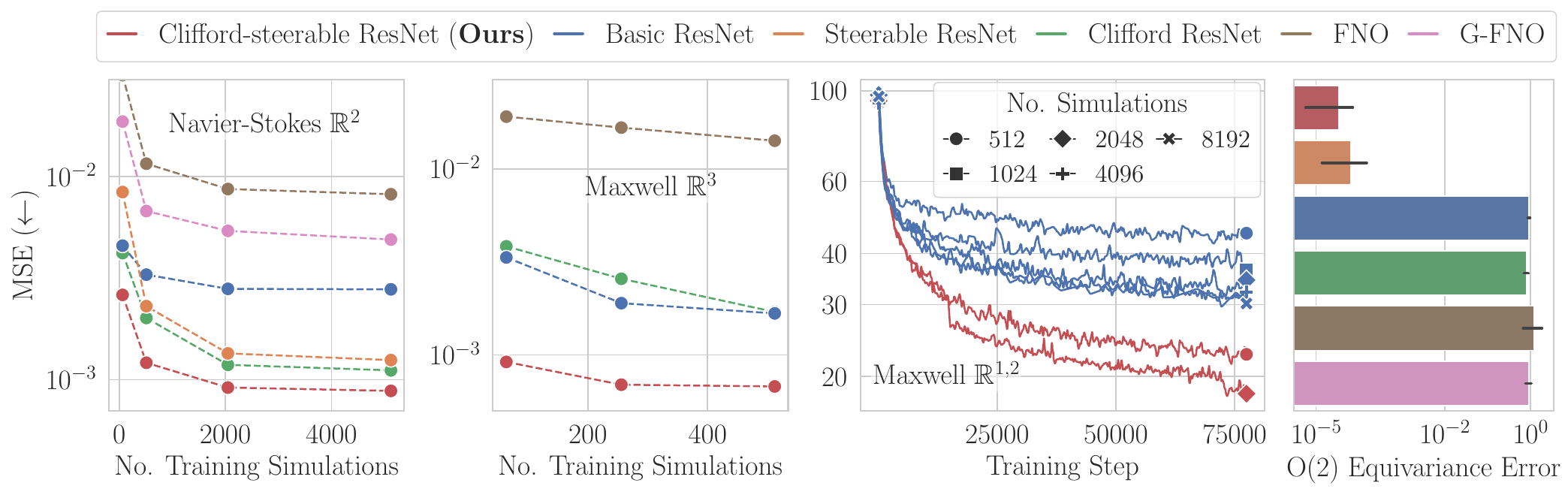}
    \vspace*{-21pt}
    \caption{
        \emph{Plots 1\,\&\,2: }
        Mean squared errors (MSEs)
        on the Navier-Stokes $\Rbb^2$ and Maxwell $\Rbb^3$ forecasting tasks (one-step loss) as a function of number of training simulations.
        \emph{Plot 3: }
        MSE test loss convergence of our model vs. a basic ResNet on the relativistic Maxwell $\Rbb^{1,2}$ task.
        The ResNet does not match the performance of CS-CNNs even for vastly larger training datasets.
        \emph{Plot 4:}
        Relative $\O(2)$-equivariance errors of models trained on Navier-Stokes $\Rbb^2$.
        $G$-FNOs fail as they cannot correctly ingest multivector data.
    }
    \label{fig:results_1}
    \vspace*{-4pt}
\end{figure*}
\end{arxiv_version}

To assess CS-CNNs,
we investigate how well they can learn to simulate dynamical systems by testing their ability to predict future states given a history of recent states \cite{gupta2022pdearena}.
We consider
three tasks:
\begin{icml_version}
\begin{enumerate}[leftmargin=18pt, topsep=-2pt, itemsep=-5pt, label={(\arabic*)}]
    \item Fluid dynamics  on $\Rbb^2\mkern28mu$     ({\small incompressible Navier-Stokes})
    \item Electrodynamics on $\Rbb^3\mkern16mu$     ({\small Maxwell's Eqs.})
    \item Electrodynamics on $\Rbb^{1,2}\mkern5mu$  ({\small Maxwell's Eqs., relativistic})
\end{enumerate}
\end{icml_version}
\begin{arxiv_version}
\begin{enumerate}[leftmargin=18pt, topsep=-2pt, itemsep=-2pt, label={(\arabic*)}]
    \item Fluid dynamics  on $\Rbb^2\mkern28mu$     ({\small incompressible Navier-Stokes})
    \item Electrodynamics on $\Rbb^3\mkern16mu$     ({\small Maxwell's Eqs.})
    \item Electrodynamics on $\Rbb^{1,2}\mkern5mu$  ({\small Maxwell's Eqs., relativistic})
\end{enumerate}
\end{arxiv_version}
Only the last setting is properly incorporating time into
${1\mkern-4mu+\mkern-3mu2}$-dimensional \emph{spacetime},
while the former two are treating time steps improperly as \emph{feature channels}.
The improper setting allows us to compare our method with prior work,
which was not able to incorporate the full spacetime symmetries $\E(1,n)$,
but only the spatial subgroup $\E(n)$ (which is also covered by CS-CNNs).

\begin{icml_version}
\begin{figure}
    \centering
    \includegraphics[width=1.\columnwidth]{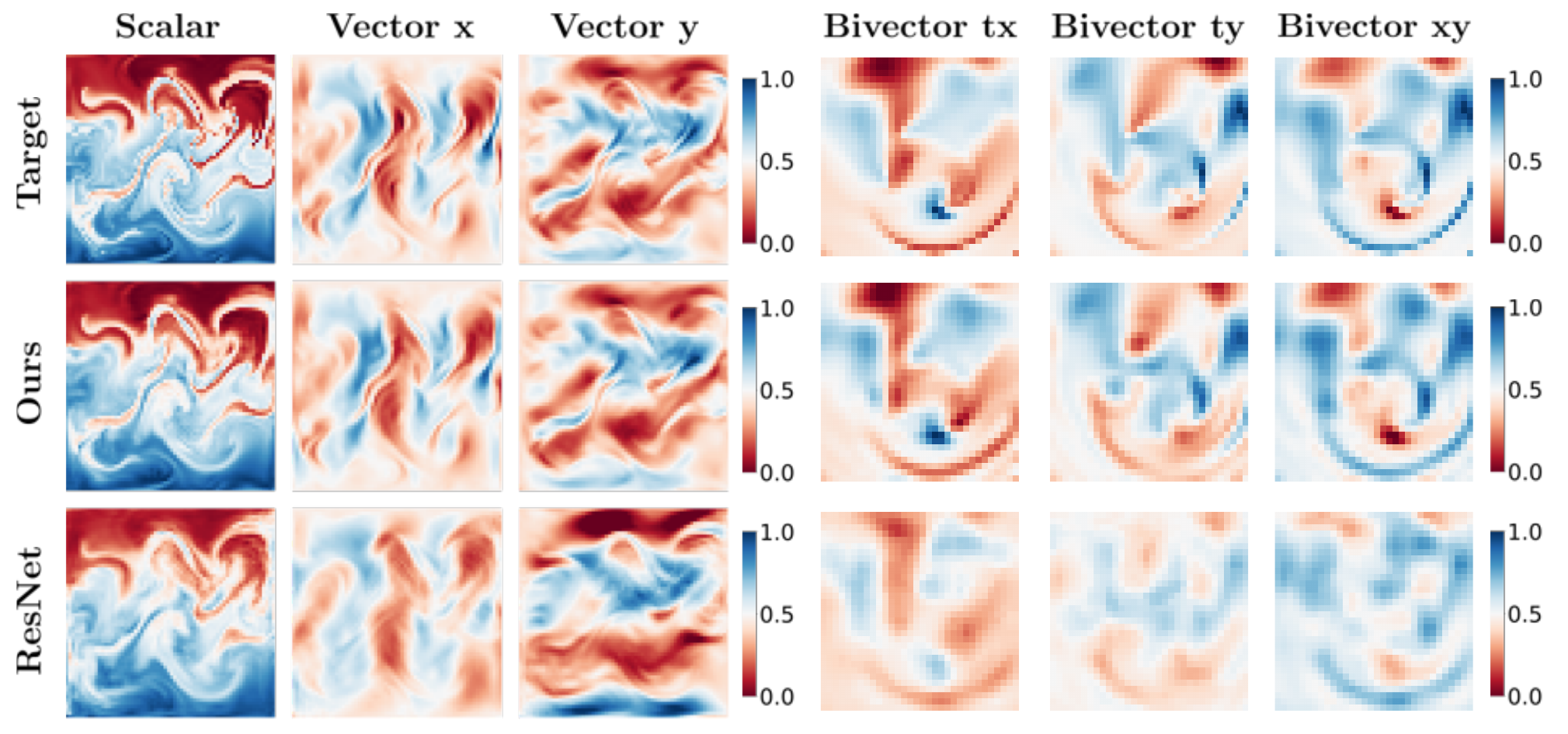}
    \vspace*{-23pt}
    \caption{
        Visual comparison of target and predicted fields.
        \emph{Left}:
        Our CS-ResNet clearly produces better results than the basic ResNet on Navier Stokes,
        despite only being trained on $64$ instead of $5120$ simulations.
        \emph{Right}:
        On Maxwell 2D+1, CS-ResNets capture crisp details like wavefronts more accurately. 
    }
    \label{fig:pdevis}
    \vspace{-14pt}
\end{figure}
\end{icml_version}

\begin{arxiv_version}
\begin{SCfigure*}
    \centering
    \includegraphics[width=1.52\columnwidth]{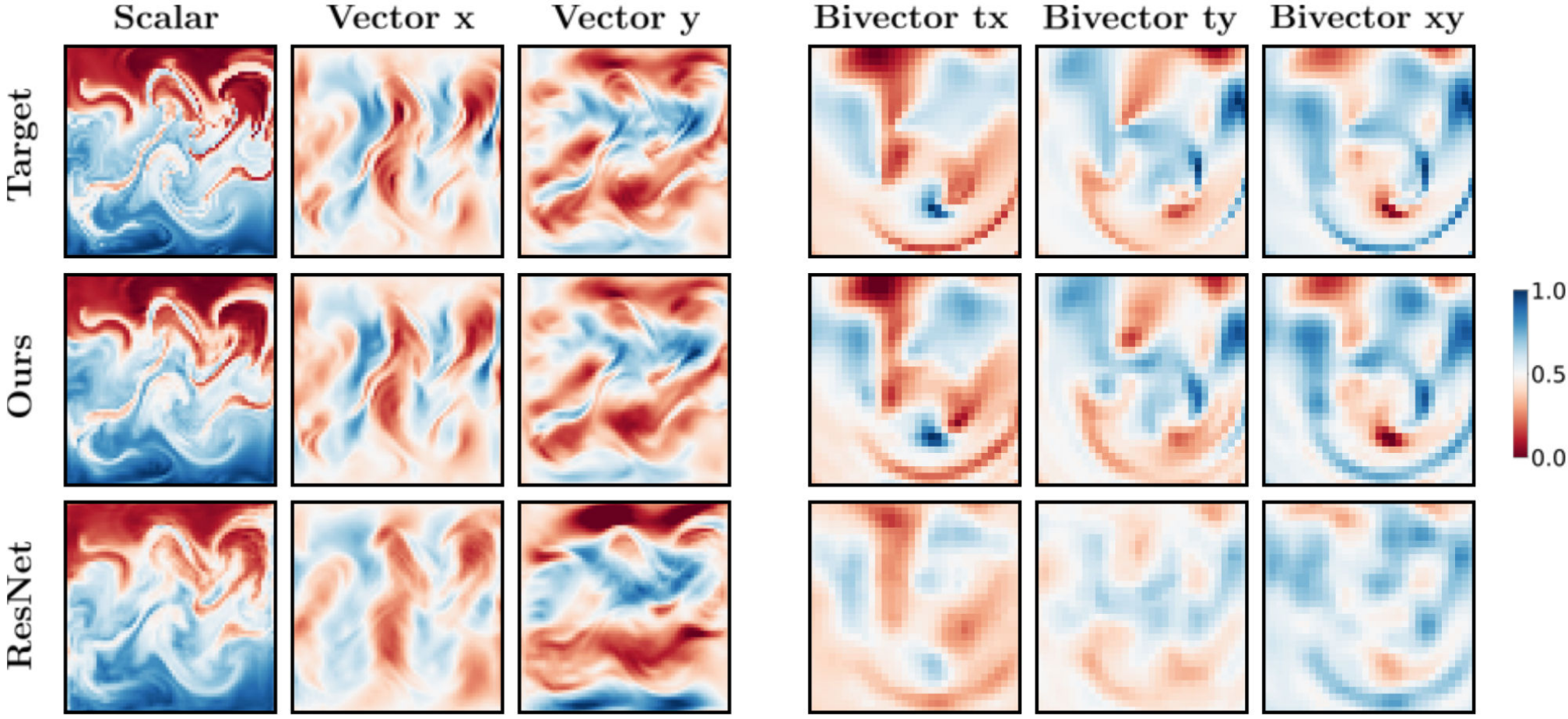}
    \caption{
        Visual comparison of target and predicted fields.
        \emph{Left}:
        Our CS-ResNet clearly produces better results than the basic ResNet on Navier Stokes $\Rbb^2$,
        despite only being trained on $64$ instead of $5120$ simulations.
        \emph{Right}:
        On the relativistic Maxwell simulation task on $\Rbb^{1,2}$,
        CS-ResNets capture crisp details like wavefronts more accurately. 
        This is since they generalize over any isometries of space and any boosted frames of reference.
    }
    \label{fig:pdevis}
\end{SCfigure*}
\end{arxiv_version}

\textbf{Data \& Tasks:}
For both tasks (1) and (2), the goal is to predict the next state given previous 4 time steps. 
In (1), the inputs are scalar pressure and vector velocity fields. 
In (2) the inputs are vector electric and bivector magnetic fields. 
For task (3), the goal is to predict 16 future states given the previous 16 time steps. 
In this case, the \emph{entire electromagnetic field forms a bivector} \cite{Orban2020DimensionalSO}. 
Individual training samples are randomly sliced from long simulations.
More details on the datasets are found in Appendix~\ref{sec:appendix-datasets}. 

\begin{samepage}
\textbf{Architectures:}
We evaluate six network architectures:
\begin{center}
    \vspace*{-4pt}
    \small
    \setlength{\tabcolsep}{3.5pt}
    \begin{tabular}{ r c c }
        \toprule
        architecture                            & matrix group $G$      & isometry group   \\
        \midrule
        Conventional ResNet                     & $\{e\}$               & translations     \\
        Clifford ResNet                         & $\{e\}$               & translations     \\
        Fourier Neural Operators                & $\{e\}$               & translations     \\
        \!$G$-Fourier Neural Operators          & ${\mathrm{D}_4\mkern-2mu<\mkern-1mu\O(2)}$  & $\approx \E(2)$  \\
        Steerable ResNet                        & $\O(n)$               & $\E(n)$          \\
        \underline{Clifford-Steerable ResNet}   & $\O(p,q)$             & $\E(p,q)$        \\
        \bottomrule
    \end{tabular}
\begin{icml_version}
    \vspace*{-10pt}
\end{icml_version}
\begin{arxiv_version}
\end{arxiv_version}
\end{center}
\end{samepage}
    \pagebreak

The basic ResNet model is described in Apx.~\ref{apx:exp_details}.
Clifford,
Steerable,
and our CS-ResNets
are variations of it that substitute vanilla convolutions with their
Clifford \cite{Brandstetter2022CliffordNL},
$\O(n)$-steerable \cite{Weiler2019_E2CNN,cesa2021ENsteerable},
and Clifford-Steerable counterparts, respectively.
We also test Fourier Neural Operators (FNO) \cite{li2020fourier} and $G$-FNO \cite{helwig2023GFNO}.
The latter add equivariance to the Dihedral group ${\mathrm{D}_4\mkern-2mu<\mkern-2mu\O(2)}$.
Assuming scalar or regular representations, they are incapable~of digesting multivector-valued data.
We address this by replacing the initial lifting and final projection with \emph{unconstrained} operations that are able to learn a geometrically correct mapping from/to multivectors.
All models scale their number of channels to match the parameter count of the basic ResNet.

\textbf{Results:}
To evaluate the models, we report mean-squared error losses (MSE) on test sets.
As shown in Fig.~\ref{fig:results_1},
our \emph{CS-ResNets outperform all baselines on all tasks},
especially in higher dimensional space(time)s $\Rbb^3$ and $\Rbb^{1,2}$.
CS-ResNets are extremely sample-efficient:
for the Navier-Stokes experiment, they
require only $64$ training simulations to outperform the basic ResNet and FNOs \emph{trained on 80$\times$ more data}.
On Maxwell $\Rbb^{1,2}$ the basic ResNet does not manage to come close to the CS-ResNet's performance when supplied with 16$\times$ more data.

Plot 1 proves CS-CNNs to be a good alternative to classical $\O(2)$-steerable CNNs in the nonrelativistic case.
We~didn't run $\O(3)$-steerable CNNs on Maxwell
$\Rbb^3$
due to resource constraints
and on
$\Rbb^{1,2}$
as they are not Lorentz-equivariant.
$G$-FNO does not support either of these symmetries.

The Maxwell data on spacetime $\Rbb^{1,2}$ is naturally modeled by \emph{space-time algebra} $\Cl(\Rbb^{1,2})$ \cite{hestenes2015space}.
Contrary to tasks (1) and (2), time appears here as a proper grid dimension, not as a feature channel.
The light cone structure of CS-CNN kernels (Fig. \ref{fig:implicit_steerable_kernel_diagram})
ensures the models' consistency
across different inertial frames of reference.
This is relevant as the simulated electromagnetic fields are induced by particles moving at relativistic velocities.
We see in Plot~3 that CS-CNNs converge significantly faster and are more sample efficient than basic ResNets.

\begin{arxiv_version}
Fig.~\ref{fig:pdevis} visualizes predictions of CS-ResNets and basic ResNets on Navier-Stokes $\Rbb^2$ and Maxwell $\Rbb^{1,2}$.
Our model is much more accurately capturing fine details, despite being trained on less data.
\end{arxiv_version}

\vspace{-1pt}
\textbf{Equivariance error:}
To assess the models' $\E(2)$-equivari- ance, we measure the relative error
$\frac{\left| f(g.x) - g.f(x) \right|}{\left| f(g.x) + g.f(x) \right|}$ between
(1) the output computed from a transformed input; and
(2) the transformed output, given the original input. 
As shown in Fig. \ref{fig:results_1} (right),
both steerable models are equivariant up to numerical artefacts.
Despite training,
the other models did not become equivariant at all.
This holds in particular for $G$-FNO, which covers only a subgroup of discrete rotations.

\section{Conclusions}

We presented Clifford-Steerable CNNs,
a new theoretical framework for ${\E(p,\mkern-1.5muq)}$-equivariant convolutions on pseudo-Euclidean spaces such as Minkowski-spacetime.
CS-CNNs process fields of multivectors -- geometric features which naturally occur in many areas of physics.
The required ${\O(p,\mkern-1.5muq)}$-steerable convolution kernels are implemented implicitly via Clifford group equivariant neural networks.
This makes so far unknown analytic solutions for the steerability constraint unnecessary.
CS-CNNs significantly outperform baselines on a variety of physical dynamics tasks.
\begin{icml_version}
Some limitations of CS-CNNs are discussed in Appendix~\ref{sec:limitations_ICML-version}.
\end{icml_version}

\begin{arxiv_version}
The practically most relevant novel setting unlocked by CS-CNNs are relativistic convolutions on spacetimes $\Rbb^{1,q}$.
Related to this is a branch of research concerned with developing Lorentz group equivariant networks for \emph{jet tagging},
i.e. for the task of binary classifying whether a given jet of elementary particles measured in an accelerator originated from hadronically decaying top quarks \cite{kasieczka2017deep}.
A crucial difference to our work is that jets are numerically represented as \emph{sets} of scalars and momentum 4-vectors that are not associated with specific locations in spacetime, i.e. that are \emph{not fields or point clouds}.%
\footnote{
    These sets are sometimes claimed to be point clouds,
    however, this is misleading as
    they consist of points in \emph{feature space}
    instead of features attached to points in (Minkowski) \emph{base space}.
}
Consequently, such data is not processed by $\E(1,3)$-equivariant CNNs,
but rather by $\O(1,3)$ or $\mathrm{SO}^+(1,3)$-equivariant
MLPs \cite{Bogatskiy2020LorentzGE,finzi2021practical},
GNNs \cite{Gong2022AnEL,ruhe2023CliffordGroupEquivariantNNs}, or
transformers \cite{spinner2024lorentz},
or by models relying on a complete set of pairwise Lorentz-invariants 
\cite{villar2021scalarsAreUniversal,bogatskiy2022PELICAN,li2024doesLorentzSymmetricDesign}.
While these models are not immediately suitable for processing fields (or point clouds) on spacetime,
some of them could be used for parameterizing the implicit kernel networks within our CS-CNNs.
\end{arxiv_version}

\begin{arxiv_version}
From the viewpoint of general steerable CNNs, there are some limitations:
\vspace*{-2pt}
\begin{itemize}[leftmargin=8pt, topsep=-3pt, itemsep=0pt]
    \item
    There exist \emph{more general field types} (${\O(p,\mkern-1.5muq)}$-rep-resentations) than multivectors,
    for which CS-CNNs do not provide steerable kernels.
    For connected Lie groups, e.g. the subgroups ${\SO^+\mkern-2mu(p,\mkern-1.5muq)}$,
    these types can in principle be computed numerically \cite{shutty2020learning}.
    
    \item
    CGENNs and CS-CNNs rely on equivariant operations that treat multivector-grades $\Cl^{(k)}(V,\eta)$ as ``atomic'' features.
    However, it is not clear whether grades are always \emph{irreducible} representations,
    that is, there might be further equivariant degrees of freedom
    which would treat irreducible sub-representations independently.
    
    \item
    We observed that the steerable kernel spaces of CS-CNNs are not necessarily \emph{complete},
    i.e., certain degrees of freedom might be missing.
    However, we show in Apx.~\ref{apx:kernel_space_completeness} how they are recovered by composing multiple convolutions.
    
    \item
    $\O(p,q)$ and their group orbits on $\Rpq$ are for $p,q\neq0$ \emph{non-compact};
    for instance, the hyperboloids in spacetimes $\Rbb^{1,q}$ extend to infinity.
    In practice, we sample convolution kernels on a finite sized grid as shown in
    Fig.~\ref{fig:implicit_steerable_kernel_visual}.
    This introduces a cutoff,
    breaking equivariance for large transformations.
    Note that this is an issue not specific to CS-CNNs, but it applies e.g. to scale-equivariant CNNs as well \cite{bekkers2020bspline,romero2020wavelet}.
\end{itemize}

Despite these limitations, CS-CNNs excel in our experiments.
A major advantage of CGENNs and CS-CNNs is that they allow for a simple, unified implementation for arbitrary signatures ${(p,\mkern-2muq)}$.
This is remarkable, since steerable kernels usually need to be derived for each symmetry group individually.
Furthermore, our implementation applies both to multivector fields sampled on pixel grids and point clouds.

CS-CNNs are, to the best of our knowledge,
the first convolutional networks that respect the full symmetries ${\E(p,\mkern-1.5muq)}$ of fields on Minkowski spacetime or any other pseudo-Euclidean spaces.
Even more generally, CS-CNNs are readily extended to arbitrary curved \emph{pseudo-Riemannian manifolds},
and such convolutions will necessarily rely on ${\O(p,\mkern-1.5muq)}$-steerable kernels.
For more details see Appendix~\ref{sec:cs-cnn-prm} and \cite{weiler2023EquivariantAndCoordinateIndependentCNNs}.
They could furthermore be adapted to
\emph{steerable PDOs} (partial differential operators) \cite{jenner2021steerablePDO},
which would connect them to the \emph{multivector calculus} used in mathematical physics
\cite{hestenes1968multivector,hitzer2002multivector,lasenby1993multivector}.
\end{arxiv_version}

\begin{icml_version}
CS-CNNs are, to the best of our knowledge,
the first convolutional networks respecting the full symmetries ${\E(p,\mkern-1.5muq)}$~of pseudo-Euclidean spaces.
They are readily extended to general \emph{pseudo-Riemannian manifolds}; see Apx.~\ref{sec:cs-cnn-prm}~and~\cite{weiler2023EquivariantAndCoordinateIndependentCNNs}.
They could furthermore be adapted to
\emph{steerable partial differential operators} \cite{jenner2021steerablePDO},
connecting them to \emph{multivector calculus}
\cite{hestenes1968multivector,hitzer2002multivector,lasenby1993multivector}.%
\pagebreak%
\end{icml_version}%

\label{submission}

\begin{icml_version}
\cleardoublepage
\end{icml_version}
\begin{arxiv_version}
\end{arxiv_version}

\section*{Impact Statement}

The broader implications of our work are primarily in the improved modeling of
PDEs, other physical systems, or multi-vector based applications in computational geometry.
Being able to model such systems more \textit{accurately} can lead to better understanding about the physical systems governing our world, while being able to model such systems more \textit{efficiently} could greatly improve the ecological footprint of training ML models for modeling physical systems.

\section*{Acknowledgements}
This research was supported by Microsoft Research AI4Science. 
All content represents the opinion of the authors, which is not necessarily shared or endorsed by their respective employers/sponsors.

\bibliographystyle{icml2024}
\bibliography{bibliography}

\begin{thebibliography}{57}
\providecommand{\natexlab}[1]{#1}
\providecommand{\url}[1]{\texttt{#1}}
\expandafter\ifx\csname urlstyle\endcsname\relax
  \providecommand{\doi}[1]{doi: #1}\else
  \providecommand{\doi}{doi: \begingroup \urlstyle{rm}\Url}\fi

\bibitem[Batatia et~al.(2022)Batatia, Kov{\' a}cs, Simm, Ortner, and Cs{\' a}nyi]{batatia2022MACE}
Batatia, I., Kov{\' a}cs, D.~P., Simm, G. N.~C., Ortner, C., and Cs{\' a}nyi, G.
\newblock Mace: Higher {Order} {Equivariant} {Message} {Passing} {Neural} {Networks} for {Fast} and {Accurate} {Force} {Fields}.
\newblock In \emph{Conference on {Neural} {Information} {Processing} {Systems} ({NeurIPS})}, 2022.

\bibitem[Bekkers(2020)]{bekkers2020bspline}
Bekkers, E.
\newblock B-spline {CNNs} on {Lie} groups.
\newblock \emph{International Conference on Learning Representations (ICLR)}, 2020.

\bibitem[Bekkers et~al.(2018)Bekkers, Lafarge, Veta, Eppenhof, Pluim, and Duits]{bekkers2018roto}
Bekkers, E.~J., Lafarge, M.~W., Veta, M., Eppenhof, K. A.~J., Pluim, J. P.~W., and Duits, R.
\newblock Roto-{Translation} {Covariant} {Convolutional} {Networks} for {Medical} {Image} {Analysis}.
\newblock In \emph{International {Conference} on {Medical} {Image} {Computing} and {Computer}-{Assisted} {Intervention} ({MICCAI})}, 2018.

\bibitem[Bogatskiy et~al.(2020)Bogatskiy, Anderson, Offermann, Roussi, Miller, and Kondor]{Bogatskiy2020LorentzGE}
Bogatskiy, A., Anderson, B.~M., Offermann, J.~T., Roussi, M., Miller, D.~W., and Kondor, R.
\newblock Lorentz group equivariant neural network for particle physics.
\newblock In \emph{International {Conference} on {Machine} {Learning} ({ICML})}, 2020.
\newblock URL \url{https://api.semanticscholar.org/CorpusID:219531086}.

\bibitem[Bogatskiy et~al.(2022)Bogatskiy, Hoffman, Miller, and Offermann]{bogatskiy2022PELICAN}
Bogatskiy, A., Hoffman, T., Miller, D.~W., and Offermann, J.~T.
\newblock Pelican: permutation equivariant and lorentz invariant or covariant aggregator network for particle physics.
\newblock \emph{Advances in Neural Information Processing Systems}, 2022.

\bibitem[Brandstetter et~al.(2023)Brandstetter, Berg, Welling, and Gupta]{Brandstetter2022CliffordNL}
Brandstetter, J., Berg, R. v.~d., Welling, M., and Gupta, J.~K.
\newblock Clifford {Neural} {Layers} for {PDE} {Modeling}.
\newblock In \emph{International {Conference} on {Learning} {Representations} ({ICLR})}, 2023.

\bibitem[Brehmer et~al.(2023)Brehmer, Haan, Behrends, and Cohen]{brehmer2023geometric}
Brehmer, J., Haan, P.~d., Behrends, S., and Cohen, T.~S.
\newblock Geometric {Algebra} {Transformer}.
\newblock In \emph{Conference on {Neural} {Information} {Processing} {Systems} ({NeurIPS})}, 2023.

\bibitem[Cesa et~al.(2022)Cesa, Lang, and Weiler]{cesa2021ENsteerable}
Cesa, G., Lang, L., and Weiler, M.
\newblock A {Program} to {Build} {E}({N})-{Equivariant} {Steerable} {CNNs}.
\newblock In \emph{International {Conference} on {Learning} {Representations} ({ICLR})}, 2022.

\bibitem[Cohen \& Welling(2016)Cohen and Welling]{Cohen2016-GCNN}
Cohen, T. and Welling, M.
\newblock Group {Equivariant} {Convolutional} {Networks}.
\newblock In \emph{International {Conference} on {Machine} {Learning} ({ICML})}, pp.\  2990--2999, 2016.

\bibitem[Cohen et~al.(2019{\natexlab{a}})Cohen, Weiler, Kicanaoglu, and Welling]{cohen2019gaugeIco}
Cohen, T., Weiler, M., Kicanaoglu, B., and Welling, M.
\newblock Gauge {Equivariant} {Convolutional} {Networks} and the {Icosahedral} {CNN}.
\newblock In \emph{International {Conference} on {Machine} {Learning} ({ICML})}, pp.\  1321--1330, 2019{\natexlab{a}}.

\bibitem[Cohen \& Welling(2017)Cohen and Welling]{Cohen2017-STEER}
Cohen, T.~S. and Welling, M.
\newblock Steerable {CNNs}.
\newblock In \emph{International {Conference} on {Learning} {Representations} ({ICLR})}, 2017.

\bibitem[Cohen et~al.(2019{\natexlab{b}})Cohen, Geiger, and Weiler]{Cohen2019-generaltheory}
Cohen, T.~S., Geiger, M., and Weiler, M.
\newblock A {General} {Theory} of {Equivariant} {CNNs} on {Homogeneous} {Spaces}.
\newblock In \emph{Conference on {Neural} {Information} {Processing} {Systems} ({NeurIPS})}, 2019{\natexlab{b}}.

\bibitem[Filipovich \& Hughes(2022)Filipovich and Hughes]{filipovich2022PyCharge}
Filipovich, M.~J. and Hughes, S.
\newblock Pycharge: an open-source python package for self-consistent electrodynamics simulations of lorentz oscillators and moving point charges.
\newblock \emph{Computer Physics Communications}, 274:\penalty0 108291, 2022.

\bibitem[Finzi et~al.(2020)Finzi, Stanton, Izmailov, and Wilson]{finzi2020generalizing}
Finzi, M., Stanton, S., Izmailov, P., and Wilson, A.~G.
\newblock Generalizing {Convolutional} {Neural} {Networks} for {Equivariance} to {Lie} {Groups} on {Arbitrary} {Continuous} {Data}.
\newblock In \emph{International {Conference} on {Machine} {Learning} ({ICML})}, pp.\  3165--3176, 2020.

\bibitem[Finzi et~al.(2021)Finzi, Welling, and Wilson]{finzi2021practical}
Finzi, M., Welling, M., and Wilson, A.~G.
\newblock A {Practical} {Method} for {Constructing} {Equivariant} {Multilayer} {Perceptrons} for {Arbitrary} {Matrix} {Groups}.
\newblock In \emph{International {Conference} on {Machine} {Learning} ({ICML})}, 2021.

\bibitem[Geiger et~al.(2020)Geiger, Smidt, Alby, Miller, Boomsma, Dice, Lapchevskyi, Weiler, Tyszkiewicz, Batzner, et~al.]{geiger2020e3nn}
Geiger, M., Smidt, T., Alby, M., Miller, B.~K., Boomsma, W., Dice, B., Lapchevskyi, K., Weiler, M., Tyszkiewicz, M., Batzner, S., et~al.
\newblock Euclidean neural networks: e3nn.
\newblock \emph{Zenodo. https://doi. org/10.5281/zenodo}, 2020.

\bibitem[Ghosh \& Gupta(2019)Ghosh and Gupta]{ghosh2019scale}
Ghosh, R. and Gupta, A.
\newblock Scale {Steerable} {Filters} for {Locally} {Scale}-{Invariant} {Convolutional} {Neural} {Networks}.
\newblock \emph{ArXiv}, abs/1906.03861, 2019.

\bibitem[Gong et~al.(2022)Gong, Meng, Zhang, Qu, Li, Qian, Du, Ma, and Liu]{Gong2022AnEL}
Gong, S., Meng, Q., Zhang, J., Qu, H., Li, C., Qian, S., Du, W., Ma, Z.-M., and Liu, T.-Y.
\newblock An efficient lorentz equivariant graph neural network for jet tagging.
\newblock \emph{Journal of High Energy Physics}, 2022, 2022.
\newblock URL \url{https://api.semanticscholar.org/CorpusID:246063615}.

\bibitem[Gupta \& Brandstetter(2022)Gupta and Brandstetter]{gupta2022pdearena}
Gupta, J.~K. and Brandstetter, J.
\newblock Towards {Multi}-spatiotemporal-scale {Generalized} {PDE} {Modeling}.
\newblock \emph{ArXiv}, abs/2209.15616, 2022.

\bibitem[Haan et~al.(2021)Haan, Weiler, Cohen, and Welling]{deHaan2020meshCNNs}
Haan, P.~d., Weiler, M., Cohen, T., and Welling, M.
\newblock Gauge {Equivariant} {Mesh} {CNNs}: Anisotropic convolutions on geometric graphs.
\newblock In \emph{International {Conference} on {Learning} {Representations} ({ICLR})}, 2021.

\bibitem[Helwig et~al.(2023)Helwig, Zhang, Fu, Kurtin, Wojtowytsch, and Ji]{helwig2023GFNO}
Helwig, J., Zhang, X., Fu, C., Kurtin, J., Wojtowytsch, S., and Ji, S.
\newblock Group {Equivariant} {Fourier} {Neural} {Operators} for {Partial} {Differential} {Equations}.
\newblock In \emph{International {Conference} on {Machine} {Learning} ({ICML})}, 2023.

\bibitem[Hendrycks \& Gimpel(2016)Hendrycks and Gimpel]{hendrycks2016gaussian}
Hendrycks, D. and Gimpel, K.
\newblock Gaussian {Error} {Linear} {Units} ({GELUs}).
\newblock \emph{arXiv: Learning}, 2016.

\bibitem[Hestenes(1968)]{hestenes1968multivector}
Hestenes, D.
\newblock Multivector calculus.
\newblock \emph{J. Math. Anal. Appl}, 24\penalty0 (2):\penalty0 313--325, 1968.

\bibitem[Hestenes(2015)]{hestenes2015space}
Hestenes, D.
\newblock \emph{Space-time algebra}.
\newblock Springer, 2015.

\bibitem[Hitzer(2002)]{hitzer2002multivector}
Hitzer, E.~M.
\newblock Multivector differential calculus.
\newblock \emph{Advances in Applied Clifford Algebras}, 12:\penalty0 135--182, 2002.

\bibitem[Holl et~al.(2020)Holl, Thuerey, and Koltun]{holl2020phiflow}
Holl, P., Thuerey, N., and Koltun, V.
\newblock Learning to {Control} {PDEs} with {Differentiable} {Physics}.
\newblock In \emph{International {Conference} on {Learning} {Representations} ({ICLR})}, 2020.

\bibitem[Jenner \& Weiler(2022)Jenner and Weiler]{jenner2021steerablePDO}
Jenner, E. and Weiler, M.
\newblock Steerable {Partial} {Differential} {Operators} for {Equivariant} {Neural} {Networks}.
\newblock In \emph{International {Conference} on {Learning} {Representations} ({ICLR})}, 2022.

\bibitem[Kasieczka et~al.(2017)Kasieczka, Plehn, Russell, and Schell]{kasieczka2017deep}
Kasieczka, G., Plehn, T., Russell, M., and Schell, T.
\newblock Deep-learning top taggers or the end of qcd?
\newblock \emph{Journal of High Energy Physics}, 2017\penalty0 (5):\penalty0 1--22, 2017.

\bibitem[Kingma \& Ba(2015)Kingma and Ba]{Kingma2014AdamAM}
Kingma, D.~P. and Ba, J.
\newblock Adam: A {Method} for {Stochastic} {Optimization}.
\newblock In \emph{International {Conference} on {Learning} {Representations} ({ICLR})}, volume abs/1412.6980, 2015.

\bibitem[Lang \& Weiler(2021)Lang and Weiler]{lang2020WignerEckart}
Lang, L. and Weiler, M.
\newblock A {Wigner}-{Eckart} {Theorem} for {Group} {Equivariant} {Convolution} {Kernels}.
\newblock In \emph{International {Conference} on {Learning} {Representations} ({ICLR})}, 2021.

\bibitem[Lasenby et~al.(1993)Lasenby, Doran, and Gull]{lasenby1993multivector}
Lasenby, A., Doran, C., and Gull, S.
\newblock A multivector derivative approach to lagrangian field theory.
\newblock \emph{Foundations of Physics}, 23\penalty0 (10):\penalty0 1295--1327, 1993.

\bibitem[Li et~al.(2024)Li, Qu, Qian, Meng, Gong, Zhang, Liu, and Li]{li2024doesLorentzSymmetricDesign}
Li, C., Qu, H., Qian, S., Meng, Q., Gong, S., Zhang, J., Liu, T.-Y., and Li, Q.
\newblock Does lorentz-symmetric design boost network performance in jet physics?
\newblock \emph{Physical Review D}, 109\penalty0 (5):\penalty0 056003, 2024.

\bibitem[Li et~al.(2021)Li, Kovachki, Azizzadenesheli, Liu, Bhattacharya, Stuart, and Anandkumar]{li2020fourier}
Li, Z., Kovachki, N.~B., Azizzadenesheli, K., Liu, B., Bhattacharya, K., Stuart, A.~M., and Anandkumar, A.
\newblock Fourier {Neural} {Operator} for {Parametric} {Partial} {Differential} {Equations}.
\newblock In \emph{International {Conference} on {Learning} {Representations} ({ICLR})}, 2021.

\bibitem[Lindeberg(2009)]{lindeberg2009scaleSpace}
Lindeberg, T.
\newblock Scale-space.
\newblock 2009.

\bibitem[Loshchilov \& Hutter(2017)Loshchilov and Hutter]{Loshchilov2016SGDRSG}
Loshchilov, I. and Hutter, F.
\newblock Sgdr: Stochastic {Gradient} {Descent} with {Warm} {Restarts}.
\newblock In \emph{International {Conference} on {Learning} {Representations} ({ICLR})}, 2017.

\bibitem[Marcos et~al.(2018)Marcos, Kellenberger, Lobry, and Tuia]{marcos2018scale}
Marcos, D., Kellenberger, B., Lobry, S., and Tuia, D.
\newblock Scale equivariance in {CNNs} with vector fields.
\newblock \emph{arXiv preprint arXiv:1807.11783}, 2018.

\bibitem[Orb{\' a}n \& Mira(2021)Orb{\' a}n and Mira]{Orban2020DimensionalSO}
Orb{\' a}n, X.~P. and Mira, J.
\newblock Dimensional scaffolding of electromagnetism using geometric algebra.
\newblock \emph{European Journal of Physics}, 42\penalty0 (1):\penalty0 015204, 2021.

\bibitem[Romero et~al.(2022)Romero, Kuzinna, Bekkers, Tomczak, and Hoogendoorn]{romero2021ckconv}
Romero, D.~W., Kuzinna, A., Bekkers, E.~J., Tomczak, J.~M., and Hoogendoorn, M.
\newblock {CKConv: Continuous Kernel Convolutions for Sequential Data}.
\newblock In \emph{International {Conference} on {Learning} {Representations} ({ICLR})}, 2022.

\bibitem[Romero et~al.(2023)Romero, Bekkers, Tomczak, and Hoogendoorn]{romero2020wavelet}
Romero, D.~W., Bekkers, E., Tomczak, J.~M., and Hoogendoorn, M.
\newblock Wavelet networks: Scale-translation equivariant learning from raw time-series.
\newblock \emph{Transactions on Machine Learning Research}, 2023.

\bibitem[Ruhe et~al.(2023{\natexlab{a}})Ruhe, Brandstetter, and Forr{\' e}]{ruhe2023CliffordGroupEquivariantNNs}
Ruhe, D., Brandstetter, J., and Forr{\' e}, P.
\newblock Clifford {Group} {Equivariant} {Neural} {Networks}.
\newblock In \emph{Conference on {Neural} {Information} {Processing} {Systems} ({NeurIPS})}, volume abs/2305.11141, 2023{\natexlab{a}}.

\bibitem[Ruhe et~al.(2023{\natexlab{b}})Ruhe, Gupta, Keninck, Welling, and Brandstetter]{ruhe2023geometric}
Ruhe, D., Gupta, J.~K., Keninck, S.~D., Welling, M., and Brandstetter, J.
\newblock Geometric {Clifford} {Algebra} {Networks}.
\newblock In \emph{International {Conference} on {Machine} {Learning} ({ICML})}, pp.\  29306--29337, 2023{\natexlab{b}}.

\bibitem[Shutty \& Wierzynski(2022)Shutty and Wierzynski]{shutty2020learning}
Shutty, N. and Wierzynski, C.
\newblock {Computing Representations for Lie Algebraic Networks}.
\newblock \emph{NeurIPS 2022 Workshop on Symmetry and Geometry in Neural Representations}, 2022.

\bibitem[Sosnovik et~al.(2020)Sosnovik, Szmaja, and Smeulders]{Sosnovik2020scale}
Sosnovik, I., Szmaja, M., and Smeulders, A. W.~M.
\newblock Scale-{Equivariant} {Steerable} {Networks}.
\newblock In \emph{International {Conference} on {Learning} {Representations} ({ICLR})}, 2020.

\bibitem[Spinner et~al.(2024)Spinner, Bres{\'o}, de~Haan, Plehn, Thaler, and Brehmer]{spinner2024lorentz}
Spinner, J., Bres{\'o}, V., de~Haan, P., Plehn, T., Thaler, J., and Brehmer, J.
\newblock Lorentz-equivariant geometric algebra transformers for high-energy physics.
\newblock \emph{arXiv preprint arXiv:2405.14806}, 2024.

\bibitem[Villar et~al.(2021)Villar, Hogg, Storey-Fisher, Yao, and Blum-Smith]{villar2021scalarsAreUniversal}
Villar, S., Hogg, D.~W., Storey-Fisher, K., Yao, W., and Blum-Smith, B.
\newblock Scalars are universal: Equivariant machine learning, structured like classical physics.
\newblock \emph{Advances in Neural Information Processing Systems}, 34:\penalty0 28848--28863, 2021.

\bibitem[Wang et~al.(2021)Wang, Walters, and Yu]{wang2020incorporating}
Wang, R., Walters, R., and Yu, R.
\newblock Incorporating {Symmetry} into {Deep} {Dynamics} {Models} for {Improved} {Generalization}.
\newblock In \emph{International {Conference} on {Learning} {Representations} ({ICLR})}, 2021.

\bibitem[Wang(2022)]{wang2022extensions}
Wang, S.
\newblock Extensions to the navier--stokes equations.
\newblock \emph{Physics of Fluids}, 34\penalty0 (5), 2022.

\bibitem[Weiler \& Cesa(2019)Weiler and Cesa]{Weiler2019_E2CNN}
Weiler, M. and Cesa, G.
\newblock General {E}(2)-{Equivariant} {Steerable} {CNNs}.
\newblock In \emph{Conference on {Neural} {Information} {Processing} {Systems} ({NeurIPS})}, pp.\  14334--14345, 2019.

\bibitem[Weiler et~al.(2018{\natexlab{a}})Weiler, Geiger, Welling, Boomsma, and Cohen]{3d_steerableCNNs}
Weiler, M., Geiger, M., Welling, M., Boomsma, W., and Cohen, T.
\newblock 3d {Steerable} {CNNs}: Learning {Rotationally} {Equivariant} {Features} in {Volumetric} {Data}.
\newblock In \emph{Conference on {Neural} {Information} {Processing} {Systems} ({NeurIPS})}, pp.\  10402--10413, 2018{\natexlab{a}}.

\bibitem[Weiler et~al.(2018{\natexlab{b}})Weiler, Hamprecht, and Storath]{weiler2018SFCNN}
Weiler, M., Hamprecht, F.~A., and Storath, M.
\newblock Learning {Steerable} {Filters} for {Rotation} {Equivariant} {CNNs}.
\newblock In \emph{Computer {Vision} and {Pattern} {Recognition} ({CVPR})}, 2018{\natexlab{b}}.

\bibitem[Weiler et~al.(2021)Weiler, Forr{\'e}, Verlinde, and Welling]{weiler2021coordinateIndependent}
Weiler, M., Forr{\'e}, P., Verlinde, E., and Welling, M.
\newblock {Coordinate Independent Convolutional Networks -- Isometry and Gauge Equivariant Convolutions on Riemannian Manifolds}.
\newblock \emph{arXiv preprint arXiv:2106.06020}, 2021.

\bibitem[Weiler et~al.(2023)Weiler, Forré, Verlinde, and Welling]{weiler2023EquivariantAndCoordinateIndependentCNNs}
Weiler, M., Forré, P., Verlinde, E., and Welling, M.
\newblock \emph{{Equivariant and Coordinate Independent Convolutional Networks}}.
\newblock 2023.
\newblock URL \url{https://maurice-weiler.gitlab.io/cnn_book/EquivariantAndCoordinateIndependentCNNs.pdf}.

\bibitem[Worrall \& Welling(2019)Worrall and Welling]{Worrall2019DeepScaleSpaces}
Worrall, D.~E. and Welling, M.
\newblock Deep {Scale}-spaces: Equivariance {Over} {Scale}.
\newblock In \emph{Conference on {Neural} {Information} {Processing} {Systems} ({NeurIPS})}, pp.\  7364--7376, 2019.

\bibitem[Wu \& He(2018)Wu and He]{wu2018group}
Wu, Y. and He, K.
\newblock Group {Normalization}.
\newblock In \emph{European {Conference} on {Computer} {Vision} ({ECCV})}, pp.\  3--19, 2018.

\bibitem[Zhang \& Williams(2022)Zhang and Williams]{zhang2022similarity}
Zhang, X. and Williams, L.~R.
\newblock Similarity equivariant linear transformation of joint orientation-scale space representations.
\newblock \emph{arXiv preprint arXiv:2203.06786}, 2022.

\bibitem[Zhdanov et~al.(2023)Zhdanov, Hoffmann, and Cesa]{zhdanov2022implicit}
Zhdanov, M., Hoffmann, N., and Cesa, G.
\newblock Implicit {Convolutional} {Kernels} for {Steerable} {CNNs}.
\newblock In \emph{Conference on {Neural} {Information} {Processing} {Systems} ({NeurIPS})}, 2023.

\bibitem[Zhu et~al.(2022)Zhu, Qiu, Calderbank, Sapiro, and Cheng]{zhu2019scale}
Zhu, W., Qiu, Q., Calderbank, A.~R., Sapiro, G., and Cheng, X.
\newblock Scaling-{Translation}-{Equivariant} {Networks} with {Decomposed} {Convolutional} {Filters}.
\newblock \emph{Journal of Machine Learning Research (JMLR)}, 23:\penalty0 68:1--68:45, 2022.

\end{thebibliography}

\clearpage
\appendix

\twocolumn[  
   \begin{@twocolumnfalse}
      \vspace*{3ex}
      {\LARGE\bf Appendix}
      \vspace*{4ex}
   \end{@twocolumnfalse}
]

\section{Implementation details}
\label{apx:implementation_details}

This appendix provides details on the implementation of CS-CNNs.%
\footnote{\url{https://github.com/maxxxzdn/clifford-group-equivariant-cnns}}

Before detailing the Clifford-steerable kernels and convolutions, we first define the following ``kernel shell'' operation, which is used twice in the final kernel computation.
Recall that given the base space $\Rpq$ equipped with the inner product $\ip^{p,q}$, we have a Clifford algebra $\Cl(\Rpq)$.
We want to compute a kernel that maps from $\cin$ multivector input channels to $\cout$ multivector output channels, i.e., 
\begin{align}
   K:\, \Rpq \to \Hom_{\vecrm} \!\big(\! \Cl(\Rpq)^\cin, \Cl(\Rpq)^\cout \big)\,.
\end{align}
$K$ is defined on any $v \in \Rpq$, which allows to model \emph{point clouds}.
In this work, however, we sample it on a grid of shape $X_1, \dots, X_{p+q}$,
analogously to typical CNNs.

\subsection{Clifford Embedding}
We briefly discuss how one is able to embed scalars and vectors into the Clifford algebra.
This extends to other grades such as bivectors.

Let $s \in \Rbb$ and $v \in \Rpq$.
Using the natural isomorphisms
$\mathcal{E}^{(0)}: \Rbb \xrightarrow{\sim} \Cl(\Rpq)^{(0)}$ and
$\mathcal{E}^{(1)}: \Rpq \xrightarrow{\sim} \Cl(\Rpq)^{(1)}$,
we embed the scalar and vector components into a multivector as
{%
\abovedisplayskip=-2pt%
\belowdisplayskip=6pt%
\begin{align}
   m \,:=\, \mathcal{E}^{(0)}(s) + \mathcal{E}^{(1)}(v)\ \ \in\ \Cl(\Rpq)\,.
\end{align}
}%
This is a standard operation in Clifford algebra computations, where we leave the other components of the multivector zero.
We denote such embeddings in the algorithms provided below jointly as ``$\textsc{cl\_embed}([s,v])$''.

\subsection{Scalar Orbital Parameterizations}
Note that the $\O(p,q)$-steerability constraint
\begin{align*}
    K(gv) \,\stackrel{!}{=}\,  \rcl[\cout](g)\, K(v)\, \rcl[\cin](g^{\minus1})
    \,=:\, \rho_{\Hom}(g) (K(v))
    \\[2pt] \forall\ v\in\Rpq,\ g\in\O(p,q)
\end{align*}
couples kernel values \emph{within} but \emph{not across} different $\O(p,q)$-orbits
{%
\abovedisplayskip=-2pt%
\belowdisplayskip=6pt%
\begin{align}
    \O(p,q).v \,:=&\,\ \{ gv \,|\, g\in \O(p,q) \} \\[3pt]
    =&\,\ \{ w \,|\, \eta(w,w)=\eta(v,v) \} \,. \notag
\end{align}
}%
The first line here is the usual definition of group orbits,
while the second line makes use of the Def.~\ref{def:Opq} of pseudo-orthogonal groups as metric-preserving linear maps.

\begin{figure}
\vspace{-10pt}
\begin{algorithm}[H]
   \caption{\textsc{ScalarShell}}
   \label{alg:kernel_shell}
   \begin{algorithmic}
      \STATE {\bfseries input} $\ip^{p,q}$, $v \in \Rpq$, $\sigma$.
      \STATE $s \gets \sgn \left( \ip^{p,q}(v, v) \right) \cdot \exp \left( -\frac{ \left | \ip^{p,q}(v, v) \right | }{2\sigma^2} \right)$
      \STATE {\bfseries return} $s$
   \end{algorithmic}
\end{algorithm}

\vspace{-12pt}
\begin{algorithm}[H]
   \caption{\textsc{CliffordSteerableKernel}}
   \label{alg:cskernel}
   \begin{algorithmic}
      \STATE {\bfseries input}
      $p, q$
      $\Lambda$, $\cin$, $\cout$, $\left( v_n \right)_{n=1}^N \in \Rpq$, $\mathrm{CGENN}$
      \STATE {\bfseries output} $\kb \in \Rbb^{(\cout \cdot 2^{d}) \times (\cin \cdot 2^{d}) \times X_1 \times \dots \times X_{p+q}}$
      \STATE
      \STATE \algcom{Weighted Cayley.}
      \FOR{$i=1\dots\cin$, $o=1\dots\cout$, $a, b, c=1\dots p+q$}
      \STATE $w^{c}_{oiab} \sim \mathcal{N}(0, \frac{1}{\sqrt{\cin \cdot N}})$ \algcom{Weight init.}
      \STATE $W^{c}_{oiab} \gets \Lambda_{ab}^c \cdot w^{c}_{oiab}$
      \ENDFOR
      \STATE 
      \STATE $\sigma \sim \mathcal{U}(0.4, 0.6)$ \algcom{Init if needed.}
      \STATE \algcom{Compute scalars.}
      \STATE $ s_n \leftarrow \textsc{ScalarShell}(\ip^{p,q}, v_n, \sigma)$
      \STATE \algcom{Embed $s$ and $v$ into a multivector.}
      \STATE $x_n \leftarrow \textsc{cl\_embed}\left(\left[ s_n, v_n \right]\right)$ 
      \STATE 
      \STATE \algcom{Evaluate kernel network.}
      \STATE $\kb_n^{io}:= \mathrm{CGENN}\left( x_n \right)$
      \STATE
      \STATE \algcom{Reshape to kernel matrix.}
      \STATE $\kb \gets \textsc{reshape}\left( \kb, (N, \cout, \cin) \right)$
      \STATE
      \STATE \algcom{Compute kernel mask.}
      \FOR{$i=1\dots\cin$, $o=1\dots\cout$, $k=0\dots p+q$}
      \STATE $\sigma_{kio} \sim \mathcal{U}(0.4, 0.6)$ \algcom{Init if needed.}
      \STATE $s_{noi}^{{k}} \leftarrow \textsc{ScalarShell}(\ip^{p,q}, v_n, \sigma_{kio})$ 
      \ENDFOR
      \STATE
      \STATE $\kb_{noi}^{(k)} \gets  \kb_{noi}^{(k)} \cdot s_{noi}^{k}$ \algcom{Mask kernel.}
      \STATE
      \STATE \algcom{Kernel head.}
      \STATE $\kb_{noib}^{c} \gets \sum_{a=1}^{2^{d}} \kb_{noi}^{a} \cdot W^{c}_{oiab}$ \algcom{Partial weighted geometric product.}
      \STATE 
      \STATE \algcom{Reshape to final kernel.}
      \STATE $\kb \gets \textsc{reshape}\left( \kb, \left(\cout \cdot 2^{d}, \cin \cdot 2^{d}, X_1, \dots, X_{p+q} \right)\right) $
      \STATE {\bfseries return} $\kb$
   \end{algorithmic}
\end{algorithm}

\vspace{-12pt}
\begin{algorithm}[H]
   \caption{\textsc{CliffordSteerableConvolution}}
   \label{alg:csconv}
   \begin{algorithmic}
      \STATE {\bfseries input} $F_{\mathrm{in}}$, $\left(v_n\right)_{n=1}^N$, \textsc{Args}
      \STATE {\bfseries output} $F_{\mathrm{out}}$
      \STATE $F_{\mathrm{in}} \leftarrow$ \textsc{reshape}$(F_{\mathrm{in}}, (B, \cin \cdot 2^{d}, Y_1, \dots, Y_{p+q}))$
      \STATE \mbox{$\kb \leftarrow$ \textsc{CliffordSteerableKernel}$(\left(v_n\right)_{n=1}^N, \textsc{Args})$}
      \STATE $F_{\mathrm{out}} \gets \textsc{Conv}(F_{\mathrm{in}}, \kb)$
      \STATE $F_{\mathrm{out}} \leftarrow$ \textsc{reshape}$(F_{\mathrm{out}}, (B, \cout, Y_1, \dots, Y_{p+q}, 2^{d}))$
      \STATE {\bfseries return} $F_{\mathrm{out}}$
   \end{algorithmic}
\end{algorithm}
\end{figure}

In the positive-definite case of $\O(n)$, this means that the only degree of freedom is the radial distance from the origin, resulting in (hyper)spherical orbits.
Examples of such kernels can be seen in Fig.~\ref{fig:implicit_steerable_kernel_visual_O2}.
Other radial kernels are obtained typically through e.g. Gaussian shells, Bessel functions, etc.

In the nondefinite case of $\O(p,q)$, the orbits are hyperboloids, 
resulting in hyperboloid shells for e.g. the Lorentz group $\O(1,3)$
as in
\begin{icml_version}
Fig.~\ref{fig:implicit_steerable_kernel_diagram} (left).
\end{icml_version}
\begin{arxiv_version}
Fig.~\ref{fig:implicit_steerable_kernel_visual}.
\end{arxiv_version}
In this case, we extend the input to the kernel with a scalar component that now relates to the hyperbolic (squared) distance from the origin.

Specifically, we define an exponentially decaying $\ip^{p,q}$-induced (parameterized) scalar \emph{orbital shell} (analogous to the radial shell of typical Steerable CNNs) in the following way.
We parameterize a kernel width $\sigma$ and compute the shell value as
\begin{align}
   s_\sigma(v) = \sgn \left( \ip^{p,q}(v, v) \right) \cdot \exp \left( -\frac{ \left | \ip^{p,q}(v, v) \right | }{2\sigma^2} \right).
\end{align}
The width $\sigma \sim \mathcal{U}(0.4, 0.6)$ is, inspired by \cite{cesa2021ENsteerable}, initialized with a uniform distribution.
Since $\ip^{p,q}(v, v)$ can be negative in the nondefinite case, we take the absolute value and multiply the result by the sign of $\ip^{p,q}(v, v)$.
Computation of the kernel shell (\textsc{ScalarShell}) is outlined in Function \ref{alg:kernel_shell}.
Intuitively, we obtain exponential decay for points far from the origin.
However, the sign of the inner product ensures that we clearly disambiguate between ``light-like'' and ``space-like'' points.
I.e., they are close in Euclidean distance but far in the $\ip^{p,q}$-induced distance.
Note that this choice of parameterizing scalar parts of the kernel is not unique and can be experimented with.

\subsection{Kernel Network}
Recall from Section \ref{sec:clifford_steerable_CNNs_main} that the kernel $K$ is parameterized by a \emph{kernel network}, which is a map
\begin{align}
   \KB:\, \Rpq \to \Cl(\Rpq)^{\cout \times \cin}
\end{align}
implemented as an $\O(p,q)$-equivariant CGENN.
It consists of (linearly weighted) geometric product layers followed by multivector activations.

Let $\{v_n\}_{n=1}^N$ be a set 
of sampling points, where $N := X_1 \cdot \ldots \cdot X_{p+q}$.
In the remainder, we leave iteration over $n$ implicit and assume that the operations are performed for each $n$.
We obtain a sequence of scalars using the kernel shell
{%
\abovedisplayskip=-2pt%
\belowdisplayskip=6pt%
\begin{align}
   s_n &:= s_{\sigma}(v_n) \,.
\end{align}
}%

The input to the kernel network is a batch of multivectors 
\begin{align}
x_n  := \textsc{cl\_embed}([s_n, v_n]) \,.
\end{align}
I.e., taking $s$ and $v$ together, they form the scalar and vector components of the CEGNN's input multivector.
We found including the scalar component crucial for the correct scaling of the kernel to the range of the grid.

Let $i=1,\dots,\cin$ and $o=1,\dots,\cout$ be a sequence of input and output channels.
We then have the kernel network output
\begin{align}
    \kb_{noi}
   \ :=\ \KB(v_{n})_{oi}
   \ :=\ \mathrm{CGENN}(x_n)_{oi} \,,
\end{align}
where $\kb_{noi} \in \Cl(\Rpq)$ is the output of the kernel network for the input multivector $x_n$ (embedded from the scalar $s_n$ and vector $v_n$).
Once the output stack of multivectors is computed, we reshape it
from shape $(N,\cout\cdot\cin)$ to shape $(N,\cout,\cin)$,
resulting in the kernel matrix
\begin{align}
   \kb \gets \textsc{reshape}\left( \kb, (N, \cout, \cin) \right)\,,
\end{align}
where now $\kb \in \Cl(\Rpq)^{N \times \cout \times \cin}$.
Note that $k_n \in \Cl(\Rpq)^{\cout \times \cin}$ is a matrix of multivectors, as desired.

\subsection{Masking}
We compute a second set of scalars which will act as a mask for the kernel.
This is inspired by Steerable CNNs to ensure that the (e.g., radial) orbits of compact groups are fully represented in the kernel, as shown in Figure \ref{fig:implicit_steerable_kernel_visual_O2}.
However, note that for $\O(p,q)$-steerable kernels
with both $p,q\neq0$
this is never fully possible since $\O(p,q)$ is in general not compact,
and all orbits except for the origin extend to infinity.
This can e.g. be seen in the hyperbolic-shaped kernels in Figure \ref{fig:implicit_steerable_kernel_diagram}.

For equivariance to hold in practice, whole orbits would need to be present in the kernel,
which is not possible if the kernel is sampled on a grid with finite support.
This is not specific to our architecture, but is a consequence of the orbits' non-compactness.
The same issue arises e.g. in \emph{scale-equivariant} CNNs
\cite{romero2020wavelet,Worrall2019DeepScaleSpaces,ghosh2019scale,Sosnovik2020scale,bekkers2020bspline,zhu2019scale,marcos2018scale,zhang2022similarity}.
Further experimenting is needed to understand the impact of
truncating the kernel
on the final performance of the model.

We invoke the kernel shell function again to compute a mask for each $k=0,\dots,{p+q}$, $i=1,\dots,\cin$, $o=1,\dots,\cout$.
That is, we have a weight array $\sigma_{kio}$, initialized identically as earlier, which is reused for each position in the grid.
\begin{align}
   s^{k}_{nio} & :=  s_{\sigma_{kio}}(v_n) \,.
\end{align}

We then mask the kernel by scalar multiplication with the shell, i.e., 
\begin{align}
   \kb_{kio}^{(k)} \gets \kb_{nio}^{(k)} \cdot s_{nio}^{k} \,.
\end{align}

\subsection{Kernel Head}
\label{sec:kernel_head}
Finally, the \emph{kernel head} turns the ``multivector matrices'' into a kernel that can be used by, for example, \texttt{torch.nn.ConvNd} or \texttt{jax.lax.conv}.
This is done by a partial evaluation of a (weighted) geometric product.
Let $\mu, \nu \in \Cl(\Rpq)$ be two multivectors.
Recall that $\dim \Cl(\Rpq) = 2^{p+q} = 2^d$.
\begin{align}
   (\mu \sbull \nu)^C = \sum\nolimits_{A} \sum\nolimits_{B} \mu^{A} \cdot \nu^{B} \cdot \Lambda^C_{AB}\,,
\end{align}
where $A, B, C \subseteq [d]$ are multi-indices running over the $2^d$ basis elements of $\Cl(\Rpq)$.
Here, $\Lambda \in \Rbb^{2^{d} \times 2^{d} \times 2^{d}}$ is the \emph{Clifford multiplication table} of $\Cl(\Rpq)$, also sometimes called a \emph{Cayley table}.
It is defined as 
\begin{align}
   \Lambda_{A, B}^C = \begin{cases} 0 & \text{if } A \triangle B \neq C \\ \sgn^{A, B} \cdot  \bar{\ip}(e_{A \cap B}, e_{A \cap B}) & \text{if } A \triangle B = C \end{cases}\,.
\end{align}
Here, $\triangle$ denotes the symmetric difference of sets, i.e., $A \triangle B = (A \setminus B) \cup (B \setminus A)$.
Further, 
\begin{align}
   \sgn^{A, B} := (-1)^{n_{A, B}},
\end{align}
where $n_{A, B}$ is the number of adjacent ``swaps'' one needs to fully sort the tuple $(i_1, \dots, i_s, j_1, \dots, j_t)$, where $A = \{i_1, \dots, i_s\}$ and $B = \{j_1, \dots, j_t\}$.
In the following, we identify the multi-indices $A$, $B$, and $C$ with a relabeling $a$, $b$, and $c$ that run from $1$ to $2^d$.

Altogether, $\Lambda$ defines a multivector-valued bilinear form which represents the geometric product relative to the chosen multivector basis.
We can weight its entries with parameters $w_{oiab}^{c} \in \Rbb$,
initialized as $w_{oiab}^{c} \sim \mathcal{N}(0, \frac{1}{\sqrt{\cin \cdot N}})$.
These weightings can be redone for each input channel and output channel, as such we have a weighted Cayley table 
$W \in \Rbb^{2^{d} \times 2^{d} \times 2^{d} \times \cin \times \cout}$ 
with entries
\begin{align}
\label{eq:kernel_head_weights}
   W^{c}_{oiab}\ :=\ \Lambda^c_{ab} w^{c}_{oiab} \,.
\end{align}
An ablation study in appendix~\ref{apx:kernel_head_weight_ablation}
demonstrates the great relevance of the weighting parameters empirically.

Given the kernel matrix $\kb$, we compute the kernel by partial (weighted) geometric product evaluation, i.e.,
\begin{align}
   \kb_{noib}^{c} \gets \sum\nolimits_{a=1}^{2^{d}} \kb_{noi}^{a} \cdot W^{c}_{oiab}\,.
\end{align}

Finally, we reshape and permute $\kb_{noib}^{c}$ from shape $(N, \cout, \cin, 2^{d}, 2^d)$ to its final shape, i.e.,
\begin{align*}
   \kb \gets \textsc{reshape}\left( \kb, \left(\cout \cdot 2^{d}, \cin \cdot 2^{d}, X_1, \dots, X_{p+q} \right) \right)\,.
\end{align*}
This is the final kernel that can be used in a convolutional layer, and can be interpreted (at each sample coordinate) as an element of $\Hom_{\vecrm} \!\big(\! \Cl(\Rpq)^\cin, \Cl(\Rpq)^\cout \big)$.
The pseudocode for the Clifford-steerable kernel (\textsc{CliffordSteerableKernel}) is given in Function \ref{alg:cskernel}.

\subsection{Clifford-steerable convolution:}
As defined in \Cref{sec:clifford_steerable_CNNs_main}, Clifford-steerable convolutions can be efficiently implemented with conventional convolutional machinery such as \texttt{torch.nn.ConvNd} or \texttt{jax.lax.conv} (see Function \ref{alg:csconv} (\textsc{CliffordSteerableConvolution}) for pseudocode). 
We now have a kernel $\kb \in \mathbb{R}^{(\cout \cdot 2^{d}) \times (\cin \cdot 2^{d}) \times X_1 \times \dots \times X_{p+q}}$ that can be used in a convolutional layer.
Given batch size $B$, we now reshape the input stack of multivector fields
$(B, \cin, Y_1, \dots, Y_{p+q},  2^d)$ into $(B, \cin \cdot 2^d, Y_1, \dots, Y_{p+q})$.
The output array of shape $(B, \cout \cdot 2^d, Y_1, \dots, Y_{p+q})$ is obtained by convolving the input with the kernel, which is then reshaped to $(B, \cout, Y_1, \dots, Y_{p+q}, 2^d)$, which can then be interpreted as a stack of multivector fields again.

\begin{icml_version}
    
\section{Limitations}
\label{sec:limitations_ICML-version}

From the viewpoint of general steerable CNNs, there are some limitations:
\vspace*{-2pt}
\begin{itemize}[leftmargin=8pt, topsep=-3pt, itemsep=-2pt]
    \item
    There exist \emph{more general field types} (${\O(p,\mkern-1.5muq)}$-rep-resentations) than multivectors,
    for which CS-CNNs do not provide steerable kernels.
    For connected Lie groups, such as the subgroups ${\SO^+(p,\mkern-1.5muq)}$,
    these types can in principle be computed numerically \cite{shutty2020learning}.
    
    \item
    CGENNs and CS-CNNs rely on equivariant operations that treat multivector-grades $\Cl^{(k)}(V,\eta)$ as ``atomic'' features.
    However, it is not clear whether grades are always \emph{irreducible} representations,
    that is, there might be further equivariant degrees of freedom
    which would treat irreducible sub-representations independently.
    
    \item
    We observed that the steerable kernel spaces of CS-CNNs are not necessarily \emph{complete},
    that is, certain degrees of freedom might be missing.
    However, we show in Appendix~\ref{apx:kernel_space_completeness} how they are recovered by composing multiple convolutions.
    
    \item
    $\O(p,q)$ and their group orbits on $\Rpq$ are for $p,q\neq0$ \emph{non-compact};
    for instance, the hyperbolas in spacetimes $\Rbb^{1,q}$ extend to infinity.
    In practice, we sample convolution kernels on a finite sized grid as shown in
    Fig.~\ref{fig:implicit_steerable_kernel_diagram} (left).
    This introduces a cutoff,
    breaking equivariance for large transformations.
    Note that this is an issue not specific to CS-CNNs, but it applies e.g. to scale-equivariant CNNs as well \cite{bekkers2020bspline,romero2020wavelet}.
\end{itemize}

Despite these limitations, CS-CNNs excel in our experiments.
A major advantage of CGENNs and CS-CNNs is that they allow for a simple, unified implementation for arbitrary signatures ${(p,\mkern-2muq)}$.
This is remarkable, since steerable kernels usually need to be derived for each symmetry group individually.
Furthermore, our implementation applies both to multivector fields sampled on pixel grids and point clouds.

\end{icml_version}

\section{Completeness of kernel spaces}
\label{apx:kernel_space_completeness}

In order to not over-constrain the model,
it is essential to parameterize a \emph{complete basis} of ${\O(p,\mkern-2muq)}$-steerable kernels.
Comparing our implicit ${\O(2,\mkern-2mu0)\mkern-1mu=\mkern-1mu\O(2)}$-steerable kernels
with the analytical solution by \cite{Weiler2019_E2CNN},
we find that certain \emph{degrees of freedom are missing};
see Fig.~\ref{fig:implicit_steerable_kernel_visual_O2}.

However, while these degrees of freedom are \emph{missing in a single convolution operation},
they can be \emph{fully recovered by applying two consecutively convolutions}.
This suggests that the overall expressiveness of CS-CNNs is (at least for $\O(2)$) not diminished.
Moreover, two convolutions with kernels $\widehat{K}$ and $K$ can always be expressed as a single convolution with a composed kernel ${\widehat{K} \mkern-3mu*\mkern-3mu K}$.
As visualized below, this composed kernel recovers the full degrees of freedom reported in \cite{Weiler2019_E2CNN}:
\begin{figure}[h!]
\vspace*{-4pt}
    \hfill
    \begin{minipage}[c]{0.12\columnwidth}
        \caption{}
        \label{fig:kernels_composed_convs}
    \end{minipage}
    \hspace{5pt}
    \begin{minipage}[c]{0.75\columnwidth}
        \includegraphics[width=.9\columnwidth]{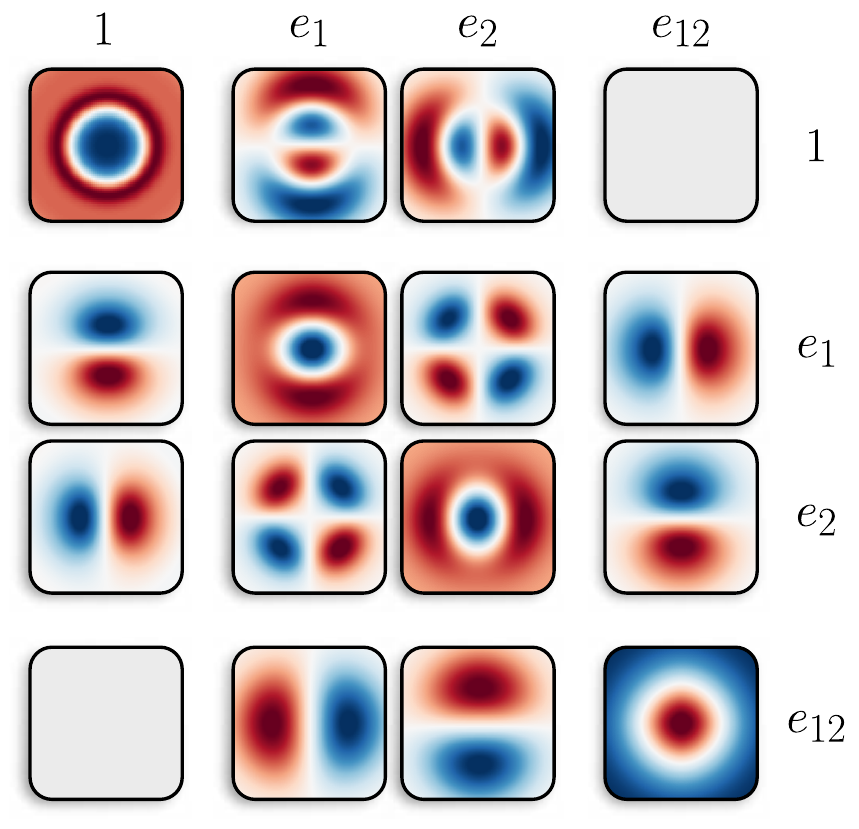}
    \end{minipage}
    \hfill
\vspace*{-10pt}
\end{figure}

The following two sections discuss the initial differences in kernel parametrizations and how they are resolved by adding a second linear or convolution operation.
Unless stated otherwise, we focus here on $\cin=\cout=1$ channels to reduce clutter.

\begin{figure*}
   \centering
   \begin{minipage}{\textwidth}
      \centering
      \begin{minipage}{\textwidth}
         \centering
         \includegraphics[width=.8\linewidth]{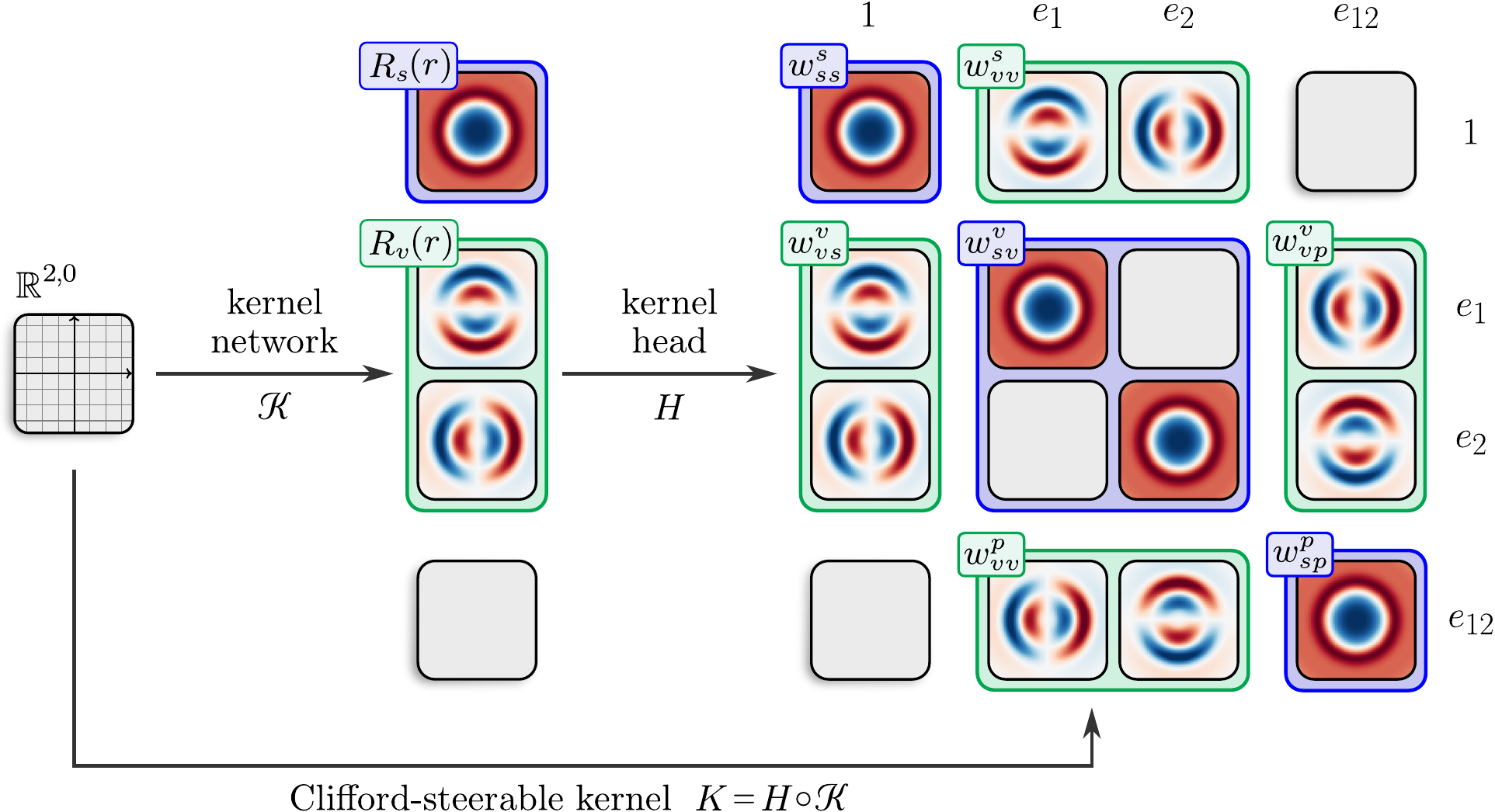}
      \end{minipage}
      \vphantom{foo}\vspace*{20pt}
      \begin{minipage}{\textwidth}
         \centering

{\newcommand\onestrut{\rule[-10pt]{0pt}{25pt}}
\centering%
CS-CNN parametrization
\\\vspace*{4pt}
\small%
\scalebox{1.}{%
        \setlength{\tabcolsep}{5pt}%
        \renewcommand\arraystretch{1.7}%
        \begin{tabu}{r|[1pt]c|c|c}%
               \diagbox[height=14pt]{\raisebox{-3pt}{out}}{\ \ \raisebox{7pt}{in}}
            &  scalar
            &  vector
            &  pseudoscalar
            \\[-4pt]
            &  $1$
            &  $\phantom{\!\bigg|^\top}\big[e_1,e_2\big]^\top$
            &  $e_{12}$
            \\ \tabucline[1pt]{-}
               $1$
            &  $w^s_{ss}{\color{blue}R_s(r)}\, \big[\mkern2mu 1 \mkern2mu\big]$
            &  $w^s_{vv}{\color{Green}R_v(r)}\, \big[\minus\sin(\phi)\, \cos(\phi) \big]$
            &  $\varnothing$
            \onestrut \\[20pt] \hline
                \renewcommand\arraystretch{1.2}%
                $\begin{bmatrix}
                  e_1 \\
                  e_2
                \end{bmatrix}$
            \vphantom{\raisebox{-15pt}{\rule{0pt}{35pt}}}
            &  \renewcommand\arraystretch{1.4}%
               $w^v_{vs}{\color{Green}R_v(r)}\, 
               \begin{bmatrix}
                           \minus \!\sin(\phi) \\
                  \phantom{\minus}\!\cos(\phi)
               \end{bmatrix}$
            &  \renewcommand\arraystretch{1.4}%
               \hspace*{53pt}
               $w^v_{sv}{\color{blue}R_s(r)}\, 
               \begin{bmatrix}
                  \;1 & 0~\\
                  \;0 & 1~
               \end{bmatrix}$
               \hspace*{53pt}
            &  \renewcommand\arraystretch{1.4}%
               $w^v_{vp}{\color{Green}R_v(r)}\, 
               \begin{bmatrix}
                  \cos(\phi) \\
                  \sin(\phi) \\
               \end{bmatrix}$
            \onestrut \\ \hline
               $e_{12}$
            &  $\varnothing$
            &  $w^p_{vv}{\color{Green}R_v(r)}\, \big[\cos(\phi)\,\phantom{-}\sin(\phi) \big]$
            &  $w^p_{sp}{\color{blue}R_s(r)}\, \big[\mkern2mu 1 \mkern2mu\big]$  
        \end{tabu}%
    }%
    }%

      \end{minipage}
      \vphantom{foo}\vspace*{20pt}
      \begin{minipage}{\textwidth}
         \centering
         
{\newcommand\onestrut{\rule[-10pt]{0pt}{25pt}}
\centering%
complete \texttt{e2cnn} parametrization \cite{Weiler2019_E2CNN}
\\\vspace*{10pt}
\small%
\scalebox{1.}{%
        \setlength{\tabcolsep}{5pt}%
        \renewcommand\arraystretch{1.7}%
        \begin{tabu}{c|[1pt]c|c|c}%
               \diagbox[height=14pt]{\raisebox{-3pt}{out}}{\ \ \raisebox{7pt}{in}}
            &  $1$
            &  $\big[e_1,e_2\big]^\top$
            &  $e_{12}$
            \\ \tabucline[1pt]{-}
               $1$
            &  $R^s_{\ s}(r)\, \big[\mkern2mu 1 \mkern2mu\big]$
            &  $R^s_{\ v}(r)\, \big[\minus\sin(\phi)\, \cos(\phi) \big]$
            &  $\varnothing$
            \onestrut \\[20pt] \hline
                \renewcommand\arraystretch{1.2}%
                $\begin{bmatrix}
                  e_1 \\
                  e_2
                \end{bmatrix}$
            \vphantom{\raisebox{-15pt}{\rule{0pt}{35pt}}}
            &  \renewcommand\arraystretch{1.4}%
                \hspace*{1.5pt}
                $R^v_{\ s}(r)\,
                \begin{bmatrix}
                           \minus \!\sin(\phi) \\
                  \phantom{\minus}\!\cos(\phi)
               \end{bmatrix}$
                \hspace*{1.5pt}
            &  \renewcommand\arraystretch{1.4}%
               $R^v_{\ v}(r)\mkern-3mu
               \begin{bmatrix}
                  \;1 & 0~\\
                  \;0 & 1~
               \end{bmatrix}$
               ,$\mkern8mu$
              \renewcommand\arraystretch{1.4}%
              {\color{OrangeRed}
               $\widehat{R}^v_{\ v}(r)\mkern-3mu
               \begin{bmatrix}
                  \cos(2\phi) \mkern-8mu& \phantom{\minus}\sin(2\phi) \\
                  \sin(2\phi) \mkern-8mu&          \minus \cos(2\phi) \\
               \end{bmatrix}$
               }
            &  \renewcommand\arraystretch{1.4}%
                \hspace*{1.5pt}
               $R^v_{\ p}(r)\,
               \begin{bmatrix}
                  \cos(\phi) \\
                  \sin(\phi) \\
               \end{bmatrix}$
                \hspace*{1.5pt}
            \onestrut \\ \hline
               $e_{12}$
            &  $\varnothing$
            &  $R^p_{\ v}(r)\, \big[\cos(\phi)\,\phantom{-}\sin(\phi) \big]$
            &  $R^p_{\ p}(r)\, \big[\mkern2mu 1 \mkern2mu\big]$  
        \end{tabu}%
    }%
    }%

      \end{minipage}
    \end{minipage}
    \vspace*{8pt}
    \caption{%
        Comparison of the parametrization of $\O(2)$-steerable kernels in
        CS-CNNs (top and middle) and \texttt{e2cnn} (bottom).
        While the \texttt{e2cnn} solutions are proven to be \emph{complete},
        CS-CNN seems to miss certain degrees of freedom:
        \\[3pt]
        (1) Their \emph{radial parts are coupled} in the components highlighted in {\color{blue}blue} and {\color{Green}green}, while \texttt{escnn} allows for independent radial parts.
        By ``coupled'' we mean that they are merely scaled relative to each other with weights $w^k_{mn}$
        from the weighted geometric product operation in the kernel head $H$,
        where $m$ labels grade $\KB^{(m)}$ of the kernel network output
        while $n,k$ label input and output grades of the expanded kernel in
        $\Hom_{\vecrm} \!\big(\mkern-2mu \Cl(\Rpq),\, \Cl(\Rpq) \big)$; 
        \\[3pt]
        (2) CS-CNN is missing kernels of angular frequency $2$ that are admissible for mapping between vector fields;
        highlighted in {\color{OrangeRed}red}.
        \\[3pt]
        As explained in Appendix~\ref{apx:kernel_space_completeness},
        these \emph{missing degrees of freedom are recovered} when \emph{composing two convolution layers}.
        A kernel corresponding to the composition of two convolutions in a single one is visualized in Fig.~\ref{fig:kernels_composed_convs}.
   }
   \label{fig:implicit_steerable_kernel_visual_O2}
\end{figure*}

\subsection{Coupled radial dependencies in CS-CNN kernels}

The first issue is that the CS-CNN parametrization implies a \emph{coupling of radial degrees of freedom}.
To make this precise, note that the $\O(2)$-steerability constraint
\begin{align*}
    K(gv) \,\stackrel{!}{=}\,  \rcl[\cout](g)\, K(v)\, \rcl[\cin](g^{\minus1})
    \quad\forall\ v\in\Rbb^2,\, g\in\O(2)
\end{align*}
decouples into independent constraints on individual $\O(2)$-orbits on $\Rbb^2$,
which are rings at different radii (and the origin);
visualized in Fig.~\ref{fig:pe_spaces} (left).
\cite{3d_steerableCNNs,Weiler2019_E2CNN} parameterize the kernel therefore in (hyper)spherical coordinates.
In our case these are \emph{polar coordinates} of $\Rbb^2$, i.e. a radius $r\in\Rbb_{\geq0}$ and angle $\phi\in S^1$:
\begin{align}
   K(r,\phi) \,:=\, R(r) \kappa(\phi)
\end{align}
The $\O(2)$-steerability constraint affects only the angular part
and leaves the radial part entirely free,
such that it can be parameterized in an arbitrary basis or via an MLP.

\paragraph{e2cnn:}
\citet{Weiler2019_E2CNN} solved analytically for complete bases of the angular parts.
Specifically, they derive solutions
\begin{align}
   K^k_{\ n}(r,\phi) \,=\, R^k_{\ n}(r) \kappa^k_{\ n}(\phi)
\end{align}
for any pair of input and output field types (irreps of grades) $n$ and $k$, respectively.
This complete basis of $\O(2)$-steerable kernels is shown in the bottom table of Fig.~\ref{fig:implicit_steerable_kernel_visual_O2}.

\paragraph{CS-CNNs:}
CS-CNNs parameterize the kernel in terms of a kernel network $\KB: \Rpq\to \Cl(\Rpq)^{\cout\times\cin}$,
visualized in Fig.~\ref{fig:implicit_steerable_kernel_visual_O2} (top).
Expressed in polar coordinates, assuming $\cin=\cout=1$,
and considering the independence of $\KB$ on different orbits due to its $\O(2)$-equivariance,
we get the factorization
\begin{align}
   \KB(r,\phi)^{(m)}\ =\ R_m(r) \kappa_m(\phi) \,,
\end{align}
where $m$ is the grade of the multivector-valued output.
As described in Appendix~\ref{sec:kernel_head} (Eq.~\eqref{eq:kernel_head_weights}),
the kernel head operation $H$ expands this output by multiplying it with weights $W^k_{mn} = \Lambda^k_{mn} w^k_{mn}$,
where $w^k_{mn}\in\Rbb$ are parameters and $\Lambda^k_{mn} \in\{\minus1,0,1\}$ represents the geometric product relative to the standard basis of $\Rpq$.
Note that we do not consider multiple in or output channels here.
The final expanded kernel for CS-CNNs is hence given by
\begin{align}
   K^k_{\ n}(r,\phi)\ 
   &=\ \sum_m W^k_{mn} \KB(r,\phi)^{(m)} \\
   &=\ \sum_m \Lambda^k_{mn} w^k_{mn} R_m(r) \kappa_m(\phi) \,.\notag
\end{align}
These solutions are listed in the top table in Fig.~\ref{fig:implicit_steerable_kernel_visual_O2},
and visualized in the graphics above.%
\footnote{
   The parameter $\Lambda^k_{mn}$ appears in the table as selecting to which entry $k,n$ of the table grade $\KB(r,\phi)^{(m)}$ is added (optionally with minus signs).
}

\paragraph{Comparison:}
Note that the complete solutions by \cite{Weiler2019_E2CNN} allow for a
\emph{different radial part $R^k_{\ n}$} for each pair of input and output type (grade/irrep).
In contrast, the CS-CNN parametrization expands \emph{coupled radial parts $R_m$},
additionally multiplying them with weights $w^k_{mn}$
(highlighted in the table in {\color{blue}blue} and {\color{Green}green}).
The CS-CNN parametrization is therefore clearly less general (incomplete).

\paragraph{Solutions:}
One idea to resolve this shortcoming is to make the 
weighted geometric product parameters themselves radially dependent,
\begin{align}
   w_{mn}^k:\ \Rbb_{\geq0}\to\Rbb, \quad r \mapsto w_{mn}^k(r) \,,
\end{align}
for instance by parameterizing the weights with a neural network.
This would fully resolve the under-parametrization,
and would preserve equivariance, since $\O(2)$-steerability depends only on the angular variable.

However, doing this is actually not necessary,
since the missing flexibility of radial parts can always be resolved
by running a convolution followed by a linear layer (or a second convolution)
when $\cout>1$.
The reason for this is that different channels $i=1,\dots,\cout$ of a kernel network $\KB: \Rbb\to \Cl(\Rbb)^{\cout\times\cin}$ do have independent radial parts.
Their convolution responses in different channels can by a subsequent linear layer be mixed with grade-dependent weights.
By linearity, this is equivalent to immediately mixing the channels' radial parts with grade-dependent weights,
resulting in effectively \emph{decoupled radial parts}.

\subsection{Circular harmonics order 2 kernels}

A second issue is that the CS-CNN parametrization is \emph{missing a basis kernel of angular frequency $2$} that maps between vector fields;
highlighted in red in the bottom table of Fig.~\ref{fig:implicit_steerable_kernel_visual_O2}.
However, it turns out that this degree of freedom is reproduced as the difference of two consecutive convolutions ($*$),
one mapping vectors to pseudoscalars and back to vectors,
the other one mapping vectors to scalars and back to vectors,
as suggested in the (non-commutative!) computation flow diagram below:
\begin{equation*}
  \begin{tikzcd}[row sep=.15em, column sep=1.8em]
      & \textup{pseudo}
         \arrow[r, maps to, "*"]
      & \textup{vector}
         \arrow[rd, maps to, ""]
      \\
        \textup{vector}
         \arrow[ru, maps to, "*"]
         \arrow[rd, maps to, "*"']
      & &
      & \mkern-4mu\raisebox{-.6pt}{\scalebox{1.3}{$\ominus$}}\mkern-4mu
         \arrow[r, maps to, ""]
      &[-6pt] \textup{vector}
      \\
      & \textup{scalar}
         \arrow[r, maps to, "*"']
      & \textup{vector}
         \arrow[ru, maps to, ""]
  \end{tikzcd}
\end{equation*}

As background on the angular frequency $2$ kernel, note that
$\O(2)$-steerable kernels between irreducible field types of angular frequencies $j$ and $l$ contain angular frequencies $|j-l|$ and $j+l$ --
this is a consequence of the \emph{Clebsch-Gordan decomposition} of $\O(2)$-irrep tensor products \cite{lang2020WignerEckart}.
We identify
multivector grades $\Cl(\Rbb^{2,0})^{(k)}$ with the following $\O(2)$-irreps:%
\footnote{
    As mentioned earlier, multivector grades may in general not be \emph{irreducible},
    however, for $(p,q)=(2,0)$ they are.
}\footnote{
    There are two different $\O(2)$-irreps corresponding to ${j=0}$ (trivial and sign-flip);
    see \cite{weiler2023EquivariantAndCoordinateIndependentCNNs}[Section 5.3.4].
}
\begin{align*}
    \textup{scalars} \in \Cl(\Rbb^{2,0})^{(0)}
    \ &\leftrightarrow\,\ 
    \textup{trivial irrep (${j\!=\!0}$)}
    \\
    \textup{vectors} \in \Cl(\Rbb^{2,0})^{(1)}
    \ &\leftrightarrow\,\ 
    \textup{defining irrep (${j\!=\!1}$)}
    \\
    \textup{pseudo-scalars} \in \Cl(\Rbb^{2,0})^{(2)}
    \ &\leftrightarrow\,\ 
    \textup{sign-flip irrep (${j\!=\!0}$)}
\end{align*}
Kernels that map vector fields (${j\!=\!1}$) to vector fields (${l\!=\!1}$) should hence contain angular frequencies ${|j\mkern-2mu-\mkern-2mul|=0}$ and ${j\mkern-2mu+\mkern-2mul=2}$.
The latter is missing since $\O(2)$-irreps of order $2$ are not represented by any grade of $\Cl(\Rbb^{2,0})$.

To solve this issue, it seems like one would have to replace the CEGNNs underlying the kernel network $\KB$ with a more general $\O(2)$-equivariant MLP, e.g. \cite{finzi2021practical}.
However, it can as well be implemented as a succession of two convolution operations.
To make this claim plausible, observe first that convolutions are associative,
that is, two consecutive convolutions with kernels $K$ and $\widehat{K}$ are equivalent to a single convolution with kernel $\widehat{K} \ast K$:
\begin{align}
    \widehat{K}\ast \big( K \ast f \big)
    \ =\ \big(\widehat{K} \ast K \big) \ast f
\end{align}
Secondly, convolutions are linear, such that
\begin{align}
    \alpha(\widehat{K}*f) + \beta (K*f)
    \ =\ \big(\alpha\widehat{K} + \beta K \big) \ast f
\end{align}
for any $\alpha,\beta\in\Rbb$.

Using associativity, we can express two consecutive convolutions,
first going from vector to scalar fields via
\begin{align}
    K^s_{\ v}(r,\phi)
    \ =\ R^s_{\ v}(r)
    \begin{pmatrix}
        \minus\sin(\phi) &\mkern-3mu \cos(\phi)
    \end{pmatrix}
\end{align}
then going back from scalars to vectors via
\begin{align}
    K^v_{\ s}(r,\phi)
    \ =\ R^v_{\ s}(r)
    \begin{pmatrix}
        \minus\sin(\phi) \\[1pt] \phantom{\minus}\cos(\phi)
    \end{pmatrix}
\end{align}
as a single convolution between vector fields, where the combined kernel is given by:
\begin{align}
    &\Sigma^v_{\ v} \,:=\, K^v_{\ s} \ast K^s_{\ v}
    \\[6pt]
    &\ =\  
    \begin{pmatrix}
        \inputgraphicskernel{10pt}{30pt}{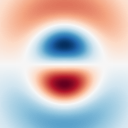} \\
        \inputgraphicskernel{10pt}{30pt}{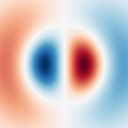}
    \end{pmatrix}
    \mkern-3mu\ast\mkern-3mu
    \begin{pmatrix}
        \inputgraphicskernel{10pt}{30pt}{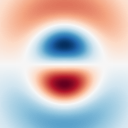} &\mkern-8mu
        \inputgraphicskernel{10pt}{30pt}{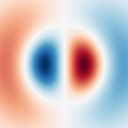}
    \end{pmatrix}
    =
    \begin{pmatrix}
        \inputgraphicskernel{10pt}{22pt}{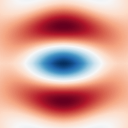} &\mkern-8mu
        \inputgraphicskernel{10pt}{22pt}{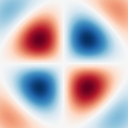} \\
        \inputgraphicskernel{10pt}{22pt}{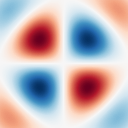} &\mkern-8mu
        \inputgraphicskernel{10pt}{22pt}{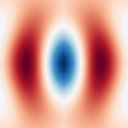}
    \end{pmatrix} \notag
\end{align}

We can similar define a convolution going
from vector to pseudoscalar fields via
\begin{align}
    K^p_{\ v}(r,\phi)
    \ =\ R^p_{\ v}(r)
    \begin{pmatrix}
        \cos(\phi) &\mkern-3mu \sin(\phi)
    \end{pmatrix}
\end{align}
and back to vector fields via
\begin{align}
    K^v_{\ p}(r,\phi)
    \ =\ R^v_{\ p}(r)
    \begin{pmatrix}
        \cos(\phi) \\[1pt] \sin(\phi)
    \end{pmatrix}
\end{align}
as a single convolution with combined kernel:
\begin{align}
    &\Pi^v_{\ v} \,:=\, K^p_{\ v} \ast K^v_{\ p}
    \\
    &\ =\  
    \begin{pmatrix}
        \inputgraphicskernel{10pt}{30pt}{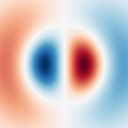} \\
        \inputgraphicskernel{10pt}{30pt}{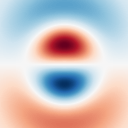}
    \end{pmatrix}
    \mkern-3mu\ast\mkern-3mu
    \begin{pmatrix}
        \inputgraphicskernel{10pt}{30pt}{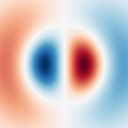} &\mkern-8mu
        \inputgraphicskernel{10pt}{30pt}{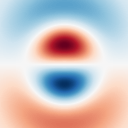}
    \end{pmatrix}
    =
    \begin{pmatrix}
        \inputgraphicskernel{10pt}{22pt}{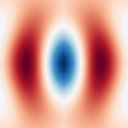} &\mkern-8mu
        \inputgraphicskernel{10pt}{22pt}{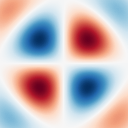} \\
        \inputgraphicskernel{10pt}{22pt}{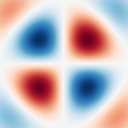} &\mkern-8mu
        \inputgraphicskernel{10pt}{22pt}{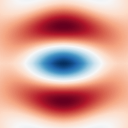}
    \end{pmatrix} \notag
\end{align}

By linearity, we can define yet another convolution between vector fields
by taking the difference of these kernels,
which results in:
\begin{align}
    \Pi^v_{\ v} - \Sigma^v_{\ v}
    \ =\ 
    \begin{pmatrix}
        \inputgraphicskernel{12pt}{26pt}{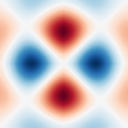} &
        \inputgraphicskernel{12pt}{26pt}{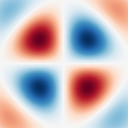} \\
        \inputgraphicskernel{12pt}{26pt}{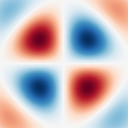} &
        \inputgraphicskernel{12pt}{26pt}{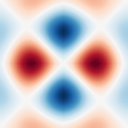}
    \end{pmatrix}
\end{align}

Such kernels parameterize exactly the missing $\O(2)$-steerable kernels of angular frequency $2$;
highlighted in
red
in the bottom table in Fig.~\ref{fig:implicit_steerable_kernel_visual_O2}.
This shows that the missing kernels can be recovered by two convolutions, if required.

The ``visual proof'' by convolving kernels is clearly only suggestive.
To make it precise, it would be required to compute the convolutions of two kernels analytically.
This is easily done by identifying circular harmonics with derivatives of Gaussian kernels;
a relation that is well known in classical computer vision \cite{lindeberg2009scaleSpace}.

\section{Experimental details}
\label{apx:exp_details}

\subsection{Model details:}
For ResNets, we follow the setup of~\citet{wang2020incorporating,Brandstetter2022CliffordNL,gupta2022pdearena}:
the ResNet baselines consist of 8 residual blocks, each comprising two convolution layers with $7 \times 7$ (or $7 \times 7 \times 7$ for 3D) kernels, shortcut connections, group normalization~\citep{wu2018group}, and GeLU activation functions~\citep{hendrycks2016gaussian}.
We use two embedding and two output layers, i.e., the overall architectures could be classified as Res-20 networks.
Following \citep{gupta2022pdearena,Brandstetter2022CliffordNL}, we abstain from employing down-projection techniques and instead maintain a consistent spatial resolution throughout the networks.
The best models have approx.
7M parameters for Navier-Stokes and
1.5M parameters for Maxwell's equations, in both 2D and 3D.

\subsection{Optimization:}
For each experiment and each model, we tuned the learning rate to find the optimal value.
Each model was trained until convergence.
For optimization, we used Adam optimizer \cite{Kingma2014AdamAM} with no learning decay and cosine learning rate scheduler \cite{Loshchilov2016SGDRSG} to reduce the initial value by the factor of 0.01. Training was done on a single node with 4 NVIDIA GeForce RTX 2080 Ti GPUs.

\subsection{Datasets}
\label{sec:appendix-datasets}

\paragraph{Navier Stokes:}
We use the Navier-Stokes data from~\citet{gupta2022pdearena},
which is based on $\Phi$Flow~\citep{holl2020phiflow}.
It is simulated on a grid with spatial resolution of $128 \times 128$ pixels of size $\Delta x = \Delta y = 0.25$m and temporal resolution of $\Delta t = 1.5$s.
For validation and testing, we randomly selected $1024$ trajectories from corresponding partitions.

\paragraph{Maxwell 3D:}
Simulations of the 3D Maxwell equations are taken from~\citet{Brandstetter2022CliffordNL}.
This data is discretized on a grid with a spatial resolution of $32 \times 32 \times 32$ voxels with $\Delta x = \Delta y = \Delta z = 5 \cdot 10^{-7}$m and was reported to have a temporal resolution of $\Delta t = 50$s.
In the \emph{non-relativistically modeled} setting $\Cl(\Rbb^{3, 0})$, $\mathbf{E}$ is treated as a vector field, and $\mathbf{B}$ as a bivector field.
Validation and test sets comprise $128$ simulations.

\paragraph{Maxwell 2D:}
We simulate data for Maxwell's equations on spacetime $\Rbb^{2,1}$ using \texttt{PyCharge} \cite{filipovich2022PyCharge}.
Electromagnetic fields are emitted by point sources that move, orbit and oscillate at relativistic speeds.
The spacetime grid has a resolution of $128$ points in both spatial and the temporal dimension.
Its spatial extent are $50$nm and the temporal extent are $3.77 \cdot 10^{-14}$s.

Sampled simulations contain between $2$ to $4$ oscillating charges and $1$ to $2$ orbiting charges.
The sources have charges sampled uniformly as integer values between $-3$e and $3$e.
Their positions are sampled uniformly on the grid, with a predefined minimum initial distance between them.
Each charge has a random linear velocity and either oscillates in a random direction or orbits with a random radius.
Oscillation and rotation frequencies, as well as velocities are sampled such that the overall particle velocity does not exceed $0.85$c, which is necessary since the PyCharge simulation becomes unstable beyond this limit.

As the field strengths span many orders of magnitude,
we normalize the generated fields by dividing bivectors by their Minkowski norm and multiplying them by the logarithm of this norm.
This step is non-trivial sincewMinkowski-norms can be zero or negative, however, we found that they are always positive in the generated data.
We filter out numerical artifacts by removing outliers with a standard deviation greater than $20$.
The final dataset comprises $2048$ training, $256$ validation and $256$ test simulations.

\paragraph{Dataset symmetries:}
The classical Navier Stokes equations are \emph{Galilean invariant} \cite{wang2022extensions}.
Our CS-CNN for $\Cl(\Rbb^2)$ is $\E(2)$-equivariant, capturing the subgroup of isometries without boosts.

Maxwell's equations are \emph{Poincaré invariant}.
Similar to the case of Navier Stokes, our model for $\Cl(\Rbb^3)$ is $\E(3)$-equivariant.
The relativistic spacetime model for $\Cl(\Rbb^{1,2})$ is fully equivariant w.r.t. the Poincaré group $\E(1,2)$.

The invariance of a system's equations of motion imply an equivariant system dynamics.
This statement assumes that the system is transformed \emph{as a whole},
i.e. together with boundary conditions or background fields.
It does obviously not hold when fixed symmetry-breaking boundary conditions or background fields are given.
However, implicit kernels may in this case be informed about the symmetry breaking geometric structure by providing it in form of additional inputs to the kernel network as described in \cite{zhdanov2022implicit}.

\begin{figure}
    \centering
    \includegraphics[width=0.8\columnwidth]{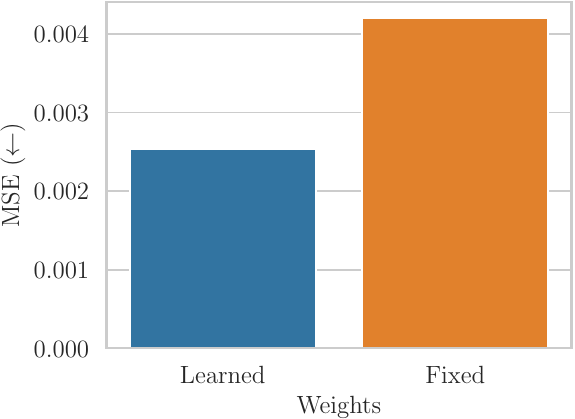}%
    \caption{
        Performance of CS-CNNs with freely \emph{learned} weights in the kernel head
        and such that ablate to \emph{fixed} weights $w^k_{mn,ij}=1$.
        \\
    }
    \label{fig:kernel_head_ablation}%
\end{figure}

\subsection{Kernel head weight ablation}
\label{apx:kernel_head_weight_ablation}

As discussed in Def.~\ref{def:kernel_head} and Appendix~\ref{sec:kernel_head},
the kernel head is essentially a partially evaluated geometric product operation
with additional weighting parameters that are learned during training. 
To check how relevant this weighting is in practice,
we ran an ablation study that fixed all kernel head weights to $w^k_{mn,ij}=1$.
It turns out that the weighting is quite relevant:
Our fully weighted CS-CNN achieved a test MSE of $2.53\cdot10^{-3}$ on the Navier Stokes forecasting task,
while the MSE for the fixed weight CS-CNN increased to $4.30\cdot10^{-3}$; see Fig.~\ref{fig:kernel_head_ablation}.
This drastic loss in performance is explained by the fact that these weights allow to scale different kernel channels relative to each other as visualized in Fig.~\ref{fig:implicit_steerable_kernel_visual_O2},
which is essential to parameterize the complete space of steerable kernels.

\section{The Clifford Algebra}
\label{sec:cliff-alg}

For completeness purposes and to complement \Cref{sec:clifford_algebra_group}, in this sections, we give a short and formal definition of the Clifford algebra. For this, we first need to introduce the tensor algebra of a vector space.

\begin{Def}[The tensor algebra]
Let $V$ be finite dimensional $\Rbb$-vector space of dimension $d$.
Then the \emph{tensor algebra} of $V$ is defined as follows:
\begin{align}
    \Tens(V) &:= \bigoplus_{m =0}^\infty V^{\otimes m} \\
    &=\Span\lC v_1 \otimes \cdots \otimes v_m \st m \ge0, v_i \in V \rC, \notag 
\end{align}
where we used the following abbreviations for the $m$-times tensor product of $V$ for $m \ge 0$:
\begin{align}
    V^{\otimes m} &:= \underbrace{V \otimes \cdots \otimes V}_{m\text{-times}}, & V^{\otimes 0}&:=\Rbb.
\end{align}
\end{Def}

Note that the above definition turns $(\Tens(V),\otimes)$ into a (non-commutative, infinite dimensional, unital, associative) algebra over $\Rbb$. In fact, the tensor algebra $(\Tens(V),\otimes)$ is, in some sense, the biggest algebra generated by $V$.

We now have the tools to give a proper definition of the Clifford algebra:

\begin{Def}[The Clifford algebra]
\label{def:CliffordAlgebraAppendix}
    Let $(V,\ip)$ be a finite dimensional innner product space over $\Rbb$ of dimension $d$.
    The \emph{Clifford algebra} of $(V,\ip)$ is then defined as the following quotient algebra:
    \begin{align}
        \Cl(V,\ip) &:= \Tens(V)/I(\ip), \\[8pt]
        I(\ip) &:= \big\langle v \otimes v - \ip(v,v) \cdot 1_{\Tens(V)} \big| v \in V\big\rangle \\[2pt]
        &:= \Span\!\Big\{ x \otimes \big( v \otimes v - \ip(v,v) \cdot 1_{\Tens(V)} \big) \otimes y \notag \\
        & \mkern170mu \Big|\; v\in V, x,y \in \Tens(V)   \Big\}
        \mkern-1mu,\mkern-10mu \notag
    \end{align} 
    where $I(\ip)$ denotes the \emph{two-sided ideal} of $\Tens(V)$ generated by the relations $v \otimes v \sim  \ip(v,v) \cdot 1_{\Tens(V)}$ for all $v \in V$.

    The product on $\Cl(V,\ip)$ that is induced by the tensor product $\otimes$ is called the \emph{geometric product} $\gp$ and will be denoted as follows:
    \begin{align}
    \label{eq:GeometricProductDefAppendix}
        x_1 \gp x_2 &:= [z_1 \otimes z_2],
    \end{align}
    with the equivalence classes $x_i=[z_i]\in \Cl(V,\ip)$, $i=1,2$.
\end{Def}

Note that, since $I(\ip)$ is a two-sided ideal, the geometric product is well-defined. The above construction turns $(\Cl(V,\ip),\gp)$ into a (non-commutative, unital, associative) algebra over $\Rbb$. 

In some sense, $(\Cl(V,\ip),\gp)$ is the biggest (non-commutative, unital, associative) algebra $(\Acal,\gp)$ over $\Rbb$ that is generated by $V$ and satisfies the relations $v \gp v = \ip(v,v) \cdot 1_{\Acal}$ for all $v \in V$.

It turns out that $(\Cl(V,\ip),\gp)$ is of the finite dimension $2^d$ and carries a \emph{parity grading} of algebras and a \emph{multivector grading} of vector spaces, see \cite{ruhe2023geometric} Appendix D. More properties are also  explained in \Cref{sec:clifford_algebra_group}.

From an abstract, theoretical point of view, the most important property of the Clifford algebra is its \emph{universal property}, which fully characterizes it:

\begin{Thm}[The universal property of the Clifford algebra]
    \label{rem:cliff-univ-prop}
    Let $(V,\ip)$ be a finite dimensional innner product space over $\Rbb$ of dimension $d$.  
    For every (non-commutative, unital, associative) algebra  $(\Acal,\agp)$ over $\Rbb$ and every $\Rbb$-linear map $f:\, V \to \Acal$ such that 
    for all $v \in V$ we have:
    \begin{align}
        f(v) \agp f(v) &= \ip(v,v) \cdot 1_\Acal, \label{eq:univ-prop}
    \end{align}
    there exists a unique algebra homomorphism (over $\Rbb$):
    \begin{align}
            \bar f:\, (\Cl(V,\ip),\gp) \to (\Acal,\agp),
    \end{align}
    such that $\bar f(v) = f(v)$ for all $v \in V$.
\end{Thm}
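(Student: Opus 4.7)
The plan is to reduce the universal property of the Clifford algebra to the (easier) universal property of the tensor algebra, then pass to the quotient. Concretely, I would proceed in four steps: extend $f$ to the tensor algebra, verify the defining relations vanish, factor through the two-sided ideal, and finally check uniqueness.

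First, I would invoke the universal property of the tensor algebra: every $\Rbb$-linear map $f:V\to \Acal$ extends uniquely to an $\Rbb$-algebra homomorphism $\tilde f:\Tens(V)\to\Acal$, which on pure tensors is given by
\begin{align}
    \tilde f(v_1\otimes\cdots\otimes v_m) \,=\, f(v_1)\agp\cdots\agp f(v_m),
\end{align}
with $\tilde f(1_{\Tens(V)})=1_\Acal$, and then extended $\Rbb$-linearly. This is the standard ``free algebra'' property of $\Tens(V)$ generated by $V$.

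Next, I would show that the two-sided ideal $I(\ip)$ lies in $\ker\tilde f$. By definition, $I(\ip)$ is the $\Rbb$-linear span of elements of the form $x\otimes\bigl(v\otimes v-\ip(v,v)\cdot 1_{\Tens(V)}\bigr)\otimes y$ with $v\in V$, $x,y\in\Tens(V)$. Applying $\tilde f$ to the inner relation and using hypothesis \eqref{eq:univ-prop},
\begin{align}
    \tilde f\bigl(v\otimes v-\ip(v,v)\cdot 1_{\Tens(V)}\bigr)
    \,=\, f(v)\agp f(v) - \ip(v,v)\cdot 1_\Acal \,=\, 0.
\end{align}
Since $\tilde f$ is an algebra homomorphism, it sends the surrounding tensor product to $\tilde f(x)\agp 0\agp\tilde f(y)=0$, so by $\Rbb$-linearity all of $I(\ip)$ is killed. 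By the first isomorphism theorem for algebras (equivalently, the universal property of quotient algebras), $\tilde f$ factors uniquely through the projection $\pi:\Tens(V)\twoheadrightarrow\Tens(V)/I(\ip)=\Cl(V,\ip)$ as a homomorphism $\bar f:\Cl(V,\ip)\to\Acal$ with $\bar f\circ\pi=\tilde f$. In particular, $\bar f(v)=\bar f(\pi(v))=\tilde f(v)=f(v)$ for all $v\in V$.

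For uniqueness, suppose $\bar f'$ is another algebra homomorphism $\Cl(V,\ip)\to\Acal$ with $\bar f'(v)=f(v)$ on $V$. Since $V\subset\Cl(V,\ip)$ generates $\Cl(V,\ip)$ as an $\Rbb$-algebra (every multivector is an $\Rbb$-linear combination of geometric products of vectors), and $\bar f'$ preserves sums, scalar multiples, the unit, and the geometric product, it is entirely determined by its restriction to $V$, hence $\bar f'=\bar f$. The main conceptual point to get right is the second step, namely that the defining relation \eqref{eq:univ-prop} forces $\tilde f$ to vanish on the generators of $I(\ip)$ — everything else is bookkeeping via the standard universal-property machinery for free objects and quotients.
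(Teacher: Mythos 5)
Your proposal is correct and follows essentially the same route as the paper's proof: extend $f$ to the tensor algebra via its universal property, use Eq.~\eqref{eq:univ-prop} to show the generators of $I(\ip)$ are killed, and factor through the quotient to obtain $\bar f$. Your explicit uniqueness argument (that $V$ generates $\Cl(V,\ip)$ as an $\Rbb$-algebra, so any homomorphism agreeing with $f$ on $V$ is determined) is a welcome addition that the paper's proof leaves implicit.
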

\begin{proof}
    The map $f:\, V \to \Acal$ uniquely extends to an algebra homomorphism on the tensor algebra:
    \begin{align}
       f^\otimes:\, \Tens(V) \to \Acal, 
    \end{align}
    given by:
    \begin{align}
      &f^\otimes\lp \sum_{i \in I} c_i \cdot v_{i,1} \otimes \cdots \otimes v_{i,l_i} \rp \notag\\
      &:= \sum_{i \in I} c_i \cdot f(v_{i,1}) \agp \cdots \agp f(v_{i,l_i}). 
    \end{align}
    Because of \Cref{eq:univ-prop} we have for every $v \in V$:
    \begin{align}
        &f^\otimes\lp v\otimes v - \ip(v,v) \cdot 1_{\Tens(V)} \rp \notag \\
        &= f(v) \agp f(v) - \ip(v,v) \cdot 1_\Acal \\
        &=0,
    \end{align}
    and thus:
    \begin{align}
        f^\otimes(I(\ip)) &= 0.
    \end{align}
    This shows that $f^\otimes$ then factors through the thus well-defined induced quotient map of algebras:
    \begin{align}
      \bar f:\,  \Cl(V,\ip) = \Tens(V)/I(\ip) &\to \Acal \\
      \bar f([z]) &:= f^\otimes(z).
    \end{align}
    This shows the claim.
\end{proof}

\begin{Rem}[The universal property of the Clifford algebra]
    The universal property of the Clifford algebra can more explicitely be stated as follows:
    
    If $f$ satisfies \Cref{eq:univ-prop} and $x \in \Cl(V,\ip)$, then we can take \emph{any} representation of $x$ of the following form: 
    \begin{align}
        x &= \sum_{i \in I} c_i \cdot v_{i,1} \gp \cdots \gp v_{i,l_i},
    \end{align}
    with any finite index sets $I$, any $l_i\in \Nbb$ and any coefficients $c_0,c_i \in \Rbb$ and any vectors $v_{i,j} \in V$, $j =1,\dots,l_i$, $i \in I$, and, then
    we can compute $\bar f(x)$ by the following formula:
    \begin{align}
        \bar f(x) &= \sum_{i \in I} c_i \cdot f(v_{i,1})\agp\cdots \agp f(v_{i,l_i}),
    \end{align}
    and \emph{no ambiguity} can occur for $\bar f(x)$ if one uses a different such representation for $x$.
\end{Rem}

\begin{Eg}
The universal property of the Clifford algebra can, for instance, be used to show that the action of the (pseudo-)orthogonal group:
\begin{align}
   \O(V,\ip) \times \Cl(V,\ip) &\to \Cl(V,\ip), \\
   (g,x) & \mapsto \rcl(g)(x),
\end{align}
given by:
\begin{align}
    &\rcl(g)\lp \sum_{i \in I} c_i \cdot v_{i,1} \gp \cdots \gp v_{i,l_i} \rp \notag \\ 
    &:= \sum_{i \in I} c_i \cdot (g v_{i,1}) \gp \cdots \gp (g v_{i,l_i}), \label{eq:Opq-action-clf}
\end{align}
is well-defined. For this one only would need to check \Cref{eq:univ-prop} for $v \in V$:
\begin{align}
    (gv) \gp (gv) &= \ip(gv,gv) \cdot 1_{\Cl(V,\ip)} \\ & = \ip(v,v) \cdot 1_{\Cl(V,\ip)},
\end{align}
where the first equality holds by the fundamental relation of the Clifford algebra and 
where the last equality holds by definition of $\O(V,\ip) \ni g$. 
So the linear map $g:\, V \to \Cl(V,\ip)$, by the universal property of the Clifford algebra, thus uniquely extends to the algebra homomorphism:
\begin{align}
    \rcl(g):\, \Cl(V,\ip) \to \Cl(V,\ip),
\end{align}
as defined in \Cref{eq:Opq-action-clf}. One can then check the remaining rules for a group action in a straightforward way.

More details can be found in \cite{ruhe2023geometric} Appendix D and E.
\end{Eg}

\section{Proofs}
\label{sec:proofs}

\begin{Prf}[Equivariance of the kernel head]{prp:kernel-head-equiv}
Recall the definition of the kernel head:
\begin{align}
    H\mkern-4mu:\mkern1mu  \Cl(\Rpq)^{\cout\mkern-1mu\times \cin}  \!&\to
    \Hom_{\vecrm} \mkern-4mu\big(\mkern-4mu \Cl(\Rpq)^\cin\!, \Cl(\Rpq)^\cout\mkern-1mu \big) \notag \\ 
    \kb  &\mapsto H(\kb) =\big[ \fd \mapsto H(\kb)[\mkern2mu\fd\mkern2mu]\big], 
\end{align}
which on each output channel $i \in [\cout]$ and grade component $k=0,\dots,d$, was given by:
\begin{align*}
    H(\kb)[\mkern2mu\fd\mkern2mu]_i^{(k)}
    &:= \sum\nolimits_{\substack{j \in [\cin]\\m,n=0,\dots,d}} w_{mn,ij}^k \cdot \lp \kb_{ij}^{(m)} \gp \fd_j^{(n)}\rp^{(k)}, 
    \label{eq:kernel-head}
\end{align*}
with:
\vspace*{-2ex}
\begin{alignat*}{3}
    && w_{mn,ij}^k\ &\in\ \Rbb \,, \\
    \kb\ &=&\ [\kb_{i,j}]_{\substack{i \in [\cout]\\j \in\ [\cin]}}\ &\in \Cl(\Rpq)^{\cout \times \cin} \,, \\
    \fd\ &=&\ [\fd_{1},\dots,\fd_{\cin}]\ &\in\ \Cl(\Rpq)^\cin \,.
\end{alignat*}

Clearly, $H(\kb)$ is a $\Rbb$-linear map (in $\fd$).
Now let $g \in \O(p,q)$.
We are left to check the following equivariance formula:
\begin{align}
    H\big(\rcl[\cout\times\cin](g)(\kb)\big)
    \stackrel{?}{=}\ & \rho_{\Hom}(g)\big(H(\kb)\big) \\
    :=\ & \rcl[\cout](g)\, H(\kb) \, \rcl[\cin](g^{-1}). \notag
\end{align}
We abbreviate
\begin{alignat*}{3}
    s\ :=&\ \rcl[\cin](g^{-1}) (\mkern2mu\fd\mkern2mu)\ &\in&\ \Cl(\Rpq)^\cin \,, \\
    Q\ :=&\ \rcl[\cout\times\cin](g)(\kb)\ &\in&\ \Cl(\Rpq)^{\cout \times \cin} \,.
\end{alignat*}

First note that we have for $j \in [\cin]$:
\begin{align}
    \rcl(g)(s_j) &= \fd_j.
\end{align}
We then get:
\begin{align*}
&   \Big[ \rho_{\Hom}(g)\big(H(\kb)\big)[\mkern2mu\fd\mkern2mu] \Big]_i^{(k)} \\
&=  \Big[ \rcl[\cout](g) \Big( H(\kb) \big[ \rcl[\cin](g^{-1})(\mkern2mu\fd\mkern2mu) \big] \Big) \!\Big]_i^{(k)} \\
&=  \Big[ \rcl[\cout](g) \big( H(\kb) \lB s \rB \big) \Big]_i^{(k)} \\
&= \rcl(g) \lp \big[  H(\kb) \lB s \rB \big]_i^{(k)} \rp \\
&= \rcl(g) \lp   \sum\nolimits_{\substack{j \in [\cin]\\m,n=0,\dots,d}} w_{mn,ij}^k \cdot \lp \kb_{ij}^{(m)} \gp s_j^{(n)}\rp^{(k)}  \rp \\
&=  \sum_{\substack{j \in [\cin]\\m,n=0,\dots,d}} \mkern-14mu w_{mn,ij}^k \mkern-4mu\cdot\mkern-2mu \lp \big[\rcl(g)(\kb_{ij})\big]^{(m)} \gp \big[ \rcl(g)(s_j)\big]^{(n)}\rp^{(k)}  \\
&=  \sum\nolimits_{\substack{j \in [\cin]\\m,n=0,\dots,d}} w_{mn,ij}^k \cdot \lp Q_{ij}^{(m)} \gp \fd_j^{(n)}\rp^{(k)}  \\
&= \Big[ H(Q)[\mkern2mu\fd\mkern2mu] \Big]_i^{(k)} \\
&= \Big[ H\big(\rcl[\cout\times\cin](g)(\kb)\big)[\mkern2mu\fd\mkern2mu] \Big]_i^{(k)}.
\end{align*}
Note that we repeatedly made use of the rules in \Cref{thm:group_action_on_Clifford_full} and \Cref{thm:group_action_on_Clifford_grade}, i.e.\ the linearity, composition, multiplicativity and grade preservation of $\rcl(g)$.
As this holds for all $m$, $k$ and $\fd$ we get the desired equation,
    \begin{align}
        \rho_{\Hom}(g)(H(\kb)) &=  H(\rcl[\cout\times\cin](g)(\kb)),
    \end{align}
which shows the claim. 
\end{Prf}

\section{Clifford-steerable CNNs on pseudo-Riemannian manifolds}
\label{sec:cs-cnn-prm}

In this section we will assume that the reader is already familiar with the general definitions of differential geometry, which can also be found in \citet{weiler2021coordinateIndependent,weiler2023EquivariantAndCoordinateIndependentCNNs}.
We will in this section state the most important results for deep neural networks that process feature fields on \emph{$G$-structured pseudo-Riemannian manifolds}.
These results are direct generalizations from those in \citet{weiler2023EquivariantAndCoordinateIndependentCNNs}, where they were stated for ($G$-structured) Riemannian manifolds, but which verbatim generalize to ($G$-structured) \emph{pseudo}-Riemannian manifolds if one replaces $\O(d)$ with $\O(p,q)$ everywhere.

Recall, that in this geometric setting a signal $f$ on the manifold $M$ is typically represented by a feature field $f:\, M \to \Acal$ of a certain ``type'', like a scalar field, vector field, tensor field, multi-vector field, etc. 
Here $f$ assigns to each point $z$ an $n$-dimensional feature $f(z) \in \Acal_z \cong \Rbb^n$.
Formally, $f$ is a global section of a \emph{$G$-associated vector bundle} $\Acal$ with typical fibre $\Rbb^n$, i.e.\ $f \in \Gs(\Acal)$, see \citet{weiler2023EquivariantAndCoordinateIndependentCNNs} for details. We can consider $\Gs(\Acal)$ as the vector space of all vector fields of type $\Acal$.
A deep neural network $F$ on $M$ with $N$ layers can then, as before, be considered as a composition:
\begin{align}
  F:\,  \Gs(\Acal_0) \stackrel{L_1}{\to} \Gs(\Acal_1) \stackrel{L_2}{\to} \Gs(\Acal_2) \stackrel{L_3}{\to} \cdots  \stackrel{L_N}{\to}   \Gs(\Acal_N),
\end{align}
where $L_1, \dots, L_N$ are maps between the vector spaces of vector fields $\Gs(\Acal_\ell)$, which are typically linear maps or simple fixed non-linear maps. 

For the sake of analysis we can focus on one such linear layer: $L:\, \Gs(\Ain) \to \Gs(\Aout)$.

Our goal is to describe the case, where $L$ is an integral operator with an convolution kernel\footnote{Note that a convolution operator $L(f)(u)=\int K(u,v) f(v) \,dv$ can be seen as a continuous analogon to a matrix multiplication. In our theory $K$ will need to depend on only one argument, corresponding to a circulant matrix.} such that: 
\begin{enumerate*}[label=\roman*.)]
\item it is well-defined, i.e.\ independent of the choice of (allowed) local coordinate systems (\emph{covariance}),
\item we can use the \emph{same} kernel $K$ (not just corresponding ones) in \emph{any} (allowed) local coordinate system (\emph{gauge equivariance}),
\item it can do \emph{weight sharing} between different locations, meaning that the \emph{same} kernel $K$ will be applied at every location,
\item input and output transform correspondingly under global transformations (\emph{isometry equivariance}).
\end{enumerate*} 

The \emph{isometry equivariance} here is the most important property. Our main results in this Appendix will be that isometry equivariance will in fact follow from the first points, see \Cref{thm:G-cnns-isom-equiv} and \Cref{thm:cs-cnn-prm-fin}.

Before we introduce our \emph{Clifford-steerable CNNs} on general \emph{pseudo-Riemannian manifolds} with multi-vector feature fields in \Cref{sec:cs-cnns-prm}, we first recall the general theory of $G$-steerable CNNs on $G$-structured pseudo-Riemannian manifolds in total analogy to \citet{weiler2023EquivariantAndCoordinateIndependentCNNs} in the next section,  \Cref{sec:G-cnns-prm}.

\subsection{General $G$-steerable CNNs on $G$-structured pseudo-Riemannian manifolds}
\label{sec:G-cnns-prm}

For the convenience of the reader, we will now recall the most important needed concepts from pseudo-Riemannian geometry in some more generality, but refer to \citet{weiler2023EquivariantAndCoordinateIndependentCNNs} for further details and proofs.

We will assume that the curved space $M$ will carry a (non-degenerate, possibly indefinite) metric tensor $\eta$ of signature $(p,q)$, $d=p+q$, and will also come with ``internal symmetries'' encoded by a closed subgroup $G \ins \GL(d)$.

\begin{Def}[$G$-structure]
    Let $(M,\ip)$ be pseudo-Riemannian manifold of signature $(p,q)$, $d=p+q$, and $G \le \GL(d)$ a closed subgroup.
    A \emph{$G$-structure} on $(M,\ip)$ is a principle $G$-subbundle $\iota:\, GM \hookrightarrow \Frm M$ of the frame bundle $\Frm M$ over $M$. Note that $GM$ is supposed to carry the right $G$-action induced from $\Frm M$:
    \begin{align}
        \ract:\,  GM \times G & \to GM, & [e_i]_{i \in [d]} \ract g & :=  \lB \sum_{j \in [d]} e_j \, g_{j,i} \rB_{i \in [d]},
    \end{align}
    which thus makes the embedding $\iota$ a $G$-equivariant embedding.
\end{Def}

\begin{Def}[$G$-structured pseudo-Riemannian manifold]
    Let $G \le \GL(d)$ be closed subgroup. A \emph{$G$-structured pseudo-Riemannian manifold} $(M,G,\ip)$ of signature $(p,q)$ - per definition - consists of a pseudo-Riemannian manifold $(M,\ip)$ of dimension $d=p+q$ with a metric tensor $\eta$ of signature $(p,q)$, and, a \emph{fixed choice} of a \emph{$G$-structure} $\iota:\, GM \hookrightarrow \Frm M$ on $M$.

    We will denote the $G$-structured pseudo-Riemannian manifold with the triple $(M,G,\ip)$ and keep the fixed $G$-structure $\iota:\,GM \hookrightarrow \Frm M$ implicit in the notation, as well as the corresponding \emph{$G$-atlas} of local tangent bundle trivializations: 
    \begin{align}
        \Abb_G=\lC (\Psi^A,U^A) \st \pi_{\Tan M}^{-1}(U_A) \bij[\Psi^A] U_A \times \Rbb^d \rC_{A \in \Ical}
    \end{align}
    where $\Ical$ is an index set and $U^A \ins M$ are certain open subsets of $M$.
\end{Def}

\begin{Rem}
Note that for any given $G \le \GL(d)$ there might not exists a corresponding $G$-structure $GM$ on $(M,\ip)$ in general.
Furthermore, even if it existed it might not be unique. So, when we talk about such a $G$-structure in the following we always make the implicit assumption of its existence and we also fix a specific choice.
\end{Rem}

\begin{Def}[Isometry group of a $G$-structured pseudo-Riemannian manifold]
    Let $(M,G,\ip)$ be a $G$-structured pseudo-Riemannian manifold. Its ($G$-structure preserving) \emph{isometry group} is defined to be:
    \begin{align}
        & \Isom(M,G,\ip) \notag\\
        &:= \big\{ \phi:\, M \bij M \text{ diffeo} \,|\, \forall z \in M,\, v \in \Tan_zM. \notag\\
        &\qquad \eta_{\phi(z)}(\phi_{*,\Tan M}(v),\phi_{*,\Tan M}(v)) = \eta_z(v,v), \notag \\
        &\qquad \phi_{*,\Frm M}(G_zM) = G_{\phi(z)} M \big\}.
    \end{align}
\end{Def}

The intuition here is that the first condition constrains $\phi$ to be an isometry w.r.t.\ the metric $\ip$. The second condition constrains $\phi$ to be a symmetry of the $G$-structure, i.e.\ it maps $G$-frames to $G$-frames.

\begin{Rem}[Isometry group]
  Recall that the (usual/full) \emph{isometry group} of a pseudo-Riemannian manifold $(M,\ip)$ is defined as:
    \begin{align}
        & \Isom(M,\ip) \notag\\
        &:= \big\{ \phi:\, M \bij M \text{ diffeo} \,|\, \forall z \in M,\, v \in \Tan_zM. \notag\\
        &\qquad \eta_{\phi(z)}(\phi_{*,\Tan M}(v),\phi_{*,\Tan M}(v)) = \eta_z(v,v) \big\}.
    \end{align}
    Also note that for a  $G$-structured pseudo-Riemannian manifold $(M,G,\ip)$ of signature $(p,q)$ such that $\O(p,q) \le G$ we have:
 \begin{align}
     \Isom(M,G,\ip) &= \Isom(M,\ip).
 \end{align}
\end{Rem}

\begin{Def}[$G$-associated vector bundle]
\label{def:G_associated_vector_bundle}
    Let $(M,G,\ip)$ be a $G$-structured pseudo-Riemannian manifold and let $\rho:\, G \to \GL(n)$ be a left linear representation of $G$.
    A vector bundle $\Acal$ over $M$ is called a \emph{$G$-associated vector bundle (with typical fibre $(\Rbb^n,\rho)$)} if there exists a vector bundle isomorphism over $M$ of the form:
    \begin{align}
        \Acal & \bij \lp GM \times \Rbb^n \rp/\eqr_\rho =: GM \times_\rho \Rbb^n ,
    \end{align}
    where the equivalence relation is given as follows:
    \begin{align}
       & (e',v') \sim_\rho (e,v) \notag \\&:\iff \exists g \in G. \quad (e',v') = (e \ract g, \rho(g^{-1}) v).
    \end{align}
\end{Def}

\begin{Def}[Global sections of a fibre bundle]
\label{def:global_bundle_section}
    Let $\pi_\Acal: \Acal \to M$ be a fibre bundle over $M$.
    We denote the set of \emph{global sections} of $\Acal$ as:
\begin{align}
    \Gs(\Acal) &:= \lC f: M \to \Acal \st \forall z \in M.\, f(z) \in \Acal_z \rC,
\end{align}
where $\Acal_z := \pi_\Acal^{-1}(z)$ denotes the \emph{fibre} of $\Acal$ over $z \in M$.
\end{Def}

\begin{Rem}[Isometry action]
    \label{rem:isom-action}
    For a $G$-associated vector bundle $\Acal = GM \times_\rho \Rbb^n$ and $\phi \in \Isom(M,G,\ip)$ we can define the induced $G$-associated vector bundle automorphism $\phi_{*,\Acal}$ on $\Acal$ as follows:
\begin{align}
    \phi_{*,\Acal}:\, \Acal &\to \Acal, \\ \phi_{*,\Acal}\lp e,v \rp 
    &:= \lp \phi_{*,GM}(e),v \rp.
\end{align}
With this we can define a left action of the group $\Isom(M,G,\ip)$ on the corresponding space of feature fields $\Gs(\Acal)$ as follows:
\begin{align}
    \lact:\, \Isom(M,G,\ip) \times \Gs(\Acal) \to \Gs(\Acal), \\
\phi \lact f := \phi_{*,\Acal} \circ f \circ \phi^{-1}:\, M \to \Acal.
\end{align}
\end{Rem}

To construct a well-behaved convolution operator on $M$ we first need to introduce the idea of a transporter of feature fields along a curve $\gamma:\, I \to M$.

\begin{Rem}[Transporter]
    A \emph{transporter} $\Trpt_\Acal$ on the vector bundle $\Acal$ over $M$ takes any (sufficiently smooth) curve $\gamma:\, I \to M$ with $I \ins \Rbb$ some interval and two points $s,t \in I$, $s \le t$, and provides an invertible linear map:
    \begin{align}
        \Trpt_{\Acal,\gamma}^{s,t}:\, \Acal_{\gamma(s)} &\bij \Acal_{\gamma(t)}, & v & \mapsto \Trpt_{\Acal,\gamma}^{s,t}(v).
    \end{align}
    $\Trpt_\Acal$ is thought to transport the vector $v \in \Acal_{\gamma(s)}$ at location $\gamma(s) \in M$ along the curve $\gamma$ to the location $\gamma(t) \in M$ and outputs a vector $\tilde v=\Trpt_{\Acal,\gamma}^{s,t}(v)$ in $\Acal_{\gamma(t)}$.
   
    For consistency we require that $\Trpt_\Acal$ satisfies the following points for such $\gamma$:
    \begin{enumerate}
        \item For $s \in I$ we get: $\Trpt_{\Acal,\gamma}^{s,s} \stackrel{!}{=} \id_{\Acal_{\gamma(s)}} :\, \Acal_{\gamma(s)} \bij \Acal_{\gamma(s)}$, 
        \item For $s \le t \le u$ we have:
    \begin{align}
        \Trpt_{\Acal,\gamma}^{t,u} \circ \Trpt_{\Acal,\gamma}^{s,t} \stackrel{!}{=} \Trpt_{\Acal,\gamma}^{s,u}:\, \Acal_{\gamma(s)} &\bij \Acal_{\gamma(u)}.
    \end{align}
    \end{enumerate}
      Furthermore, the dependence on  $s$, $t$ and $\gamma$ shall be ``sufficiently smooth'' in a certain sense.
 
      We call a transporter $\Trpt_{\Tan M}$ on the tangent bundle $\Tan M$ a \emph{metric transporter} 
      if the map: 
    \begin{align}
      \Trpt_{\Tan M,\gamma}^{s,t}:\, (\Tan_{\gamma(s)} M, \ip_{\gamma(s)} ) \bij (\Tan_{\gamma(t)} M, \ip_{\gamma(t)})
  \end{align}
   is always an isometry.
\end{Rem}

To construct \emph{transporters} we need to introduce the notion of a \emph{connection} on a vector bundle, which formalized how vector fields change when moving from one point to the next.

\begin{Def}[Connection]
    A \emph{connection} on a vector bundle $\Acal$ over $M$ is an $\Rbb$-linear map:
    \begin{align}
        \nabla:\, \Gs(\Acal) &\to \Gs(\Tan^* M \otimes \Acal),
    \end{align}
    such that for all $c: M \to \Rbb$ and $f \in \Gs(\Acal)$ we have:
    \begin{align}
        \nabla(c \cdot f) &= dc \otimes f + c \cdot \nabla(f),
    \end{align}
    where $dc \in \Gs(\Tan^* M)$ is the differential of $c$.
\end{Def}

A special form of a connection are affine connections, which live on the tangent space.

\begin{Def}[Affine connection]
    An \emph{affine connection} on $M$  (or more precisely, on $\Tan M$) is an $\Rbb$-bilinear map:
    \begin{align}
        \nabla:\, \Gs(\Tan M) \times \Gs(\Tan M)  & \to \Gs(\Tan M), \\ (X,Y) & \mapsto \nabla_X Y,
    \end{align}
    such that for all $c: M \to \Rbb$ and $X,Y \in \Gs(\Tan M)$ we have:
    \begin{enumerate}
        \item $\nabla_{c \cdot X} Y = c \cdot \nabla_X Y$,
        \item $\nabla_X(c \cdot Y) = (\partial_X c) \cdot Y + c \cdot \nabla_X Y$,
    \end{enumerate}
    where $\partial_Xc$ denotes the directional derivative of $c$ along $X$.
\end{Def}

\begin{Rem}
Certainly, an affine connection can also be re-written in the usual connection form:
    \begin{align}
        \nabla:\, \Gs(\Tan M) &\to \Gs(\Tan^* M \otimes \Tan M).
    \end{align}
\end{Rem}

Every connection defines a (parallel) transporter $\Trpt_{\Acal}$.

\begin{DefLem}[Parallel transporter of a connection]
    Let $\nabla$ be a connection on the vector bundle $\Acal$ over $M$. Then $\nabla$ defines a (parallel) transporter $\Trpt_{\Acal}$ for $\gamma: I=[s,t] \to M$ as follows:
   \begin{align}
       \Trpt_{\Acal,\gamma}^{s,t}:\, \Acal_{\gamma(s)}  &\bij \Acal_{\gamma(t)}, & v & \mapsto f(t),
  \end{align}
  where $f$ is the unique vector field $f \in \Gs(\gamma^*\Acal)$  with:
  \begin{enumerate}
      \item $(\gamma^*\nabla)(f) = 0$,
      \item $f(s) = v$,
  \end{enumerate}
  which always exists. Here $\gamma^*$ denotes the corresponding pullback from $M$ to $I$.
\end{DefLem}

For pseudo-Riemannian manifolds there is a ``canonical'' choice of a metric connection, the Levi-Cevita connection, which always exists and is uniquely characterized by its two main properties.

\begin{DefThm}[Fundamental theorem of pseudo-Riemannian geometry: the Levi-Civita connection]
    Let $(M,\ip)$ be a pseudo-Riemannian manifold. Then there exists a unique affine connection $\nabla$ on $(M,\ip)$ such that the following two conditions hold for all $X,Y,Z \in \Gs(\Tan M)$;
    \begin{enumerate}
        \item metric preservation: %
            \begin{align}
                \partial_Z \lp \ip(X,Y) \rp &= \ip(\nabla_Z X,Y) + \ip(X,\nabla_Z Y).
            \end{align}
        \item torsion-free: 
            \begin{align}
                 \nabla_XY-\nabla_YX =  [X,Y],
            \end{align}
            where $[X,Y]$ is the Lie bracket of vector fields. 
    \end{enumerate}
    This affine connection is called the \emph{Levi-Cevita connection} of $(M,\ip)$ and is denoted as $\nabla^\LC$.
\end{DefThm}

\begin{Rem}[Levi-Civita transporter] %
    Let $(M,G,\ip)$ be a pseudo-Riemannian manifold with Levi-Cevita connection $\nabla^\LC$.
\begin{enumerate}
    \item The corresponding Levi-Cevita transporter $\Trpt_{\Tan M}$ on $\Tan M$ is always a metric transporter, i.e.\ it always induces (linear) isometries of vector spaces:
     \begin{align}
         \Trpt_{\Tan M,\gamma}^{s,t}:\, (\Tan_{\gamma(s)} M, \ip_{\gamma(s)}) \bij (\Tan_{\gamma(t)} M, \ip_{\gamma(t)}).
    \end{align}
\item Furthermore, the Levi-Cevita transporter extends to every $G$-associated vector bundle $\Acal$ as $\Trpt_{\Acal}$.
\item     For every $G$-associated vector bundle $\Acal$, every curve $\gamma: I \to M$ and $\phi \in \Isom(M,G,\ip)$, 
    the Levi-Cevita transporter $\Trpt_{\Acal,\gamma}$ always satisfies:
    \begin{align}
        \phi_{*,\Acal} \circ \Trpt_{\Acal,\gamma} &= \Trpt_{\Acal,\phi \circ \gamma} \circ\, \phi_{*,\Acal}. \label{eq:isom-trpt}
    \end{align}
\end{enumerate}
\end{Rem}

\begin{Def}[Geodesics]
    Let $M$ be a manifold with affine connection $\nabla$ and $\gamma: I \to M$ a curve.
    We call $\gamma$ a \emph{geodesic} of $(M,\nabla)$ if for all $t \in I$ we have:
    \begin{align}
        \nabla_{\dot\gamma(t)}\dot \gamma(t) = 0,
    \end{align}
    i.e.\ if $\gamma$ runs parallel to itself.

    For pseudo-Riemannian manifolds $(M,\ip)$ we will typically use the Levi-Cevita connection $\nabla^\LC$ to define geodesics. %
\end{Def}

\begin{DefLem}[Pseudo-Riemannian exponential map]
    For a manifold $M$ with affine connection $\nabla$, $z \in M$ and $v \in \Tan_zM$ there exists 
    a unique geodesic $\gamma_{z,v}:I=(-s,s) \to M$ of $(M,\nabla)$ with maximal domain $I$ such that:
    \begin{align}
        \gamma_{z,v}(0) & = z, & \dot \gamma_{z,v} (0) & = v.
    \end{align}
    The \emph{$\nabla$-exponential map} at $z \in M$ is then the map:
    \begin{align}
        \exp_z: \, \Tan_z^\circ M &\to M, & \exp_z(v) &:= \gamma_{z,v}(1),
    \end{align}
    with domain:
    \begin{align}
        \Tan_z^\circ M &:= \lC v \in \Tan_z M \st \gamma_{z,v}(1) \text{ is defined} \rC.
    \end{align}
    For pseudo-Riemannian manifolds $(M,\ip)$ we will call the exponential map $\exp_z$ defined via the Levi-Cevita connection $\nabla^\LC$ the \emph{pseudo-Riemannian exponential map} of $(M,\ip)$ at $z \in M$.
\end{DefLem}

\begin{Rem}
    For a pseudo-Riemannian manifold $(M,\ip)$ the differential $d\exp_z|_v:  \Tan_v \Tan_z M \to \Tan_{\exp_z(v)}M$ 
    is the identity map on $\Tan_z M$ at $v=0 \in \Tan_z M$: 
$d\exp_z|_{v=0}\stackrel{!}{=}\id_{\Tan_z M}:$  $\Tan_z M = \Tan_0 \Tan_z M \to  \Tan_{\exp_z(0)}M =  \Tan_zM$.   
    
    Furthermore, there exist an open subset $U_z \ins \Tan_z M$ such that $0 \in U_z$ and $\exp_z: U_z \to \exp_z(U_z) \ins M$ is a diffeomorphism and $\exp_z(U_z) \ins M$ is an open subset.
\end{Rem}

\begin{Not}
    For a transporter $\Trpt_\Acal$ for a vector bundle on $(M,\nabla)$  we abbreviate for $z \in M$ and $v \in \Tan_z^\circ M$:
    \begin{align}
        \Trpt_{z,v} := \Trpt_{\Acal,\gamma_{z,v}^-}: \, \Acal_{\exp_z(v)} & \bij \Acal_z,
    \end{align}
    where 
    $\gamma_{z,v}^- : [0,1] \to M$ is given by $\gamma_{z,v}^-(t) :=  \exp_z((1-t)\cdot v)$.
\end{Not}

\begin{Def}[Transporter pullback, see \citet{weiler2023EquivariantAndCoordinateIndependentCNNs} Def.\ 12.2.4] %
    \label{def:trpt-pullback}
    Let $(M,\ip)$ be a pseudo-Riemannian manifold and $\Acal$ a vector bundle over $M$.
    Furthermore, let $\exp_z$ denote the pseudo-Riemannian exponential map (based on the Levi-Civita connection)
    and $\Trpt_\Acal$ any transporter on $\Acal$.
    We then define the \emph{transporter pullback}:
    \begin{align}
        \Exp_z^*:\, \Gs(\Acal) &\to C(\Tan_z^\circ M, \Acal_z), \\
        \Exp_z^*(f)(v) & := \Trpt_{z,v}\big( \underbrace{f(\exp_z(v))}_{\in \Acal_{\exp_z(v)}} \big) \in \Acal_z.
    \end{align}
\end{Def}

\begin{Lem}[See \citet{weiler2023EquivariantAndCoordinateIndependentCNNs} Thm.\ 13.1.4]
    For $G$-structured pseudo-Riemannian manifold $(M,G,\ip)$ and $G$-associated vector bundle $\Acal$,
    $z \in M$, $\phi \in \Isom(M,G,\ip)$ and $f \in \Gs(\Acal)$ we have:
    \begin{align}
        \Exp_z^*(\phi \lact f) &= \phi_{*,\Acal} \circ [\Exp_{\phi^{-1}(z)}^*(f)] \circ \phi_{*,\Tan M}^{-1},
    \end{align}
    provided the transporter map $\Trpt_\Acal$ satisfies \Cref{eq:isom-trpt}.
\end{Lem}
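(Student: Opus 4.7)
The plan is to unfold both sides on a test vector $v \in \Tan_z^\circ M$ and reduce the identity to two standard compatibilities: the isometry-equivariance of the pseudo-Riemannian exponential map, and the compatibility of the transporter with $\phi$ given in \eqref{eq:isom-trpt}. First, I would fix $v \in \Tan_z^\circ M$ and write
\begin{align*}
    \Exp_z^*(\phi \lact f)(v)
    \,=\, \Trpt_{z,v}\big( (\phi \lact f)(\exp_z(v)) \big)
    \,=\, \Trpt_{z,v}\!\big( \phi_{*,\Acal}\big( f(\phi^{-1}(\exp_z(v))) \big) \big),
\end{align*}
using \Cref{def:trpt-pullback} and \Cref{rem:isom-action}. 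The goal is then to ``push $\phi$ to the outside'' of both the base-point evaluation and the transporter.

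The key geometric fact I would next invoke is isometry-equivariance of the Levi-Civita exponential map: since $\phi \in \Isom(M,G,\ip)$ maps geodesics to geodesics and $\phi_{*,\Tan M}$ carries initial velocities accordingly, one has
\begin{align*}
    \phi^{-1}(\exp_z(v)) \,=\, \exp_{\phi^{-1}(z)}\!\big( \phi_{*,\Tan M}^{-1}(v) \big).
\end{align*}
Setting $w := \phi_{*,\Tan M}^{-1}(v) \in \Tan_{\phi^{-1}(z)}^\circ M$, the inner argument becomes $f(\exp_{\phi^{-1}(z)}(w))$, which is exactly what appears inside $\Exp_{\phi^{-1}(z)}^*(f)(w)$.

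The remaining step is to commute $\phi_{*,\Acal}$ through $\Trpt_{z,v}$. Applying \eqref{eq:isom-trpt} to the geodesic ray $\gamma_{\phi^{-1}(z),w}^-(t) = \exp_{\phi^{-1}(z)}((1-t)w)$, whose image under $\phi$ is $\gamma_{z,v}^-$ by the same exponential-equivariance, one obtains
\begin{align*}
    \phi_{*,\Acal} \circ \Trpt_{\phi^{-1}(z),w} \,=\, \Trpt_{z,v} \circ \phi_{*,\Acal},
\end{align*}
so that $\Trpt_{z,v} \circ \phi_{*,\Acal} = \phi_{*,\Acal} \circ \Trpt_{\phi^{-1}(z),w}$ as maps $\Acal_{\exp_{\phi^{-1}(z)}(w)} \to \Acal_z$. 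Combining, the displayed expression collapses to $\phi_{*,\Acal}\big( \Trpt_{\phi^{-1}(z),w}( f(\exp_{\phi^{-1}(z)}(w))) \big) = \phi_{*,\Acal}\big( \Exp_{\phi^{-1}(z)}^*(f)(\phi_{*,\Tan M}^{-1}(v)) \big)$, which is the claimed identity evaluated at $v$.

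The proof is essentially a diagram-chase, so no step is truly hard, but the one that warrants care is the compatibility $\phi^{-1} \circ \gamma_{z,v}^- = \gamma_{\phi^{-1}(z),w}^-$ of geodesic rays under $\phi$; this is what makes \eqref{eq:isom-trpt} applicable with matching endpoints and is also what forces the argument of the inner $\Exp^*$ to be $\phi_{*,\Tan M}^{-1}(v)$ rather than $v$ itself. Once this matching of curves is in place, linearity and the functoriality of $(\,\cdot\,)_{*,\Acal}$ finish the computation.
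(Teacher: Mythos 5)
Your proof is correct: the paper itself gives no proof of this lemma but defers to the cited Thm.~13.1.4 of \citet{weiler2023EquivariantAndCoordinateIndependentCNNs}, whose argument is exactly this pointwise diagram chase (unfold $\Exp_z^*$ on a test vector, use that isometries preserve the Levi-Civita connection so $\phi^{-1}\circ\exp_z = \exp_{\phi^{-1}(z)}\circ\,\phi_{*,\Tan M}^{-1}$, then apply \eqref{eq:isom-trpt} to the matched geodesic rays). Your identification $\phi\circ\gamma_{\phi^{-1}(z),w}^- = \gamma_{z,v}^-$ with $w=\phi_{*,\Tan M}^{-1}(v)$ is precisely the step that makes the transporter compatibility applicable, so nothing is missing.
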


\emph{Weight sharing} for the convolution operator $I$ boils down to the use of a \emph{template convolution kernel} $K$, which is then applied/re-used at every location $z \in M$.

\begin{Def}[Template convolution kernel]
    Let $M$ be a manifold of dimension $d$ and $\Ain$ and $\Aout$ two vector bundles over $M$ with typical fibres $\Win$ and $\Wout$, resp.
 A \emph{template convolution kernel} for $(M,\Ain,\Aout)$ is then a (sufficiently smooth, non-linear) map:
\begin{align}
    K:\, \Rbb^d \to \Hom_\vecrm(\Win,\Wout),
\end{align}
that is sufficiently decaying when moving away from the origin $0 \in \Rbb^d$ (to make all later constructions, like convolution operations, etc., well-defined).
\end{Def}

The \emph{$G$-gauge equivariance} of a convolution operator $I$ is encoded by the following \emph{$G$-steerability} of the template convolution kernel.

\begin{Def}[$G$-steerability convolution kernel constraints]
    Let $G \le \GL(d)$ be a closed subgroup and $(M,G,\ip)$ be a $G$-structured pseudo-Riemannian manifold of signature $(p,q)$, $d=p+q$, and $\Ain$ and $\Aout$ two $G$-associated vector bundles with typical fibre $(\Win,\rin)$ and $(\Wout,\rout)$, resp.
    A template convolution kernel $K$ for $(M,\Ain,\Aout)$:
\begin{align}
    K:\, \Rbb^d \to \Hom_\vecrm(\Win,\Wout),
\end{align}
    will be called \emph{$G$-steerable} if 
    for all $g \in G$ and $v \in \Rbb^d$ we have:
 \begin{align}
    K(gv) &= \frac{1}{|\det g|}\, \rout(g) \, K(v) \, \rin(g)^{-1} \label{eq:kernel-constraints}
    \\&=: \rhom(g)(K(v)).  \label{eq:kernel-constraints-as-equiv}
\end{align}
\end{Def}

\begin{Rem}
   Note that the \emph{$G$-steerability} of $K$ is expressed through \Cref{eq:kernel-constraints}, while the \emph{$G$-gauge equivariance} of $K$ will, more closely, be expressed through the re-interpretation in \Cref{eq:kernel-constraints-as-equiv}. 
\end{Rem}

\begin{Def}[Convolution operator, see \citet{weiler2023EquivariantAndCoordinateIndependentCNNs} Thm.\ 12.2.9]
    Let $(M,G,\ip)$ be a $G$-structured pseudo-Riemannian manifold and $\Ain$ and $\Aout$ two $G$-associated vector bundles over $M$ with typical fibres $(\Win,\rin)$ and $(\Wout, \rout)$ and $K$ a \emph{$G$-steerable} template convolution kernel, see \Cref{eq:kernel-constraints}.
    Let $f_\jin \in \Gs(\Ain)$ and consider a  local trivialization $(\Psi^C,U^C) \in \Abb_G$ around $z \in  U^C \ins M$ (which locally trivializes $\Ain$ and $\Aout$).
 Then we have a well-defined \emph{convolution operator}:
    \begin{align}
        L:\, \Gs(\Ain) &\to \Gs(\Aout), & f_\jin &\mapsto L(f_\jin):=f_\jout,
    \end{align}
 given by the local formula:
\begin{align}
    f_\jout^C(z) &:= \int_{\Rbb^d} K(v^C) \lB [\Exp^*_z f_\jin]^C(v^C) \rB \, dv^C,   \label{eq:conv-def}
\end{align}
where $\Exp_z^*$ is the transporter pullback from \Cref{def:trpt-pullback}, where $\exp_z$ denotes the pseudo-Riemannian exponential map (based on the Levi-Cevita connection $\nabla^\LC$) and $\Trpt_{\Ain}$ any transporter satisfying \Cref{eq:isom-trpt} (e.g.\ parallel transport based on $\nabla^\LC$).
\end{Def}

\begin{Rem}[Coordinate independence of the convolution operator]
The coordinate independence of the convolution operator $L: \Gs(\Ain) \to \Gs(\Aout)$ comes from the following \emph{covariance relations} and \Cref{eq:kernel-constraints}.

If we use a different local trivialization $(\Psi^B,U^B) \in \Abb_G$ in \Cref{eq:conv-def} with $z \in U^B \cap U^C$ then there exists a $g \in G$ such that:
    \begin{align}
        v^C &= g \, v^B \in \Rbb^d,\\
        dv^C &= |\det g| \cdot dv^B, \\
        [\Exp_z^*f_\jin]^C(v^C) &= \rin(g)\, [\Exp_z^* f_\jin]^B(v^B)  \in \Win, \\
        f_\jout^C(z) &= \rout(g) f_\jout^B(z) \in \Wout.
    \end{align}
So, $f_\jout: M \to \Aout$ is a well-defined global section in $\Gs(\Aout)$.
\end{Rem}

We are finally in the place to state the main theorem of this section, stating that every 
\emph{$G$-steerable} template convolution kernel leads to an \emph{isometry equivariant} convolution operator.

\begin{Thm}[Isometry equivariance of convolution operator, see \citet{weiler2023EquivariantAndCoordinateIndependentCNNs} Thm.\ 13.2.6]
    \label{thm:G-cnns-isom-equiv}
    Let $G \le \GL(d)$ be closed subgroup and $(M,G,\ip)$ be a $G$-structured pseudo-Riemannian manifold of signature $(p,q)$ with $d=p+q$.
    Let $\Ain$ and $\Aout$ be two $G$-associated vector bundles with typical fibres $(\Win,\rin)$ and $(\Wout,\rout)$.
    Let $K$ be a \emph{$G$-steerable} template convolution kernel, see \Cref{eq:kernel-constraints}.
    Consider the corresponding convolution operator $L:\,\Gs(\Ain) \to \Gs(\Aout)$ given by \Cref{eq:conv-def}, 
    where $\exp_z$ denotes the pseudo-Riemannian exponential map (based on the Levi-Cevita connection $\nabla^\LC$) and $\Trpt_{\Ain}$ any transporter satisfying \Cref{eq:isom-trpt} (e.g.\ parallel transport based on $\nabla^\LC$).
    
    Then the convolution operator $L:\,\Gs(\Ain) \to \Gs(\Aout)$  is \emph{equivariant} w.r.t.\ the $G$-structure preserving \emph{isometry group} $\Isom(M,G,\ip)$: for every $\phi \in \Isom(M,G,\ip)$ and $f_\jin \in \Gs(\Ain)$ we have:
    \begin{align}
          L(\phi \lact f_\jin) &=\phi \lact L(f_\jin).
    \end{align}
\end{Thm}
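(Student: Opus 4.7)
\textbf{Proof plan for \Cref{thm:G-cnns-isom-equiv}.}

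The plan is to evaluate both sides of the equivariance equation at an arbitrary point $z \in M$ in an arbitrary local trivialization, and then to reduce the identity $[L(\phi \lact f_\jin)](z) = [\phi \lact L(f_\jin)](z)$ to a change of variables in the defining convolution integral \eqref{eq:conv-def}. First I fix $\phi \in \Isom(M,G,\ip)$, $f_\jin \in \Gs(\Ain)$, and $z \in M$. I pick a $G$-chart $(\Psi^C,U^C)$ around $z$ and a $G$-chart $(\Psi^B,U^B)$ around $\phi^{-1}(z)$, chosen from the $G$-atlas $\Abb_G$. Because $\phi$ preserves the $G$-structure, the pushforward $\phi_{*,\Tan M}|_{\phi^{-1}(z)}: \Tan_{\phi^{-1}(z)}M \to \Tan_zM$ expressed relative to the frames of these two charts equals left multiplication by some $g \in G$; similarly $\phi_{*,\Ain}$ and $\phi_{*,\Aout}$ act fiberwise by $\rin(g)$ and $\rout(g)$ respectively on the trivializations.

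Next I unpack $[L(\phi \lact f_\jin)](z)$ in the chart $C$ using \eqref{eq:conv-def}:
\begin{align*}
    [f_\jout^{\phi}]^C(z) \,=\, \int_{\Rbb^d} K(v^C)\big[[\Exp_z^*(\phi\lact f_\jin)]^C(v^C)\big]\, dv^C .
\end{align*}
The key ingredient is the lemma cited just before the definition of the convolution operator, which states that the transporter pullback intertwines the isometry action:
\begin{align*}
    \Exp_z^*(\phi \lact f_\jin) \,=\, \phi_{*,\Ain} \circ [\Exp_{\phi^{-1}(z)}^*(f_\jin)] \circ \phi_{*,\Tan M}^{-1}.
\end{align*}
In local coordinates on $\Rbb^d$, this says $[\Exp_z^*(\phi\lact f_\jin)]^C(v^C) = \rin(g)\cdot[\Exp_{\phi^{-1}(z)}^*(f_\jin)]^B(g^{-1}v^C)$. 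Substituting this into the integral and performing the linear change of variables $v^B := g^{-1} v^C$, with Jacobian $dv^C = |\det g|\, dv^B$, yields
\begin{align*}
    [f_\jout^\phi]^C(z) \,=\, |\det g|\int_{\Rbb^d} K(g v^B)\, \rin(g)\,\big[[\Exp_{\phi^{-1}(z)}^*(f_\jin)]^B(v^B)\big]\, dv^B.
\end{align*}

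Now I apply the $G$-steerability constraint \eqref{eq:kernel-constraints} to replace $K(g v^B)$ by $|\det g|^{-1}\rout(g) K(v^B) \rin(g)^{-1}$. The two factors of $|\det g|$ cancel, and the inner $\rin(g)^{-1}\rin(g) = \id$ disappears, leaving
\begin{align*}
    [f_\jout^\phi]^C(z) \,=\, \rout(g)\int_{\Rbb^d} K(v^B)\big[[\Exp_{\phi^{-1}(z)}^*(f_\jin)]^B(v^B)\big]\, dv^B .
\end{align*}
The remaining integral is precisely $[L(f_\jin)]^B(\phi^{-1}(z)) = f_\jout^B(\phi^{-1}(z))$, and $\rout(g)$ acting on this is exactly how $\phi_{*,\Aout}$ acts on the fiber of $\Aout$ at $\phi^{-1}(z)$ expressed in the two charts, so we obtain $[f_\jout^\phi]^C(z) = [\phi \lact L(f_\jin)]^C(z)$. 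Since the argument is independent of chart and point, this proves the equivariance.

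The main obstacle I expect is the careful bookkeeping of the gauge transformation: showing that in the chosen pair of $G$-charts the pushforward of $\phi$ really is realized by a single element $g \in G$ acting simultaneously and compatibly on $\Tan M$, $\Ain$, $\Aout$, and on the transporter (via the compatibility in \Cref{eq:isom-trpt}). Once this piece is in place, the remaining work is a mechanical substitution combining the pullback lemma, the change of variables, and the steerability constraint, with the determinant factor in \eqref{eq:kernel-constraints} precisely designed to absorb the Jacobian.
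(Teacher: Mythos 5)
Your plan is correct and is essentially the intended argument: the paper itself defers the proof to \citet{weiler2023EquivariantAndCoordinateIndependentCNNs} (Thm.\ 13.2.6), and the ingredients it sets up — the transporter-pullback lemma $\Exp_z^*(\phi \lact f) = \phi_{*,\Acal} \circ [\Exp_{\phi^{-1}(z)}^*(f)] \circ \phi_{*,\Tan M}^{-1}$, the chart covariance relations, and the $G$-steerability constraint with its $|\det g|^{-1}$ factor — are exactly what you assemble, including the key observation that $G$-structure preservation makes $\phi_*$ act by a single $g \in G$ on $\Tan M$, $\Ain$, and $\Aout$ in a pair of $G$-charts. No gaps beyond the bookkeeping you already flag.
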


So the main obstruction for constructing a well-behaved convolution operator $L$ are thus the kernel constraints \Cref{eq:kernel-constraints}, which are generally notoriously difficult to solve, especially for continuous \emph{non-compact} groups $G$ like $\O(p,q)$.

\subsection{Clifford-steerable CNNs on pseudo-Riemannian manifolds}
\label{sec:cs-cnns-prm}

Let $(M,\ip)$ be a pseudo-Riemannian manifold of signature $(p,q)$ and dimension $d=p+q$. 

Then $(M,\ip)$ carries a unique $\O(p,q)$-structure $\O\! M$ induced by $\ip$. The intuition is that $\O\! M$ consists of all orthonormal frames w.r.t.\ $\ip$. 
In fact, the choice of an $\O(p,q)$-structure on $M$ is equivalent to the choice of a metric $\ip$ of signature $(p,q)$ on $M$.
That said, we will now restrict to the structure group $G=\O(p,q)$ everywhere in the following.

We will further restrict to \emph{multi-vector feature fields} $\Ain := \Cl(\Tan M, \ip)^\cin$ and $\Aout := \Cl(\Tan M, \ip)^\cout$, which we first need to formalize properly.

\begin{Def}[Clifford algebra bundle]
    Let $(M,\ip)$ be a pseudo-Riemannian manifold. 
Then the \emph{Clifford algebra bundle} over $M$ is defined (as a set) as the disjoint union of the Clifford algebras of the corresponding tangent spaces:
\begin{align}
    \Cl(\Tan M,\ip) &:= \bigsqcup_{z\in M} \Cl(\Tan_zM,\ip_z).
\end{align}
$\Cl(TM,\ip)$ becomes an algebra bundle over $M$ with the standard constructions of local trivialization and bundle projections.
\end{Def}

\begin{Def}[Othonormal frame bundle of signature $(p,q)$.]
Let $(M,\ip)$ be a pseudo-Riemannian manifold of signature $(p,q)$ and dimension $d=p+q$.
Abbreviate for indices $i,j \in [d]$:
\begin{align}
    \delta_{i,j}^{p,q} &:= 
    \begin{cases}
        0 & \text{ if } i \neq j,\\
        +1& \text{ if } i=j \in [1,p],\\
        -1& \text{ if } i=j \in [p+1,d].
    \end{cases}
\end{align}
Then the \emph{orthonormal frame bundle of signature $(p,q)$} is defined as:
    \begin{align}
        \O\! M & := \bigsqcup_{z\in M} \O_z\! M, \label{eq:orth-norm-frame-bdl}
    \end{align}
where we put:
    \begin{align}
        \O_z\! M &:= \Big\{ [e_1,\dots,e_d] \;\Big|\; \forall j \in [d].\; e_j \in \Tan_zM, \\
        &\qquad \forall i,j \in [d]. \quad \ip_z(e_i,e_j) = \delta^{p,q}_{i,j} \Big\}.
    \end{align}
 Then $\O\! M$ becomes an $\O(p,q)$-structure for $(M,\ip)$ together with the standard constructions of local trivialization, bundle projection and right group action:
     \begin{align}
        \ract:\,  \O\! M \times \O(p,q) & \to \O\! M, \\ [e_i]_{i \in [d]} \ract g & :=  \lB \sum_{j \in [d]} e_j \, g_{j,i} \rB_{i \in [d]}.
    \end{align}
\end{Def}

\begin{Lem}
    Let $(M,\ip)$ be a pseudo-Riemannian manifold of signature $(p,q)$ and dimension $d=p+q$. 
    We have an algebra bundle isomorphism over $M$:
    \begin{align}
        \Cl(\Tan M,\ip) & \cong \O\! M \times_{\rcl} \Cl(\Rpq), \label{eq:clf-bdl-assc}
    \end{align}
    where $\rcl: \O(p,q) \to \O_\algrm (\Cl(\Rpq),\bar \ip^{p,q})$ is the usual action of the orthogonal group $\O(p,q)$ on  $\Cl(\Rpq)$ by rotating all vector components individually.
    In particular, the Clifford algebra bundle $\Cl(\Tan M,\ip)$ is an $\O(p,q)$-associated algebra bundle over $M$ with typical fibre $\Cl(\Rpq)$.
\end{Lem}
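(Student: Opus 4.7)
\begin{Prf}{lem:clf-bdl-assc}
The plan is to build the isomorphism fibrewise and then package the construction so that it factors through the $\sim_\rcl$-quotient in a smooth, algebra-preserving way.

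First, I would work pointwise. Fix $z \in M$ and an orthonormal frame $e = [e_1,\dots,e_d] \in \O_z M$. This frame gives a linear isometry $\varphi_e\colon \Rpq \bij (\Tan_zM,\ip_z)$ defined by $\varphi_e(r) := \sum_{i\in[d]} r_i\, e_i$; the orthonormality condition $\ip_z(e_i,e_j)=\delta_{i,j}^{p,q}$ is exactly what makes $\varphi_e$ preserve the inner product. I would then invoke the universal property of the Clifford algebra (\Cref{rem:cliff-univ-prop}): the composition $\iota_{\Tan_zM}\circ\varphi_e\colon \Rpq\to \Cl(\Tan_zM,\ip_z)$ satisfies the fundamental relation $(\varphi_e(r))\gp(\varphi_e(r))= \ip_z(\varphi_e(r),\varphi_e(r))\cdot 1 = \ip^{p,q}(r,r)\cdot 1$, so it uniquely extends to an algebra homomorphism $\bar\varphi_e\colon \Cl(\Rpq)\to \Cl(\Tan_zM,\ip_z)$. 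Applying the same argument to $\varphi_e^{-1}$ produces a two-sided inverse, so $\bar\varphi_e$ is an algebra isomorphism.

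Next, I would define the candidate bundle map
\begin{align*}
   \Phi\colon \O M\times \Cl(\Rpq) \to \Cl(\Tan M,\ip), \quad (e,x) \mapsto \bar\varphi_e(x),
\end{align*}
and check that it descends to the associated bundle quotient, i.e.\ that $\Phi(e\ract g,\rcl(g^{-1})x)=\Phi(e,x)$ for every $g\in \O(p,q)$. On $1$-vectors this reduces to the identity $\varphi_{e\ract g}(g^{-1}r)=\varphi_e(r)$, which is immediate from the definitions of $\ract$ and of matrix multiplication. Since both $\rcl(g^{-1})$ and the two algebra isomorphisms $\bar\varphi_{e\ract g}$, $\bar\varphi_e$ are algebra homomorphisms generated by their action on $1$-vectors, the equation propagates to all of $\Cl(\Rpq)$ by uniqueness in the universal property. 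This gives a well-defined map $\bar\Phi\colon \O M\times_{\rcl}\Cl(\Rpq)\to\Cl(\Tan M,\ip)$ that is an algebra isomorphism on each fibre.

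Finally, I would upgrade $\bar\Phi$ to a bundle isomorphism. The main step is smoothness (or continuity), and I expect this to be the only part that requires any real care. I would verify it by working in a local trivialization: pick a chart around $z_0\in M$ on which $\O M$ admits a smooth section $\sigma\colon U\to \O M$ (e.g.\ by Gram–Schmidt/$\O(p,q)$-orthonormalization applied to a coordinate frame). Then $\bar\Phi$ on $U$ is represented by $(z,x)\mapsto \bar\varphi_{\sigma(z)}(x)$, and since $\bar\varphi_{\sigma(z)}$ is determined on the standard basis $e_A$ of $\Cl(\Rpq)$ by geometric products of the smooth vector fields $\sigma(z)_i$, the map is smooth. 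The induced local trivialization of $\O M\times_{\rcl}\Cl(\Rpq)$ is compatible with this by construction, and transition functions on both sides are given by the same $\rcl(g)$, so $\bar\Phi$ is a global algebra bundle isomorphism. The associated-bundle statement in the last sentence then follows from the definition of $\O(p,q)$-associated bundle (\Cref{def:G_associated_vector_bundle}), noting that $\rcl$ factors through $\GL(\Cl(\Rpq))$ so the algebra structure is preserved on fibres.
\end{Prf}
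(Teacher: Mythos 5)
Your proposal is correct. Note that the paper itself gives no proof of this lemma: it is stated in Appendix~\ref{sec:cs-cnns-prm} without argument, with the surrounding text deferring to the claim that the Riemannian constructions of \citet{weiler2023EquivariantAndCoordinateIndependentCNNs} generalize verbatim upon replacing $\O(d)$ by $\O(p,q)$. Your argument supplies exactly the missing content, and it is the standard one: the fibrewise isomorphism $\bar\varphi_e$ via the universal property (Theorem~\ref{rem:cliff-univ-prop}), the descent through the $\sim_{\rcl}$-quotient using $\varphi_{e\ract g}(g^{-1}r)=\varphi_e(r)$ together with uniqueness in the universal property, and smoothness checked against local orthonormal sections of $\O\!M$, whose existence in the pseudo-Riemannian setting follows from the signature-adapted Gram--Schmidt procedure. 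One small imprecision in your closing sentence: that $\rcl$ factors through $\GL(\Cl(\Rpq))$ only gives the \emph{vector} bundle structure of the associated bundle; what makes $\O\!M\times_{\rcl}\Cl(\Rpq)$ an \emph{algebra} bundle, and what you implicitly use when propagating identities from $1$-vectors to all of $\Cl(\Rpq)$, is that each $\rcl(g)$ is multiplicative w.r.t.\ $\gp$, i.e.\ lands in $\O_\algrm(\Cl(\Rpq),\bar\ip^{p,q})$ as established in Definition/Theorem~\ref{thm:group_action_on_Clifford_full}. With that wording fixed, the proof is complete.
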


\begin{Def}[Multivector fields]
    A \emph{multivector field} on $M$ is a global section $f \in \Gs(\Cl(\Tan M,\ip)^c)$ for some $c \in \Nbb$, i.e.\
    a map $f: M \to \Cl(\Tan M,\ip)^c$ that assigns to every point $z \in M$ a tuple of multivectors:
    $f(z) = [f_1(z),\dots,f_c(z)] \in \Cl(\Tan_z M,\ip_z)^c$.
\end{Def}

\begin{Rem}[The action of the isometry group on multivector fields]
    Let $\phi \in \Isom(M,\ip)$ then $\phi$ is a diffeomorphic map $\phi: M \bij M$ such that for every $z \in M$
    the differential map is an isometry:
    \begin{align}
        \phi_{*,\Tan M,z}:\, (\Tan_z M, \eta_z) \bij (\Tan_{\phi(z)}, \eta_{\phi(z)}).
    \end{align}
    We can now describe the induced map $\phi_{*,\Cl(\Tan M,\ip)}$ via the general construction on associated vector fields, see \Cref{rem:isom-action}, with help of the identification \Cref{eq:clf-bdl-assc}:
    \begin{align}
       \phi_{*,\Cl(\Tan M,\ip)}: \O\! M \times_{\rcl} \Cl(\Rpq) &\to \O\! M \times_{\rcl} \Cl(\Rpq), \notag\\
        \phi_{*,\Cl(\Tan M,\ip)}(e,x) &= (\phi_{*,\Frm M}(e),x),
    \end{align}
    or we can look at the fibres directly, $z \in M$:
    \begin{align}
       & \phi_{*,\Cl(\Tan M,\ip),z}:\, \Cl(\Tan_z M,\ip_z) \to\Cl(\Tan_{\phi(z)} M,\ip_{\phi(z)}),\notag\\
       & \phi_{*,\Cl(\Tan M,\ip),z}\lp \sum_{i \in I} c_i \cdot v_{i,1} \gp \cdots \gp v_{i,k_i} \rp  \notag\\
       & = \sum_{i \in I} c_i \cdot \phi_{*,\Tan M,z}(v_{i,1}) \gp \cdots \gp \phi_{*,\Tan M,z}(v_{i,k_i}).
    \end{align}
With this we can define a left action of the isometry group $\Isom(M,\ip)$ on the corresponding space of multivector fields $\Gs(\Cl(\Tan M,\ip)^c)$ as follows:
\begin{align}
    \lact:\, \Isom(M,\ip) \times \Gs(\Cl(\Tan M,\ip)^c) \to \Gs(\Cl(\Tan M,\ip)^c), \\
\phi \lact f := \phi_{*,\Cl(\Tan M,\ip)^c} \circ f \circ \phi^{-1}:\, M \to \Cl(\Tan M,\ip)^c.
\end{align}
\end{Rem}

We are now in the position to state the main theorem of this section.

\begin{Thm}[Clifford-steerable CNNs on pseudo-Riemannian manifolds are gauge and isometry equivariant]
    \label{thm:cs-cnn-prm-fin}
    Let $(M,\ip)$ be a pseudo-Riemannian manifold of signature $(p,q)$ and dimension $d=p+q$. 
    We consider $(M,\ip)$ to be endowed with the structure group $G=\O(p,q)$.
  Consider multi-vector feature fields $\Ain = \Cl(\Tan M, \ip)^\cin$ and $\Aout = \Cl(\Tan M, \ip)^\cout$ over $M$.
 
  Let $K=H \circ \KB$ be a \emph{Clifford-steerable kernel}, the same template convolution kernel as presented in the  main paper in \Cref{sec:clifford_steerable_CNNs_main}:
    \begin{align}
        K:\, \Rpq &\to \Hom_\vecrm\lp \Cl(\Rpq)^\cin, \Cl(\Rpq)^\cout \rp,
    \end{align}
    where $\KB: \Rpq \to \Cl(\Rpq)^{\cout \times \cin}$ is the \emph{kernel network},
    a Clifford group equivariant neural network with $(\cin \cdot \cout)$ number of Clifford algebra outputs, and, where $H$ is the $\O(p,q)$-equivariant \emph{kernel head}:
   \begin{align}
        H:\, \Cl(\Rpq)^{\cout \times \cin} & \\
         \to \Hom_\vecrm \!\big(\! & \Cl(\Rpq)^\cin, \Cl(\Rpq)^\cout \big). \notag
    \end{align}
    Then $K$ is automatically $\O(p,q)$-steerable, i.e.\ for $g \in \O(p,q)$, $v \in \Rpq$ we have\footnote{Note that the factor $|\det g|^{-1}$ does not appear here, in contrast to the general formula in \Cref{eq:kernel-constraints}, because $|\det g|=1$ anyways for all $g \in \O(p,q)$.}:
 \begin{align}
    K(gv) &= \rcl[\cout](g) \, K(v) \, \rcl[\cin](g)^{-1}. \label{eq:kernel-constraints-clf}
\end{align}
Furthermore, the corresponding convolution operator $L:\,\Gs(\Ain) \to \Gs(\Aout)$, given by \Cref{eq:conv-def}, 
is equivariant w.r.t.\ the full isometry group $\Isom(M,\eta)$: for every $\phi \in \Isom(M,\ip)$ and $f_\jin \in \Gs(\Ain)$ we have:
    \begin{align}
          L(\phi \lact f_\jin) &=\phi \lact L(f_\jin).
    \end{align}
\end{Thm}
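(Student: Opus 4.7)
The statement has two parts, and both reduce cleanly to results already established in the main text. The plan is to first reuse Theorem~\ref{thm:kernel-head-equiv} verbatim to obtain the $\O(p,q)$-steerability of $K = H \circ \KB$, and then to invoke the general isometry-equivariance theorem for steerable convolutions on $G$-structured pseudo-Riemannian manifolds, Theorem~\ref{thm:G-cnns-isom-equiv}, with $G = \O(p,q)$.

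\textbf{Step 1: Steerability of $K$.} The kernel $K$ is exactly the Clifford-steerable kernel of Def.~\ref{def:Clifford-steerable_kernel}, and nothing about its construction depends on the base being flat $\Rpq$ rather than a manifold --- it is a map $K \colon \Rpq \to \Hom_\vecrm(\Cl(\Rpq)^\cin, \Cl(\Rpq)^\cout)$ defined pointwise by a CGENN followed by the kernel head. Theorem~\ref{thm:kernel-head-equiv} therefore applies unchanged and yields
\begin{equation*}
    K(gv) \,=\, \rho_{\Hom}(g)(K(v)) \,=\, \rcl[\cout](g)\, K(v)\, \rcl[\cin](g)^{-1}
\end{equation*}
for all $g \in \O(p,q)$, $v \in \Rpq$, where the factor $|\det g|^{-1}$ is absorbed since $|\det g| = 1$ on $\O(p,q)$. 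This is precisely the $\O(p,q)$-steerability constraint~\eqref{eq:kernel-constraints-clf}.

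\textbf{Step 2: Setting up the manifold framework.} To apply Theorem~\ref{thm:G-cnns-isom-equiv} I need to check that the data $(M, G, \eta, \Ain, \Aout, K, \Trpt)$ fits its hypotheses. The $G$-structure is the orthonormal frame bundle $\O\! M$ of~\eqref{eq:orth-norm-frame-bdl} with structure group $G = \O(p,q)$; the feature bundles are $\Ain = \Cl(\Tan M, \eta)^\cin \cong \O\! M \times_{\rcl[\cin]} \Cl(\Rpq)^\cin$ and similarly for $\Aout$, via the isomorphism~\eqref{eq:clf-bdl-assc} componentwise. Thus $\Ain$ and $\Aout$ are genuine $\O(p,q)$-associated vector bundles with typical fibres $(\Cl(\Rpq)^\cin, \rcl[\cin])$ and $(\Cl(\Rpq)^\cout, \rcl[\cout])$ respectively. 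The transporter $\Trpt$ is taken to be the Levi-Civita parallel transport, which satisfies the compatibility condition~\eqref{eq:isom-trpt} with isometries (as recalled in the excerpt). Because $\O(p,q) \le G$ trivially here (with equality), we moreover have $\Isom(M, \O(p,q), \eta) = \Isom(M, \eta)$, so the full metric isometry group acts.

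\textbf{Step 3: Invoking the general theorem.} With Step~1 providing the $G$-steerability and Step~2 providing all remaining ingredients, Theorem~\ref{thm:G-cnns-isom-equiv} directly yields
\begin{equation*}
    L(\phi \lact f_\jin) \,=\, \phi \lact L(f_\jin) \qquad \forall\, \phi \in \Isom(M,\eta),\; f_\jin \in \Gs(\Ain).
\end{equation*}
No further calculation is required: the heavy geometric machinery (coordinate independence via the covariance relations, commutation of $\Exp_z^*$ with $\phi_{*,\Acal}$, metric preservation of Levi-Civita transport) is encapsulated entirely in the cited theorem.

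\textbf{Main obstacle.} There is no serious obstacle once one trusts Theorems~\ref{thm:kernel-head-equiv} and~\ref{thm:G-cnns-isom-equiv}; the only mildly delicate point is bookkeeping the identification $\Cl(\Tan M, \eta) \cong \O\! M \times_{\rcl} \Cl(\Rpq)$ and checking that this identification intertwines $\phi_{*,\Cl(\Tan M, \eta)}$ with the induced action on the associated bundle, so that the template kernel $K$ defined on $\Rpq$ can be pushed consistently onto every fibre through the choice of an orthonormal frame. This is exactly the content of the associated-bundle construction and requires only that $K$ be $\O(p,q)$-equivariant under $\rcl$, which is what Step~1 delivers.
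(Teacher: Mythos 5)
Your proposal is correct and follows essentially the same route the paper intends: $\O(p,q)$-steerability of $K=H\circ\KB$ is taken verbatim from Theorem~\ref{thm:kernel-head-equiv} (with $|\det g|=1$), and isometry equivariance then follows by instantiating Theorem~\ref{thm:G-cnns-isom-equiv} with $G=\O(p,q)$, the orthonormal frame bundle $\O\!M$, the associated-bundle identification~\eqref{eq:clf-bdl-assc} for the multivector feature bundles, Levi-Civita transport satisfying~\eqref{eq:isom-trpt}, and the identity $\Isom(M,\O(p,q),\ip)=\Isom(M,\ip)$. No discrepancies with the paper's argument.
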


\begin{Rem}
    A similar theorem to \Cref{thm:cs-cnn-prm-fin} can be stated for \emph{orientable} pseudo-Riemannian manifolds $(M,\ip)$ and structure group $G=\SO(p,q)$,
    if one reduces the Clifford group equivariant neural network parameterizing the kernel network $\KB$ to be (only) $\SO(p,q)$-equivariant.
\end{Rem}

\end{document}